\newcommand{\ind}{\perp\!\!\!\!\perp} 
\newcommand{\zerone}{\omega}
\renewcommand{\cal}[1]{\mathcal{#1}}
\newcommand{\diff}{\mathrm{d}}
\newcommand{\one}{\mathds{1}}
\newcommand{\E}{\mathbb{E}}
\newcommand{\var}{\text{Var}}
\newcommand{\R}{\mathbb{R}}
\newcommand{\argmin}[1]{\underset{#1}{\text{argmin}}~}
\newcommand{\argmax}[1]{\underset{#1}{\text{argmax}}~}
\newcommand{\half}{\frac{1}{2}}
\newcommand{\prob}{\mathbb{P}}
\newcommand{\N}{\mathbb{N}}
\newcommand{\eps}{\varepsilon}
\newcommand{\NaN}{\varnothing}
\newcommand{\KL}{\mathrm{KL}}
\newcommand{\rsamp}{D}
\newcommand{\rsampm}{D'}
\newcommand{\rsampzero}{D_0}
\newcommand{\rsampone}{D_1}
\newcommand{\rsampmzero}{D'_{0}}
\newcommand{\rsampmone}{D'_{1}}
\newcommand{\effectiveSampleSize}{m_{\mathrm{eff}}}
\newcommand{\rspace}{\mathcal{G}}
\newcommand{\rrspace}{\bar{\mathcal{G}}}
\newcommand{\lrbr}[1]{\left(#1\right)}
\newcommand{\lrbrc}[1]{\left\{#1\right\}}
\newtheorem{theorem}{Theorem}
\newtheorem{remark}{Remark}
\newtheorem{lemma}[theorem]{Lemma}
\newtheorem{corollary}[theorem]{Corollary}
\newtheorem{prop}[theorem]{Proposition}
\title{Density Ratio Estimation and Neyman Pearson Classification with Missing Data}
\author{Josh Givens \and Song Liu \\ University of Bristol \and  Henry W J Reeve}
\date{}
\begin{document}

\maketitle

\begin{abstract}
Density Ratio Estimation (DRE) is an important machine learning technique with many downstream applications. We consider the challenge of DRE with missing not at random (MNAR) data. In this setting, we show that using standard DRE methods leads to biased results while our proposal (M-KLIEP), an adaptation of the popular DRE procedure KLIEP, restores consistency. Moreover, we provide finite sample estimation error bounds for M-KLIEP, which demonstrate minimax optimality with respect to both sample size and worst-case missingness. We then adapt an important downstream application of DRE, Neyman-Pearson (NP) classification, to this MNAR setting. Our procedure both controls Type I error and achieves high power, with high probability.
Finally, we demonstrate promising empirical performance both synthetic data and real-world data with simulated missingness.
\end{abstract}
\section{INTRODUCTION}
Density Ratio estimation (DRE) is the problem of estimating the ratio between two probability density functions (PDFs). 
DRE's ability to characterise the relationship between two PDFs naturally lends itself to many applications such as outlier detection \citep{Azmandian2012}, Generative Adversarial Networks (GANs) \citep{Nowozin2016}, and general binary classification. One issue with DRE is that it is especially sensitive to missing data due to the large impact a ``few key points" can have on the procedure \citep{Liu2017}. 
While there are a vast number of DRE approaches  \citep{Sugiyama2008,Sugiyama2010a,Kanamori2009,Choi2021}, none of them explicitly account for the case of missing data. Some work has been done regarding the impact of missing data on DRE \citep{Josse2019}, and while this work does explore a wide variety of missing data methods it does so while assuming the data to be missing at random (MAR). 
There are many applications in which such an assumption is unrealistic and the probability of an observation being missing depends in some way on the value of the observation itself leading to missing not at random data (MNAR). 
For example, many measuring instruments are more likely to err when attempting to measure more extreme values, while in questionnaires, participants are less likely to answer a question if they deem their answer to be embarrassing or unfavourable. 
In such a case, naively applying any classic DRE procedure leads to inconsistent estimations. 

Classification is an area of statistic with strong ties to density ratio estimation and is also an area where missing covariates are a common issue. 
In this paper, we focus on the links of DRE to a particular type of classification problem called Neyman Pearson (NP) classification.
NP classification allows the user to construct a classifier with a fixed Type I error rate with high probability \citep{Tong2013, Tong18}. 
This type of classification has many natural applications as there are often cases where missclassification in one direction is far more damaging than in the other (for example, disease diagnosis or fault detection). We would like to choose our classifier to control the error in said direction. 
Taking missing data into consideration during NP classification is vital.
Without it, our classifier may no longer satisfy our Type I error constraint with high probability.  
Classic NP classification works in two steps. First, we estimate the class density ratio between the two classes (hence motivating the use of DRE), then we define the classification boundary of our classifier to be a level set of our density ratio estimate, which leads to the satisfaction of our Type I error constraint with high probability \citep{Tong2013}. 
To our knowledge, no previous work has been done to make this second stage of NP classification robust to MNAR data. 

There is a large body of research into the phenomenon of MNAR data in more general cases \citep{Sportisse2020,YLiu2015, Li2013, Seaman2011}, however none of this work has explored its use within DRE specifically. Work exists on MNAR data in the covariates of logistic regression \citep{Ward2020}, a classification method with close ties to DRE. However, \citeauthor{Ward2020}'s approach requires more direct knowledge of the missing data distribution rather than just the probability of an observation being missing conditional on its value.


Our main contributions are:
\begin{itemize}
    \item  We adapt KLIEP \citep{Sugiyama2008}, a classic DRE procedure, to the MNAR setting using an inverse probability weighting approach before expanding this technique to any DRE whose objectives comprise expectations of two different data sets. We give finite sample convergence results for our method and show minimax rate optimality of this convergence when the conditional probability of being missing is known.
    \item  We provide an adaptation to the NP classification procedure, allowing it to control Type I error even with MNAR data and show finite sample power convergence results for our adaptation when the conditional probability of being missing is known.
\end{itemize}
We also provide extensions to our DRE approaches, which allow us to deal with both partial missingness across multiple dimensions and learning the missing pattern by querying a subset of the missing samples. Finally, we assess and compare the efficacy of our proposed methods by testing them on various simulated and real-world data with synthetic missingness.   
\section{MISSING DATA \& DENSITY RATIO ESTIMATION}
\subsection{ Motivation}
 Density Ratio Estimation (DRE) is a versatile tool with many downstream applications such as binary classification (including NP classification, which we will discuss in Section \ref{sec:NP_class}), GANs \citep{Nowozin2016,Ding2020}, and covariate shift \citep{Sugiyama2008,Tsuboi2009}.  
In DRE problems, we want to estimate $r^*\coloneqq p_1/p_0$, with $p_1,p_0$ the PDFs for two distributions. DRE approaches estimate $r^*$ using IID samples from these two distributions. Existing DRE approaches are designed to handle fully observed data sets. 
However, when observations are missing,
these approaches do not apply straightforwardly.

DRE requires two sets of observations.
When two samples are collected at different times under different contexts, they are especially susceptible to different missing patterns. An example is binary classification for diagnosing an illness. In this setting, it is natural to expect that samples collected from healthy individuals and patients will have different MNAR patterns due to privacy concerns. 

Within this setting, using both data sets by simply discarding all missing values will lead to a biased classification and degraded performance.

We aim to make DRE robust to Missing Not at Random (MNAR) datasets. In this setting, an observation's probability of being missing depends upon that observation's true value and the MNAR pattern is different between two samples. We formally introduce our problem set-up below.

\subsection{Missing Data Framework}\label{sec:notation}
Let $Z^1,Z^0$ be two RVs taking values in measurable space $(\cal Z,\cal B_{\cal Z})$ with densities (Radon Nikodym derivatives) $p_{1}$ and $p_{0}$ respectively w.r.t. (with respect to) a measure $\mu$ on $(\cal Z,\cal B_{\cal Z})$ and assume that $p_0$ is strictly positive. 
For $\zerone\in\{0,1\}$, let $X^\zerone$ be a RV taking values in the measurable space $(\cal X,\cal B_{\cal X})$ with $\cal X\coloneqq \cal Z\cup\{\NaN\},~\cal B_{\cal X}\coloneqq\sigma(\cal B_{\cal Z}\cup\{\{\NaN\}\})$ whose distribution is uniquely defined as follows:
\begin{align*}
    \prob(X^\zerone=\NaN|Z^\zerone)&\coloneqq\varphi^\zerone(Z)\\
    \prob(X^\zerone=x|Z^\zerone)&\coloneqq\one\{Z^\zerone=x\}(1-\varphi^\zerone(Z^\zerone)),
\end{align*}
where $\varphi^\zerone:\cal Z\rightarrow [0,1)$ is a measurable function. Here we take $X=\NaN$ to represent the observation being missing and thus $\varphi(z)$ represents the probability of an observation being missing given its ``true" value is $z$. Additionally, let $p'_\zerone$ be the density of $X^\zerone|X^\zerone\neq\NaN$ w.r.t $\mu$. Throughout, unless stated otherwise, we assume $\varphi$ to be known.

For $\zerone\in\{0,1\}$ and $n_\zerone\in\mathbb{N}$ we define $\{Z_i^{\zerone}\}_{i=1}^{n_\zerone}$ and $\{X_i^{\zerone}\}_{i=1}^{n_\zerone}$ to be IID copies of $Z^\zerone,X^\zerone$ respectively and abbreviate them by $\rsamp_{\zerone}\coloneqq\{Z_i^\zerone\}_{i=1}^{n_{\zerone}}$,  $\rsampm_{\zerone}\coloneqq\{X_i^\zerone\}_{i=1}^{n_{\zerone}}$. Finally we define,
$\rsamp\coloneqq(\rsampone,\rsampzero)$, $\rsampm\coloneqq(\rsampmone,\rsampmzero)$.

\subsection{Notations}
Throughout we shall adopt the following two conventions for the purpose of brevity. Firstly, if $X$, $Z$, or any other class-specific element is given without a sub/superscript specifying the class, the associated statement is assumed to hold for both classes where all the elements within the statement have the same class. Secondly, given a function $h:\cal Z\rightarrow\R$ we implicitly extend $h$ to a function $h: \mathcal{X} \rightarrow \R$ by taking $h(\NaN)\coloneqq 0$, unless stated otherwise.

We now introduce some additional notation. For $n\in\mathbb{N}$ we let $[n]\coloneqq\{1,\dotsc,n\}$. For a function $h:\cal Z\rightarrow\R$ we take $\|h\|_{\infty}\coloneqq\sup_{z\in\cal Z}|h(z)|$.  Finally, for a RV $W$ in a product space $\cal W^d$ and any $j\in[d]$ define $W^{(j)}$ to be the $j$\textsuperscript{th} coordinate of $W$ and $W^{(-j)}=\{W^{(j')}\}_{j'\neq j}$. The Kullback-Leibler (KL) between two density functions with $p_1$ absolutely continuous w.r.t. $p_0$ is defined as 
\begin{align*}
    \KL(p_1| p_0) := \int_{\mathcal{Z}} p_1(z) \log \frac{p_1(z)}{p_0(z)} \mu(\mathrm{d}z).
\end{align*}

\subsection{DRE}
The aim of DRE is to estimate $r^*:\cal Z\rightarrow [0,\infty)$ defined by
$
    r^*\coloneqq p_1/p_0.
$
We will let  $\cal H$ be the set of non-negative measurable functions from $\cal Z$ to $[0,\infty)$ and let $\cal G$ be some subset of $\cal H$ which we intend to use in our approximation of $r^*$. Note that while $r^*\in\cal H$, $\cal G$ may or may not contain the ``true" density ratio $r^*$. We say that $\cal G$ is correctly specified if $r^*\in\cal G$ and incorrectly specified otherwise. 
We now introduce a classic approach for estimating $r^*$ using $\rsamp$.
\subsubsection{Kullback-Leibler Importance Estimation Procedure}
First proposed in \citep{Sugiyama2008}, Kullback-Leibler Importance Estimation Procedure (KLIEP) is a popular DRE procedure that minimises the KL divergence between $p_1$ and $r \cdot p_0$. Specifically, it aims to solve the following optimisation problem:
\begin{align}\label{eq:kliep_obj}
    \tilde{r} := \arg\min_{r\in\cal G}&~\KL(p_1|r \cdot p_0) \\
    \text{subject to}:& \int_{\cal Z} r(z) p_0(z)\mu(\diff z)=1. \nonumber
\end{align}
As KL-divergence is strictly for use with probability densities, we need to include the additional constraint that $r\cdot p_0$ integrates out to $1$ over $\cal Z$. 

Provided $\cal G$ is closed under positive scalar multiplication, we can re-write $\tilde r$ in terms of an unconstrained optimisation problem. Specifically, we have $\tilde r\coloneqq N^{-1}\cdot r_0$ where
\begin{align}
\label{eq.kliep.obj}
r_0&\coloneqq\argmax{r\in\rrspace}\E[\log r(Z^1)]-\log\E[r(Z^0)]\\
    N&\coloneqq\E[r_0(Z^0)] .\notag
\end{align}
for some select $\rrspace \subseteq \rspace = \{ a \cdot g : a \in (0,\infty),~ g \in \rrspace\}$. More details on this alongside the lemma that makes this possible can be found in the Appendix \ref{app:KLIEP_rewrite}. Throughout we will be mostly interested in estimating $r_0$ and as such will refer to \ref{eq.kliep.obj} as the population KLIEP objective.
As we do not know the true distributions of $Z^1, Z^0$, we must approximate these expectations with samples from corresponding distributions giving us the sample KLIEP objective.

In the case of multi-dimensional real-valued data, $\cal Z=\R^p$, 
we can take $\rrspace=\{r_{\theta}:\R^p\rightarrow[0,\infty)|\theta\in\R^d\}$ where
\begin{align}\label{eq:KLIEP_paramform}
    r_{\theta}(z)\coloneqq \exp(\theta^\top f(z))
\end{align} for some $f:\R^p\rightarrow\R^d$ \citep{Tsuboi2009, Kanamori2010, Liu2017}. We refer to this form as the \emph{log-linear} form.
Under this form, our solution becomes $\hat r=\hat N^{-1}r_{\hat \theta}$ where
\begin{align}\label{eq:kliep_param_obj}
\begin{split}
    \hat\theta\coloneqq&\argmax{\theta\in\R^d}\frac{1}{n_1}\sum_{i=1}^{n_1}\theta^\top f(Z_i^1)-\log\frac{1}{n_0}\sum_{i=1}^{n_0}\exp(\theta^\top f(Z_i^0))
\end{split}\\
    \hat N\coloneqq&\frac{1}{n_0}\sum_{i=1}^{n_0}\exp(\theta^\top f(Z_i^0))\nonumber
\end{align}
which gives us a convex optimisation problem.

We now define $\tilde{\theta}$ as the minimiser of the population KLIEP objective under our log-linear model (i.e. the population analogue of (\ref{eq:kliep_param_obj})) allowing us to view $\hat\theta$ as an estimator for $\tilde\theta$. Indeed, with some mild restrictions on $f$ and for $n_\min:=\min\{n_0,n_1\}\geq C_0\log(1/\delta)$, we have that w.p. $1-\delta$ 
\begin{align}\label{eq:KLIEP_bound}
    \|\hat\theta-\tilde{\theta}\|\leq\sqrt{\frac{C_0\log(1/\delta)}{n_\min}}
\end{align}
where  $C_0$ is a constant depending on $f(Z_0),f(Z_1),\tilde{\theta},d$. Details of this result are found in Appendix \ref{app:KLIEP_bound}.

An additional bound for the accuracy of $\hat N$ is given in the Appendix section \ref{app:KLIEP_N_bound}. However, as we will see in applications such as NP classification, we only require the knowledge of $r^*$ up to strictly increasing transform and so it is often unnecessary to estimate $N$.

Within this paper, we mostly focus on KLIEP however for completeness we also present the general class of $f$-Divergence based density ratio estimators and describe some notable cases in Appendix \ref{app:Other_DRE}.
We now go on to adapt this procedure for use with MNAR data.
\section{PROPOSED METHODS}
In our setting we do not observe samples from $Z$ and so cannot use (\ref{eq:kliep_param_obj}) to approximate the population KLIEP objective. Instead, we only have samples from $X$. The following result relates expectations of $Z$ with expectations of $X$. 

\begin{lemma}\label{lemma:main_prob}
We have that $p'=C\cdot(1-\varphi)\cdot p$, for some $C\in\R$. Hence, for any measurable function $g:\cal Z\rightarrow \R$
\begin{align} \label{eq:imp_weight_exp}
    \E[g(Z)]=\E\left[\frac{\one\{X\neq\NaN\}}{1-\varphi(X)}g(X)\right].
\end{align}
\end{lemma}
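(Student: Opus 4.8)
The plan is to prove the two claims in sequence, obtaining the density relationship first and then deducing the expectation identity from it, exactly as the word ``Hence'' in the statement suggests.

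For the density relationship, I would begin from the definition of the distribution of $X$ and compute the law of $X$ restricted to $\cal Z$. For any $A\in\cal B_{\cal Z}$, conditioning on $Z$ and using $\prob(X=x\mid Z)=\one\{Z=x\}(1-\varphi(Z))$ gives
$$\prob(X\in A)=\E[\one\{Z\in A\}(1-\varphi(Z))]=\int_A(1-\varphi)\,p\,\diff\mu,$$
so the sub-probability law of $X$ on $\cal Z$ has $\mu$-density $(1-\varphi)p$. Since $\varphi$ maps into $[0,1)$ we have $\prob(X\neq\NaN)=\int_{\cal Z}(1-\varphi)p\,\diff\mu\in(0,1]$, strictly positive, so dividing by this normalising constant yields the conditional density of $X\mid X\neq\NaN$, namely $p'=C\cdot(1-\varphi)\cdot p$ with $C=1/\prob(X\neq\NaN)$. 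This settles the first claim and pins down the constant.

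For the expectation identity I would substitute this density relationship into the right-hand side. Because $\one\{X\neq\NaN\}$ annihilates the missing atom, the expectation reduces to an integral against $p'$ over $\cal Z$, weighted by $\prob(X\neq\NaN)$:
$$\E\!\left[\frac{\one\{X\neq\NaN\}}{1-\varphi(X)}g(X)\right]=\prob(X\neq\NaN)\int_{\cal Z}\frac{g(x)}{1-\varphi(x)}\,p'(x)\,\mu(\diff x).$$
Inserting $p'=C\cdot(1-\varphi)\cdot p$ cancels the $(1-\varphi)$ factors and the constant $C$ against $\prob(X\neq\NaN)$, leaving $\int_{\cal Z}g\,p\,\diff\mu=\E[g(Z)]$.

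I expect no single step to be a genuine obstacle — this is the standard inverse-probability-weighting identity — but the care points are all measure-theoretic bookkeeping on the augmented space $\cal X=\cal Z\cup\{\NaN\}$: one must verify that $1-\varphi$ never vanishes (guaranteed by $\varphi:\cal Z\to[0,1)$) so the weight is well defined, and that the convention $g(\NaN)=0$ together with the indicator correctly discards the missing contribution. A cleaner route that sidesteps the explicit density computation is to condition directly on $Z$: since $X=Z$ with probability $1-\varphi(Z)$ and $X=\NaN$ otherwise, the conditional expectation of the weighted term given $Z$ equals $(1-\varphi(Z))\cdot\frac{g(Z)}{1-\varphi(Z)}=g(Z)$, and the tower property then gives the identity in one line. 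I would present the density route to match the two-part form of the lemma while noting this shortcut as the conceptual core.
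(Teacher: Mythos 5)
Your proposal is correct and follows essentially the same route as the paper's proof: establish that the law of $X$ restricted to $\cal Z$ has $\mu$-density $(1-\varphi)p$ via $\prob(X\in A)=\E[\one\{Z\in A\}(1-\varphi(Z))]$, normalise by $\prob(X\neq\NaN)$ to get $p'$, and then cancel the weights to obtain the expectation identity (the paper phrases this last step as importance weighting against the density of $X$ on the augmented space, which is the same computation). Your tower-property shortcut is a nice, slightly more direct way to see the identity, but it is not a genuinely different argument, just a repackaging of the same conditioning.
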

For proof of this Lemma see Appendix \ref{app:proof_prob}). This importance weighting technique is an approach that is already used to tackle MNAR data contexts outside of DRE \citep{Li2013,Seaman2011}.
From this Lemma, we get that for some $C'\in\R$,
\begin{align*}
    r'\coloneqq \frac{p'_1}{p'_0}=C'\cdot \frac{1-\varphi^1}{1-\varphi^0}\cdot r^*
\end{align*} 
Consequently, $r'$ is not proportional to $r^*$ when $\varphi^1\neq\varphi^0$ with either non-constant. As a result, using KLIEP with only the observed values of our MNAR data will be biased due to the fact it estimates $r'$ rather than $r^*$.
We now use this Lemma to inform our adapted estimation procedures.
\subsection{KLIEP with Missing Data}
We can now approximate the population KLIEP objective by replacing expectations in \eqref{eq.kliep.obj} with the important weighted expectations in Lemma \ref{lemma:main_prob}. This replacement modifies the sample KLIEP objective to work with $\rsampm$ as follows: 
\begin{align}
\label{eq.kliep.missing}
    \begin{split}
        \hat r_0 \coloneqq&{}\argmax{r\in\rrspace}\frac{1}{n_1}\sum_{i=1}^{n_1}\frac{\one\{X_i^1\neq\NaN\}}{1-\varphi^1(X_i^1)}\log r(X_i^1)~-~\log\frac{1}{n_0}\sum_{i=1}^{n_0}\frac{\one\{X_i^0\neq\NaN\}}{1-\varphi^0(X_i^0)}r(X_i^0)
    \end{split}\\
    \hat N\coloneqq &{}\frac{1}{n_0}\sum_{i=1}^n\frac{\one\{X_i^0\neq\NaN\}}{1-\varphi^0(X_i^0)} \hat r_0(X_i^0)~. \nonumber
\end{align}

We refer to this approach as \emph{M-KLIEP} and to (\ref{eq.kliep.missing}) as the \emph{sample M-KLIEP objective}. We now take,
$\hat\theta'$ to be the estimated parameter when $r$ takes the log-linear form \eqref{eq:KLIEP_paramform}. 
To show the efficacy of M-KLIEP, we prove that 
$\hat\theta'$, converges to $\tilde\theta$ under mild conditions. Let $\effectiveSampleSize\equiv \effectiveSampleSize(n_0,n_1,\varphi^0,\varphi^1)$ denote the \emph{effective sample size} defined by 
\begin{align*}
\effectiveSampleSize:= \min\{n_0 \cdot (1-\|\varphi^0\|_\infty), n_1\cdot (1-\|\varphi^1\|_\infty) \}.
\end{align*}

\begin{theorem}\label{thm:MKLIEP_bound}
Let $\hat\theta'$ be defined as above. Assume that 
  $\|f\|_\infty\leq \infty$ and  $\sigma_\min :=\var(f(Z^0))>0$. Then, there exists some constant $C'_0 \geq 1$, depending only on $\|f\|_{\infty},~\sigma_\min,~\|\tilde{\theta}\|$, and $d$, such that for any $\delta \in (0,1/2]$ and $\effectiveSampleSize(n_0,n_1,\varphi^0,\varphi^1)>C'_0 \cdot \log(1/\delta)$ we have
\begin{align*}
\prob\left(\|\hat\theta-\tilde{\theta}\| >   \sqrt{\frac{C'_0\log(1/\delta)}{\effectiveSampleSize(n_0,n_1,\varphi^0,\varphi^1)}}\right) \leq \delta.
\end{align*}
\end{theorem}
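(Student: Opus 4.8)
The plan is to treat $\hat\theta'$ as an M-estimator and follow the standard two-ingredient recipe, namely local strong concavity of the population objective together with concentration of the empirical gradient, mirroring the fully-observed KLIEP argument behind \eqref{eq:KLIEP_bound} but carefully tracking the extra variance inflation introduced by the inverse-probability weights. I would write the population objective as $L(\theta) := \E[\theta^\top f(Z^1)] - \log\E[\exp(\theta^\top f(Z^0))]$, so that $\tilde{\theta}$ is its maximiser and $\nabla L(\tilde{\theta}) = 0$, and write $\hat L$ for the sample M-KLIEP objective \eqref{eq.kliep.missing} under the log-linear parametrisation. The crucial structural fact, supplied by Lemma \ref{lemma:main_prob}, is that each importance-weighted empirical average in $\hat L$ is an unbiased estimate of the corresponding population expectation in $L$; hence, up to the outer $\log$ of the normaliser term, $\hat L$ is an empirical version of $L$ assembled from bounded but reweighted summands.

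First I would establish curvature. Since the linear term has vanishing Hessian, $\nabla^2 L(\theta) = -\Sigma(\theta)$, where $\Sigma(\theta)$ is the covariance of $f(Z^0)$ under the measure tilted by $\exp(\theta^\top f)$. Because $\|f\|_\infty < \infty$, the tilting factor is bounded above and below on any ball of radius $\|\tilde{\theta}\| + 1$, so the hypothesis $\sigma_\min = \var(f(Z^0)) > 0$ transfers to a uniform lower bound $\lambda_{\min}(\Sigma(\theta)) \geq \mu > 0$ on that ball, giving $\mu$-strong concavity of $L$ locally. The same computation applied to $\hat L$ shows its Hessian is a reweighted tilted covariance, which concentrates to $\nabla^2 L$ uniformly over the ball (again using boundedness of $f$ and the weight-variance control below), so that $\hat L$ is strongly concave with constant $\mu/2$ on the ball with high probability.

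The heart of the argument is the gradient bound $\|\nabla \hat L(\tilde{\theta})\|$, whose two pieces are importance-weighted empirical means of $f(X^1)$ and of the softmax-type quantity $f(X^0)\exp(\tilde{\theta}^\top f(X^0))$ normalised by $\sum w^0 \exp(\tilde{\theta}^\top f(X^0))$. For each coordinate the summands $\tfrac{\one\{X\neq\NaN\}}{1-\varphi(X)} g(X)$ are bounded in magnitude by $\|g\|_\infty/(1-\|\varphi\|_\infty)$, while conditioning on $Z$ yields the key variance estimate $\E\bigl[\tfrac{\one\{X\neq\NaN\}}{(1-\varphi(X))^2} g(X)^2\bigr] = \E\bigl[\tfrac{g(Z)^2}{1-\varphi(Z)}\bigr] \leq \tfrac{\|g\|_\infty^2}{1-\|\varphi\|_\infty}$. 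Feeding this range and variance into Bernstein's inequality gives a deviation of order $\sqrt{\tfrac{\|g\|_\infty^2 \log(1/\delta)}{n(1-\|\varphi\|_\infty)}} + \tfrac{\|g\|_\infty \log(1/\delta)}{n(1-\|\varphi\|_\infty)}$, i.e. an effective sample size $n(1-\|\varphi\|_\infty)$, and the hypothesis $\effectiveSampleSize > C'_0 \log(1/\delta)$ is exactly what makes the linear Bernstein term dominated by the sub-Gaussian one. Taking the worse of the two classes produces $\effectiveSampleSize$ in the final rate.

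Finally I would combine these ingredients: on the high-probability event that $\hat L$ is $\mu/2$-strongly concave on the ball and its gradient at $\tilde{\theta}$ is small, optimality of $\hat\theta'$ for $\hat L$ forces $\|\hat\theta' - \tilde{\theta}\| \lesssim \|\nabla \hat L(\tilde{\theta})\|/\mu$, and substituting the Bernstein bound yields the stated $\sqrt{C'_0\log(1/\delta)/\effectiveSampleSize}$ after absorbing $\mu$, $\|f\|_\infty$, $\|\tilde{\theta}\|$ and $d$ into $C'_0$ and a union bound over the $d$ coordinates. The main obstacle I anticipate is the ratio / self-normalised structure of the second gradient term combined with the potentially unbounded weights as $\varphi \to 1$: one must show the denominator $\tfrac{1}{n_0}\sum w^0 \exp(\tilde{\theta}^\top f(X^0))$ stays bounded away from $0$ with high probability and then linearise the ratio without reverting to a range-based rate. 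It is precisely here that Bernstein, the variance identity above, and the threshold on $\effectiveSampleSize$ must be coordinated to avoid degrading the rate to $(1-\|\varphi\|_\infty)^{-2}$ and so to preserve the claimed minimax optimality.
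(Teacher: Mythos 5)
Your proposal follows essentially the same route as the paper's proof: the same M-estimation recipe of lower-bounding the curvature of the empirical objective on a ball around $\tilde\theta$ (via matrix concentration for the weighted sample covariance, using boundedness of $f$ to control the tilting), concentrating the importance-weighted empirical gradient at $\tilde\theta$ via a Bernstein bound whose variance term yields exactly the effective sample size $\effectiveSampleSize$, and handling the self-normalised class-$0$ ratio term by splitting it into numerator and denominator deviations with the denominator kept away from zero. The paper packages your coordinate-wise Bernstein and variance identity into an importance-weighted vector Bernstein lemma and a weighted matrix Chernoff bound, but the decomposition, the key identity $\E[\one\{X\neq\NaN\}(1-\varphi(X))^{-2}g(X)^2]=\E[g(Z)^2/(1-\varphi(Z))]$, and the final assembly are the same.
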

Proof is given in Appendix \ref{app:MKLIEP_bound}.
This result shows that M-KLIEP, unlike applying KLIEP on only observed values, does not suffer from the inconsistent ratio estimation problem we mentioned earlier.  

The following result shows that this bound is minimax optimal w.r.t $\effectiveSampleSize$.

\begin{theorem}\label{thm:drMissLB} Given any estimator $\hat{\theta} \equiv \hat{\theta}(\rsampm)$ of $\tilde\theta$, and any $(w_0,w_1) \in [0,1]^2$,  $(n_0,n_1) \in \N^2$ and $\delta \in (0,1/4]$ there exists distributions $P_0$, $P_1$ on $\mathcal{X}$ which satisfies the conditions of Theorem \ref{thm:MKLIEP_bound}, with $\|\varphi^\omega\|_\infty \leq w_\omega$ such that if $\rsampm_\omega \sim P_\omega^{n_\omega}$ for $\omega \in \{0,1\}$ then 
\begin{align*}
\prob\left(\|\hat{\theta}-\theta\|> \frac{1}{2} \wedge \sqrt{\frac{10\log(1/(4\delta))}{\effectiveSampleSize(n_0,n_1,\varphi^0,\varphi^1)}}  \right) \geq  \delta.
\end{align*}
\end{theorem}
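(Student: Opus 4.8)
The plan is to prove this minimax lower bound by Le Cam's two-point method, exploiting the fact that constant missingness discounts the information content of each sample by exactly a factor $(1-\|\varphi\|_\infty)$. First I would reduce to the case in which class $0$ attains the minimum in the effective sample size, i.e.\ $n_0(1-w_0)\leq n_1(1-w_1)$ so that $\effectiveSampleSize = n_0(1-w_0)$; the complementary case follows by an identical argument with the two classes interchanged (with the variance condition then imposed on the fixed, unperturbed class). I would construct two competing pairs of distributions, $(P_0,P_1)$ and $(\tilde P_0, P_1)$, that share a common class-$1$ law and common missingness $\varphi^1$ with $\|\varphi^1\|_\infty\leq w_1$ (so class $1$ contributes nothing to the separation), and that differ only in the class-$0$ law, taking $\varphi^0\equiv w_0$ constant so that $\|\varphi^0\|_\infty = w_0$ exactly.

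The two class-$0$ laws $p_0$ and $\tilde p_0$ would be drawn from a simple one-parameter family on a bounded support (e.g.\ a two-point support with $f$ the identity, making $\|f\|_\infty<\infty$ and $\var(f(Z^0))>0$ immediate), chosen so that the induced population KLIEP parameters $\theta$ and $\theta'$ — fixed by the moment-matching stationarity condition $\E[f(Z^1)]=\E_\theta[f(Z^0)]$, with $\E_\theta$ the tilted expectation — are separated by $2R$, where $R:=\tfrac12\wedge\sqrt{10\log(1/(4\delta))/\effectiveSampleSize}$. The essential computation is the divergence between the two observed product measures: because the missing mass $w_0$ is identical under both hypotheses and cancels, the per-sample divergence on $\mathcal X$ equals $(1-w_0)\KL(p_0|\tilde p_0)$, whence
\begin{align*}
\KL\!\left(P_0^{n_0}\,\middle|\,\tilde P_0^{n_0}\right) = n_0(1-w_0)\,\KL(p_0|\tilde p_0) = \effectiveSampleSize\cdot\KL(p_0|\tilde p_0).
\end{align*}
I would then calibrate the perturbation so that $\KL(p_0|\tilde p_0)\leq\log(1/(4\delta))/\effectiveSampleSize$, forcing the total divergence to be at most $\log(1/(4\delta))$; the constant $10$ and the cap at $\tfrac12$ are precisely what is needed to reconcile this divergence budget with the required parameter gap $2R$ while keeping $\tilde p_0$ a valid probability measure with positive variance.

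With the two hypotheses fixed, I would invoke the Bretagnolle--Huber inequality, giving $1-\|P_0^{n_0}-\tilde P_0^{n_0}\|_{\mathrm{TV}}\geq\tfrac12\exp(-\KL)$. Since $\|\theta-\theta'\|=2R$, the events $\{\|\hat\theta-\theta\|\leq R\}$ and $\{\|\hat\theta-\theta'\|\leq R\}$ are disjoint, so the standard testing reduction yields, for any estimator $\hat\theta\equiv\hat\theta(\rsampm)$,
\begin{align*}
\max_{Q\in\{(P_0,P_1),(\tilde P_0,P_1)\}}\prob_Q\!\left(\|\hat\theta-\theta_Q\|>R\right)\geq\tfrac14\exp(-\KL)\geq\tfrac14\cdot 4\delta=\delta,
\end{align*}
and taking $(P_0,P_1)$ to be whichever pair attains this maximum proves the claim, including the degenerate regime $w_\omega=1$ (where $\effectiveSampleSize=0$, the observed law is a point mass at $\NaN$, the divergence vanishes, and $R=\tfrac12$).

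The main obstacle I anticipate is the simultaneous two-sided control demanded by the construction: I must choose $p_0,\tilde p_0$ so that the population parameter gap is bounded \emph{below} by $2R$ while $\KL(p_0|\tilde p_0)$ is bounded \emph{above} by $\log(1/(4\delta))/\effectiveSampleSize$. These quantities are coupled through the implicit moment-matching map defining $\theta$, so the argument hinges on a quantitative relationship between a perturbation of $p_0$, the resulting shift in the tilted mean $\E_\theta[f(Z^0)]$, and the quadratic growth of the KL divergence in that perturbation — and on verifying that the constants close at exactly $10$ and $\tfrac12$ while the perturbed law remains valid across the full range of $(w_0,w_1,n_0,n_1,\delta)$, in particular when $\effectiveSampleSize$ is small and the required separation saturates at the cap.
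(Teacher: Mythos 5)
Your skeleton is the same as the paper's: a two-point (Le Cam) reduction with constant missingness on the perturbed class, the factorization of the observed-data divergence as $(1-w)$ times the complete-data divergence (your $\KL(P_0^{n_0}|\tilde P_0^{n_0})=n_0(1-w_0)\KL(p_0|\tilde p_0)$ is exactly the paper's Lemma on product KL, applied to the other class), and the $\tfrac14 e^{-\KL}$ testing bound (the paper cites Tsybakov's Theorem 2.2(iii), which is the Bretagnolle--Huber form you invoke). Your symmetric handling of which class attains the minimum in $\effectiveSampleSize$ is if anything cleaner than the paper's single WLOG; the paper always perturbs class~$1$ and keeps class~$0$ fixed (uniform, fully observed).

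However, there is a genuine gap, and it is exactly the step you flag as ``the main obstacle'': you never exhibit the pair $p_0,\tilde p_0$ achieving parameter separation $2R$ with per-sample KL at most $\log(1/(4\delta))/\effectiveSampleSize$, and this is where all the content (and the constant $10$) lives. The paper closes it with two ideas you are missing. First, it perturbs \emph{along the model}: class~$1$ is given the exponentially tilted density $f_{\tau\zeta}(y)=\tau\zeta e^{\tau\zeta y}/(e^{\tau\zeta}-1)$ against a uniform class~$0$, so the model is correctly specified and the population KLIEP parameter is the natural parameter $\theta(\zeta,\tau)=(\sqrt{2}\tau\zeta,0,\dots,0)^\top$ read off directly --- there is no implicit moment-matching map to control. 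Your plan to perturb $p_0$ for a generic ``two-point support'' family leaves $\tilde\theta$ defined only through the tilted-moment stationarity equation, and you would still have to prove the quantitative link between the $p_0$-perturbation, the induced shift in $\tilde\theta$, and the KL cost; if you instead take $\tilde p_0\propto p_0\,e^{(\theta-\theta')^\top z}$ the parameter becomes explicit again, but then you are doing the paper's construction with the roles swapped. Second, the constant $10$ comes from the elementary estimate $\KL(f_0,f_\zeta)=\log\{\sinh(\zeta/2)/(\zeta/2)\}\leq\zeta^2/20$ for $\zeta\in[0,2]$, combined with the $\sqrt{2}$ rescaling of the support that makes $\|\theta\|\leq 1$ and caps the radius at $\tfrac12$; without some such explicit inequality your claim that ``the constants close at exactly $10$ and $\tfrac12$'' is an assertion, not a proof. (A minor point: with separation exactly $2R$ the two closed balls of radius $R$ touch, so your disjointness claim needs the usual nearest-point-test phrasing, as in the paper's $\hat\tau=\one\{\hat\theta_1>\zeta/\sqrt{2}\}$; this is cosmetic.)
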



\subsection{General Extension} 
While we have described the adaptation for KLIEP, this approach can be applied to any expectation-based DRE procedure with the following form:
\begin{align*}
    \argmin{r\in\cal G}h(\E[g_1(Z^1)],\E[g_0(Z^0)])
\end{align*}
where $h:\R^{d_0}\times\R^{d_1}\rightarrow\R$, $g_\zerone:\cal Z\rightarrow \R^{d_\zerone}$. We can then approximate these expectations using Lemma \ref{lemma:main_prob} as we did for KLIEP. An example of this with $f$-Divergence based DRE is given in the Appendix section \ref{app:Other_DRE_miss}.

\subsection{Extensions to Partial Missingness across Multiple Dimensions}\label{sec:partial_miss}
We now extend our approaches to the case of partial missingness in multi-dimensional settings. Throughout this section let $d\in\N$ and replace our original space $\cal Z$ for $\cal Z^d$ (and similarly $\cal X$ with $\cal X^d$). We start off with the assumption on our data missingness structure that for all $j\in [d],$
\begin{align*}
    \prob(X^{(j)}&=\NaN|Z,X^{(-j)})=\varphi_j(Z^{(j)})\\
    \prob(X^{(j)}&=x^{(j)}|Z,X^{(-j)})=\one\{Z^{(j)}=x^{(j)}\}(1-\varphi_j(Z^{(j)}))
\end{align*}
As such, the components are missing independently from one another with probabilities only depending on their own true value. To further simplify proceedings, we assume a naive Bayes style condition. Namely we include the restriction that for all $j'\neq j$, $Z^{(j')}\ind Z^{(j)}$ so that $(Z^{(j)},X^{(j)})\perp(Z^{(j')},X^{(j')})$.
This implies that
\begin{align*}
    r^*\coloneqq\prod_{j=1}^d r^*_j\hspace{5mm}\text{ where }\hspace{5mm}
    r^*_j:=\frac{p_1^{(j)}}{p_0^{(j)}}
\end{align*}
with $p^{(j)}$ be the p.d.f. of $Z^{(j)}$.

This allows us to separately estimate the density ratio over each dimension and then take the product to get the joint density ratio (DR). Within each dimension, only data from that dimension is relevant (due to the aforementioned independence) and so we can estimate the density ratio on each dimension using exclusively the data from that dimension with our current methods.

\subsection{NP Classification with Missing data}\label{sec:NP_class}
We now apply our adapted procedures to the problem of NP classification. 
First, we introduce NP classification.

NP classification constructs a classifier that strictly controls the miss-classification of one class while minimising miss-classification in the other \citep{Cannon2002,Scott2005,Tong2013,Tong18}.
Specifically, in NP classification we aim to learn a classifier $\phi:\cal Z\rightarrow \{0,1\}$ which solves the following constrained optimisation problem
\begin{align}
    \min_{\phi}&~ \prob(\phi(Z^1)=0) \notag \\
    \text{subject to:}&~ \prob(\phi(Z^0)=1)\leq\alpha. \label{eq:type1er}
\end{align}
We refer to the classifier which solves the above problem as the \emph{NP oracle classifier} at level $\alpha$ and refer to class $0$ as the \emph{error controlled class}. 
From the Neyman-Pearson lemma, it can be shown that the classification boundary of the oracle classifier is a level set of $r^*$ \citep{Tong2013}.
This motivates the use of DRE to approximate $r^*$ and in turn the oracle classifier. 
In contrast to hypothesis testing, we do not know the distribution of $Z^0$. As such, there is no means of constructing a classifier that is guaranteed to satisfy (\ref{eq:type1er}). 
Instead, we create a classifier $\hat\phi_{\rsamp}$ from our data $\rsamp$. This classifier has a small pre-specified probability of violating our Type I error constraint. 
In other words for a given small $\delta>0$, and classification procedure $\hat\phi_{\rsamp}$,
\begin{align}\label{eq:type1er2}
    \prob\bigg(\prob\Big(\hat\phi_{\rsamp}(Z^0)=1\Big|\rsamp\Big)\leq\alpha\bigg)\geq 1-\delta.
\end{align}

A procedure satisfying (\ref{eq:type1er2}) is described in \cite{Tong18}. This procedure uses any density ratio estimate alongside additional data from the error-controlled class to choose a classification threshold. The estimated density ratio alongside the threshold is used for classification. As a result, different DRE approaches lead to different NP classification procedures. More information on the details of this classification procedure can be found in Appendix \ref{app:np_class}.

As we know the optimal classifier comes from the true density ratio, we can assess the accuracy of various DRE procedures by the efficacy of the associated NP classification procedure. We asses the efficacy of an NP classification procedure $\hat\phi_{\rsamp}$ by the expected power which is defined as
\begin{align*}
    &\E\bigg[\prob\Big(\hat\phi_{\rsamp}(Z^1)=1\Big|\rsamp\Big)\bigg]
    =\prob(\hat\phi_{\rsamp}(Z^1)=1).
\end{align*}

We can also assess the efficacy of a classifier by it's Type II error which we define to be $R_1(\phi)\coloneqq\prob(\phi(Z^0)=1)$.

\subsubsection{Adapting NP Classification to Missing Data}
Current NP classification approaches require direct samples from $Z^0$ in order to select the appropriate classification threshold. Therefore, we will need to adapt it for the MNAR setting. 

We can again use Lemma \ref{lemma:main_prob} to approximate $\prob(\hat r(Z^0)>C)$ using $\rsampm_0$ and choose a threshold based on this. 
Specifically, if for any $h: \cal Z\rightarrow \R$ we define
\begin{align*}
    W_i^{(C,h)}\coloneqq\frac{\one\{X^0_i\neq\NaN\}}{1-\varphi^0(X_i^0)}\one\{h(X^0_i)>C\}
\end{align*}
then $\E[W_i^{(C,h)}]=\prob(h(Z^0)>C)$.  We also introduce the effective sample size for class $0$ defined by $\effectiveSampleSize^0\coloneqq n_0(1-\|\varphi^0\|_{\infty})$. We now use this result to prove the key lemma that informs our adaptation to the threshold selection within our NP classification algorithm.
\begin{lemma}\label{lemma:NP_miss_typeI} For a given measurable $h:\cal Z\rightarrow \R$
let's choose \begin{align*}
\hat{C}_{\alpha,\delta, h}&\equiv \hat{C}_{\alpha,\delta, h}(\rsampm_0)\\
&:=\inf\left\lbrace C \in \R ~:~ \frac{1}{n}\sum_{i \in [n]} W_i^{(C,h)}\leq \alpha - \Delta_{\effectiveSampleSize^0,\delta}\right\rbrace \\
\text{where  }&\Delta_{\effectiveSampleSize^0,\delta}\coloneqq\sqrt{\frac{16\log(1/\delta)}{\effectiveSampleSize^0}}.
\end{align*}

Then, for any $\delta\in(0,1/2]$, 
\begin{align*}
    \mathbb{P}\bigg[\prob(h(Z^0)> \hat C_{\alpha,\delta, h}~|~\rsampm)> \alpha \bigg] \leq \delta.
\end{align*}
\end{lemma}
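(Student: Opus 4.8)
The plan is to recast the statement in terms of the true and empirical (weighted) survival functions of $h(Z^0)$, and then exploit monotonicity to collapse a uniform-looking event down to a single-point deviation bound. Write $F(C) := \prob(h(Z^0) > C)$ and $\hat F(C) := \frac{1}{n}\sum_{i\in[n]} W_i^{(C,h)}$. The identity $\E[W_i^{(C,h)}]=\prob(h(Z^0)>C)$ already supplied in the text gives $\E[\hat F(C)] = F(C)$, and both $F$ and $\hat F$ are non-increasing and right-continuous in $C$ (the latter a step function with jumps at the points $h(X_i^0)$). Since $Z^0$ is independent of $\rsampm$, the inner probability equals $F(\hat C_{\alpha,\delta,h})$ as a function of the data, so the target is $\prob(F(\hat C_{\alpha,\delta,h}) > \alpha) \le \delta$.

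First I would dispose of the degenerate regime: if $\alpha < \Delta_{\effectiveSampleSize^0,\delta}$ then $\alpha - \Delta_{\effectiveSampleSize^0,\delta} < 0 \le \hat F$, the defining set of $\hat C_{\alpha,\delta,h}$ is empty, $\hat C_{\alpha,\delta,h} = +\infty$, and $F(\hat C_{\alpha,\delta,h}) = 0 \le \alpha$, so the event is empty. Assume henceforth $\alpha \ge \Delta_{\effectiveSampleSize^0,\delta}$ and set $C^\dagger := \inf\{C : F(C) \le \alpha\}$; since $F(C)\to 0$ as $C\to\infty$ this is finite. The key reduction is the chain of inclusions
\[
\{F(\hat C_{\alpha,\delta,h}) > \alpha\} \subseteq \{\hat C_{\alpha,\delta,h} < C^\dagger\} \subseteq \{\hat F(C^{\dagger-}) \le \alpha - \Delta_{\effectiveSampleSize^0,\delta}\}.
\]
The first inclusion holds because right-continuity gives $F(C^\dagger)\le\alpha$, so monotonicity forces $F(\hat C_{\alpha,\delta,h})\le\alpha$ whenever $\hat C_{\alpha,\delta,h}\ge C^\dagger$; the second because $\hat C_{\alpha,\delta,h}<C^\dagger$ yields some $C_1<C^\dagger$ with $\hat F(C_1)\le\alpha-\Delta_{\effectiveSampleSize^0,\delta}$, and monotonicity of $\hat F$ passes this to the left limit $\hat F(C^{\dagger-})$. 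Crucially $C^\dagger$ is deterministic, so the right-hand event concerns a single i.i.d. average: $\hat F(C^{\dagger-}) = \frac{1}{n}\sum_i V_i$ with $V_i := \frac{\one\{X_i^0 \ne \NaN\}}{1-\varphi^0(X_i^0)}\one\{h(X_i^0) \ge C^\dagger\}$, mean $\E[V_i] = F(C^{\dagger-}) =: p^\dagger$, and $p^\dagger \ge \alpha$ because $F(C) > \alpha$ for every $C < C^\dagger$.

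It then remains to bound $\prob(\frac1n\sum_i V_i \le p^\dagger - \Delta_{\effectiveSampleSize^0,\delta})$, and here lies the main obstacle: the inverse-probability weights make the range of $V_i$ as large as $B := (1-\|\varphi^0\|_\infty)^{-1}$, so a Hoeffding bound (which sees only the range $B$ against the bare sample size $n$) is too weak and does not recover the effective-sample-size rate. The fix is to control the variance rather than the range. A second application of the identity underlying Lemma \ref{lemma:main_prob}, now to $g = (1-\varphi^0)^{-1}\one\{h(\cdot)\ge C^\dagger\}$, gives $\E[V_i^2] \le B\,p^\dagger \le B$, so $\var(V_i) \le B$ despite the range being $B$. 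Feeding range $B$ and variance $B$ into Bernstein's inequality, and using $n = B\,\effectiveSampleSize^0$ together with $\Delta_{\effectiveSampleSize^0,\delta}^2 = 16\log(1/\delta)/\effectiveSampleSize^0$ and $\Delta_{\effectiveSampleSize^0,\delta} \le \alpha \le 1$, the factors of $B$ cancel and the exponent is at least $16\log(1/\delta)/\tfrac{8}{3} = 6\log(1/\delta)$, yielding $\delta^6 \le \delta$. The generous constant $16$ thus leaves ample slack; the only steps needing genuine care are the monotone/left-limit reduction (including the verification $p^\dagger \ge \alpha$) and the recognition that variance control via Lemma \ref{lemma:main_prob}, not a range-based bound, is what lets the weighted estimator concentrate at rate $\effectiveSampleSize^0$.
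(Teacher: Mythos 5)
Your proof is correct and follows essentially the same route as the paper's: reduce the event to $\{\hat C_{\alpha,\delta,h}<C^*_{\alpha,h}\}$ via right-continuity and monotonicity, collapse it to a single-point lower-tail deviation, and close with a Bernstein bound whose variance (not range) is controlled by the importance weights, which is exactly what the paper's Lemma \ref{lemma:imp_weight_vec_bern} encapsulates. Your one refinement --- passing to the left limit $\hat F(C^{\dagger-})$ and verifying $F(C^{\dagger-})\geq\alpha$, plus the degenerate case $\alpha<\Delta_{\effectiveSampleSize^0,\delta}$ --- is in fact slightly more careful than the paper, whose proof evaluates at $C^*_{\alpha,h}$ itself, where the survival function may have already jumped below $\alpha$ if $h(Z^0)$ has an atom there.
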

\begin{proof} 
Fix $C^*_{\alpha,h}:= \inf\left\lbrace C \in \R :\prob(h(Z^0)> C)\leq \alpha\right\rbrace$. As $\prob(h(Z^0)>C)$ is a decreasing right continuous function of $C$ we have that $\prob(h(Z^0)>C^*_{\alpha,h})\leq\alpha$. Hence,
\begin{align*}
&\mathbb{P}\left[\prob(h(Z^0)>\hat{C}_{\alpha,\delta,h}~|~\rsampmzero)> \alpha\right]\\
&=
 \prob(\hat{C}_{\alpha,\delta,h}  < C^*_{\alpha,h}) \\
 & \leq \prob\left(\frac{1}{n}\sum_{i \in [n]} W_i^{(C^*_{\alpha,h})} \leq \alpha - \Delta_{\effectiveSampleSize^0,\delta}\right) \leq\delta,
\end{align*}
where we used Lemma \ref{lemma:imp_weight_vec_bern} from Appendix \ref{app:MKLIEP_bound} in the final step.
\end{proof}

Therefore this gives rise to the following NP classification Algorithm \ref{alg:NP_classif_miss}. We will assume that we have access to an additional $n_0$ IID copies of $X_0$ which we will label as $X^0_{n_0+1},\dotsc,X^0_{2n_0}$ however this is just for notational convenience and the algorithm and associated theory adapts to any number of samples from $X_0$. We now let $g:\R\rightarrow\R$ be any strictly increasing function.
\begin{algorithm}[h]
\caption{Missing NP Classification Procedure}
\label{alg:NP_classif_miss}
\begin{algorithmic}[1]
    \vspace{0.1cm}
    \State Use $\{X_i^1\}_{i=1}^{n_1},\{X_i^0\}_{i=1}^{n_0}$ to estimate $g\circ r^*$ with $g\circ \hat r$ by any DRE procedure.
    \vspace{0.2mm}
    \State For $i\in\{1,\dotsc,n_0\}$ compute $\hat r_i\coloneqq g\circ \hat r(X_{i+n_0}^0)$ and $w_i\coloneqq\one\{X_{i+n_0}^0\in\cal Z\}(1-\varphi^0(X_{i+n_0}^0))^{-1}$ with $g\circ \hat r(\NaN)\coloneqq-\infty$.
    \vspace{0.2mm}
    \State Sort $\hat r_1,\dotsc,\hat r_{n_0}$ in increasing order to get $\hat r_{(1)},\dotsc,\hat r_{(n_0)}$ with $\hat r_{(i)}\leq \hat r_{(i+1)}$ and associated $w_{(1)},\dotsc,w_{(n_0)}$.
    \vspace{0.2cm}
    \State Set $i^*=\min\{i\in\{1,\dotsc,n_0\}|\frac{1}{n_0}\sum_{j=i}^{n_0}w_{(j)}\leq\alpha-\Delta_{\effectiveSampleSize^0,\delta}\}$.
    \vspace{0.2cm}
    \State Let $\hat C_{\alpha,\delta,g\circ \hat r}(\rsampm)\coloneqq \hat r_{(i^*)}$ and define $\hat\phi_{\rsampm}$ by 
    \begin{align*}
        \hat\phi_{\rsampm}(z)\coloneqq
        \one\{g\circ \hat r(z)>\hat C_{\alpha,\delta, g\circ \hat r}\}\quad\text{for all $z\in\cal Z$.}
    \end{align*}
\end{algorithmic}
\end{algorithm}

We can freely introduce $g$ within  Algorithm \ref{alg:NP_classif_miss} as the classifier produced is invariant to strictly increasing transformations. Crucially we can do this without knowing $g$ specifically and are only required to know $g\circ\hat r$. This added flexibility has many advantages, for example, we are now only required to learn $r^*$ up to a multiplicative constant. 

We now analyse the performance of Algorithm \ref{alg:NP_classif_miss} in conjunction with M-KLIEP.
To this end, define $\hat\phi_{\rsampm}$ as in Algorithm \ref{alg:NP_classif_miss} with $\hat r=r_{\hat\theta'}$ as defined in Theorem \ref{thm:MKLIEP_bound}. We now compare this with its population analogue $\tilde\phi$ defined by
$$\tilde\phi(z)\coloneqq\one\{g\circ r_{\tilde\theta}(z)>C^*_{\alpha,g\circ r_{\tilde\theta}}\}$$
for all $z\in\cal Z$,
where $C^*_{\alpha,h}$ is defined as in the proof of Lemma \ref{lemma:NP_miss_typeI}.
\begin{theorem}\label{thm:NP_pow_bound}
Assume the conditions of Theorem \ref{thm:MKLIEP_bound} hold and let $\hat\phi_{\rsampm},\tilde\phi$ be defined as above. 
Further assume that for all $\theta\in\R^d,z\in\cal Z$, $\|\nabla_\theta g \circ r_\theta(z)\|\leq L$.
Finally, suppose there exists positive constants $B_0,B_1,\gamma_0,\gamma_1, a$ s.t. $h$ satisfies the following conditions
\begin{align*}
 \prob\bigg(h(Z^0)\in[C^*_{\alpha,h}, C^*_{\alpha,h}+\zeta]\bigg)&\geq B_0^{-1}\zeta^{\gamma_0}\\
 \prob\bigg(h(Z^0)\in[C^*_{\alpha,h}-\zeta, C^*_{\alpha,h}+\zeta]\bigg)&\leq B_1\zeta^{\gamma_1}, 
\end{align*}
for all $\zeta\in(0,~ B_0 a^{1/\gamma_0}]$. Then there exists a constant  $C_2>0$  depending only on $C'_0, B_0,B_1,\gamma_0,\gamma_1,a$, and $L$ such that for all $\delta\in (0,\half]$, with $\effectiveSampleSize\geq C_2\log(1/\delta)$, we have
\begin{align*}
    \prob(|R_1(\hat\phi_{\rsampm})-R_1(\tilde\phi)|>\eps')&\leq 2\delta.
\end{align*}
where
\begin{align*}
    \eps'\coloneqq C_2\left[\left(\frac{\log(1/\delta)}{\effectiveSampleSize^0}\right)^{\frac{\gamma_1+1}{2\gamma_0}\wedge \half}+\left(\frac{\log(1/\delta)}{\effectiveSampleSize}\right)^{\frac{\gamma_1+1}{2}}\right].
\end{align*}

\end{theorem}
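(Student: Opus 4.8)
The plan is to control two essentially independent sources of error, the density-ratio estimation error and the threshold-selection error, and then convert both into a bound on the Type II error through the margin conditions. Write $h := g\circ r_{\tilde\theta}$ and $\hat h := g\circ r_{\hat\theta'}$, let $C^* := C^*_{\alpha,h}$, and let $A := \{h > C^*\}$ and $B := \{\hat h > \hat C_{\alpha,\delta,\hat h}\}$ be the acceptance regions of $\tilde\phi$ and $\hat\phi_{\rsampm}$. First I would bound the sup-norm estimation error of the transformed ratio: by the gradient bound $\|\nabla_\theta g\circ r_\theta\|\le L$ and the mean value inequality, $\|\hat h - h\|_\infty \le L\|\hat\theta'-\tilde\theta\|$, so Theorem \ref{thm:MKLIEP_bound} gives, on an event $\mathcal{E}_1$ of probability at least $1-\delta$, $\eta := \|\hat h - h\|_\infty \le L\sqrt{C_0'\log(1/\delta)/\effectiveSampleSize}$.

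Second I would locate the data-driven threshold $\hat C$. Conditioning on the sample used to form $\hat h$, which is independent of the hold-out class-$0$ sample, the weights $W_i^{(C,\hat h)}$ have mean $\prob(\hat h(Z^0) > C)$, so the importance-weighted Bernstein bound (Lemma \ref{lemma:imp_weight_vec_bern}) controls the weighted empirical tail uniformly in $C$ to within $O(\Delta_{\effectiveSampleSize^0,\delta})$ on an event $\mathcal{E}_2$ of probability at least $1-\delta$. Combined with the Type I guarantee of Lemma \ref{lemma:NP_miss_typeI}, this yields the two-sided control $|\prob(\hat h(Z^0)>\hat C\mid\rsampm) - \alpha|\lesssim \Delta_{\effectiveSampleSize^0,\delta}$. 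Feeding this tail-probability gap into the lower margin condition (exponent $\gamma_0$), which lower-bounds the rate at which the tail of $h(Z^0)$ decreases past $C^*$, and absorbing the $\eta$-shift between $h$ and $\hat h$, I would deduce the threshold-location bound $|\hat C - C^*| \lesssim \eta + \Delta_{\effectiveSampleSize^0,\delta}^{1/\gamma_0} =: \tau$.

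Third comes the power comparison, where the fast rate (the exponent $\gamma_1+1$ rather than $\gamma_1$) appears. I would write $R_1(\hat\phi_{\rsampm}) - R_1(\tilde\phi)$ as $\int_{A\setminus B}\,dP_1 - \int_{B\setminus A}\,dP_1$, use $dP_1 = r^*\,dP_0$, and add and subtract $\kappa\,dP_0$, where $\kappa$ is the density-ratio level associated with the boundary $\{h=C^*\}$, to obtain $\int_{A\triangle B}(r^* - \kappa)\,dP_0 \pm \kappa\,(\alpha - \prob(\hat h(Z^0)>\hat C))$. On $A\triangle B$ the integrand $|r^*-\kappa|$ is of order the band half-width $\eta+\tau$, while $A\triangle B$ sits inside a band $\{|h(Z^0)-C^*|\lesssim \eta+\tau\}$ whose $P_0$-mass is at most $B_1(\eta+\tau)^{\gamma_1}$ by the upper margin condition; their product gives $(\eta+\tau)^{\gamma_1+1}$, the source of the extra $+1$. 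The remaining Type I-slack term is $O(\Delta_{\effectiveSampleSize^0,\delta})$ by step two, which is exactly the $\wedge\tfrac12$ contribution. Substituting $\tau\sim \eta + \Delta_{\effectiveSampleSize^0,\delta}^{1/\gamma_0}$, $\eta\sim\sqrt{\log(1/\delta)/\effectiveSampleSize}$, $\Delta_{\effectiveSampleSize^0,\delta}\sim\sqrt{\log(1/\delta)/\effectiveSampleSize^0}$, and using $(a+b)^{\gamma_1+1}\lesssim a^{\gamma_1+1}+b^{\gamma_1+1}$, produces the two stated terms of $\eps'$, with a union bound over $\mathcal{E}_1,\mathcal{E}_2$ giving failure probability $2\delta$.

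The hard part will be step three: justifying the near-vanishing of the integrand $|r^*-\kappa|$ on the boundary band and, in particular, relating the class-$1$ mass of $A\triangle B$ to the class-$0$ margin conditions through the density ratio near the level set, since this is what upgrades the naive slow rate $(\eta+\tau)^{\gamma_1}$ to the fast rate $(\eta+\tau)^{\gamma_1+1}$. Care will also be needed to keep the two error sources, $\effectiveSampleSize$ from estimating $\hat h$ and $\effectiveSampleSize^0$ from selecting $\hat C$, cleanly separated so that they enter through the distinct exponents $\tfrac{\gamma_1+1}{2}$ and $\tfrac{\gamma_1+1}{2\gamma_0}\wedge\tfrac12$.
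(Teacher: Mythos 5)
Your proposal follows essentially the same route as the paper's proof: the same Tong-style decomposition of the Type II error difference into a symmetric-difference integral plus a threshold-times-Type-I-error term, the same sup-norm control of $\hat h-\tilde h$ via the gradient bound and Theorem \ref{thm:MKLIEP_bound}, the same localization of $\hat C$ through the $\gamma_0$ margin condition, and the same band-width-times-band-mass argument via the $\gamma_1$ condition producing the exponent $\gamma_1+1$, with a union bound over the two good events giving $2\delta$. The step you flag as hard is handled in the paper exactly as you anticipate: the integrand over the symmetric difference is $|\tilde h(Z^0)-C^*_{\alpha,\tilde h}|$, which is bounded by the band half-width because the symmetric difference is shown to lie inside the band.
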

\begin{remark}
If $\cal G$ is correctly specified and $Z^1,Z^0$ are continuous then the NP lemma gives us that $\tilde\phi$ is the oracle classifier \citep{Tong2013}. Hence we have shown convergence in Type II error to the oracle classifier in this case.
\end{remark}
\begin{remark}
Taking $g=\log$ gives $L=b$ under the constraints imposed in Theorem \ref{thm:MKLIEP_bound}.
\end{remark}
We have now fully adapted NP classification to the MNAR setting. Namely, we have proposed an algorithm to construct an NP classifier using MNAR data and shown this classifier simultaneously control Type I error with high probability while also converging to the oracle classifier when $\cal G$ is correctly specified. We test our adaptations on some synthetic examples.
\section{SYNTHETIC DATA EXAMPLES}
Here we empirically evaluate our proposed methods alongside  CC-KLIEP (naive \textbf{C}omplete \textbf{C}ase \textbf{KLIEP}) which simply discards any missing value in the dataset and estimating the ratio  using only observed data.
Both M-KLIEP and CC-KLIEP are tested on simulated datasets. Details on the data generating processes can be found in Appendix \ref{app:synth_exp}.

\subsection{5-dimensional Correctly Specified Example}
In this example we take both classes to have multivariate Gaussian distributions with the same variance. We then induced MNAR missingness in class $0$ and no missingness in class $1$. We use the log-linear form for $r_\theta$ with $f(z)=z$ so that $\cal G$ is correctly specified. 
\sloppy100 simulations of the above data generation procedure were run for $n_0=n_1$ with $n_0$ ranging from $100$ to $1,500$ and $\tilde\theta$ estimated by M-KLIEP and CC-KLIEP. These simulations were then used to estimate the mean square (Euclidean) distance (MSD) between our estimate, $\hat\theta'$, and $\tilde\theta$ for both procedures. These estimates alongside 99\% C.I.s (confidence intervals) were calculated which are presented in Figure \ref{fig:msd_5dim}.
\begin{figure}[ht]
    \centering
    \includegraphics[width=0.45\textwidth]{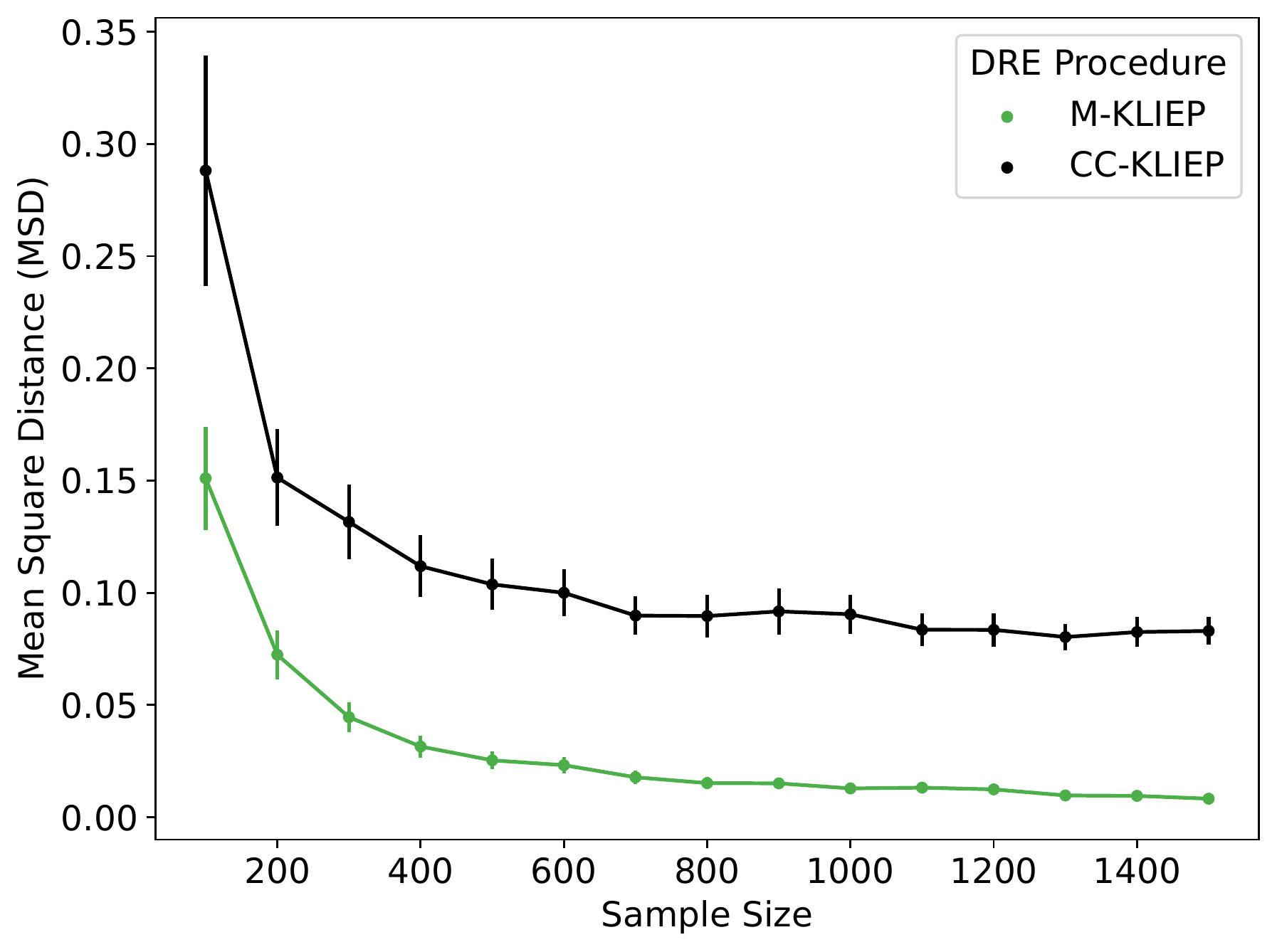}
    \caption{MSD between $\hat\theta'$ and $\tilde\theta$ for varying $n$ with 99\% C.I.s.}
    \label{fig:msd_5dim}
\end{figure}
In the plot we can see clear evidence of the complete case approach being asymptotically biased as the error plateaus around 0.08 while the error under M-KLIEP converges to 0 as $n_0,n_1$ increase.
We now go on to illustrate the affect of our DRE procedure on NP classification.
\subsection{Neyman Pearson Classification}
For this example we take both classes to be 2-dimensional Gaussian mixtures. We then induce MNAR missingness in class $0$ and no missingness in class $1$. M-KLIEP and CC-KLIEP are then used to estimate $r^*$ using the log-linear form with $f(x)=x$ again. NP classifiers were then fit using these estimators alongside the true $r^*$.

\begin{figure}[ht]
     \centering
         \includegraphics[width=0.45\textwidth]{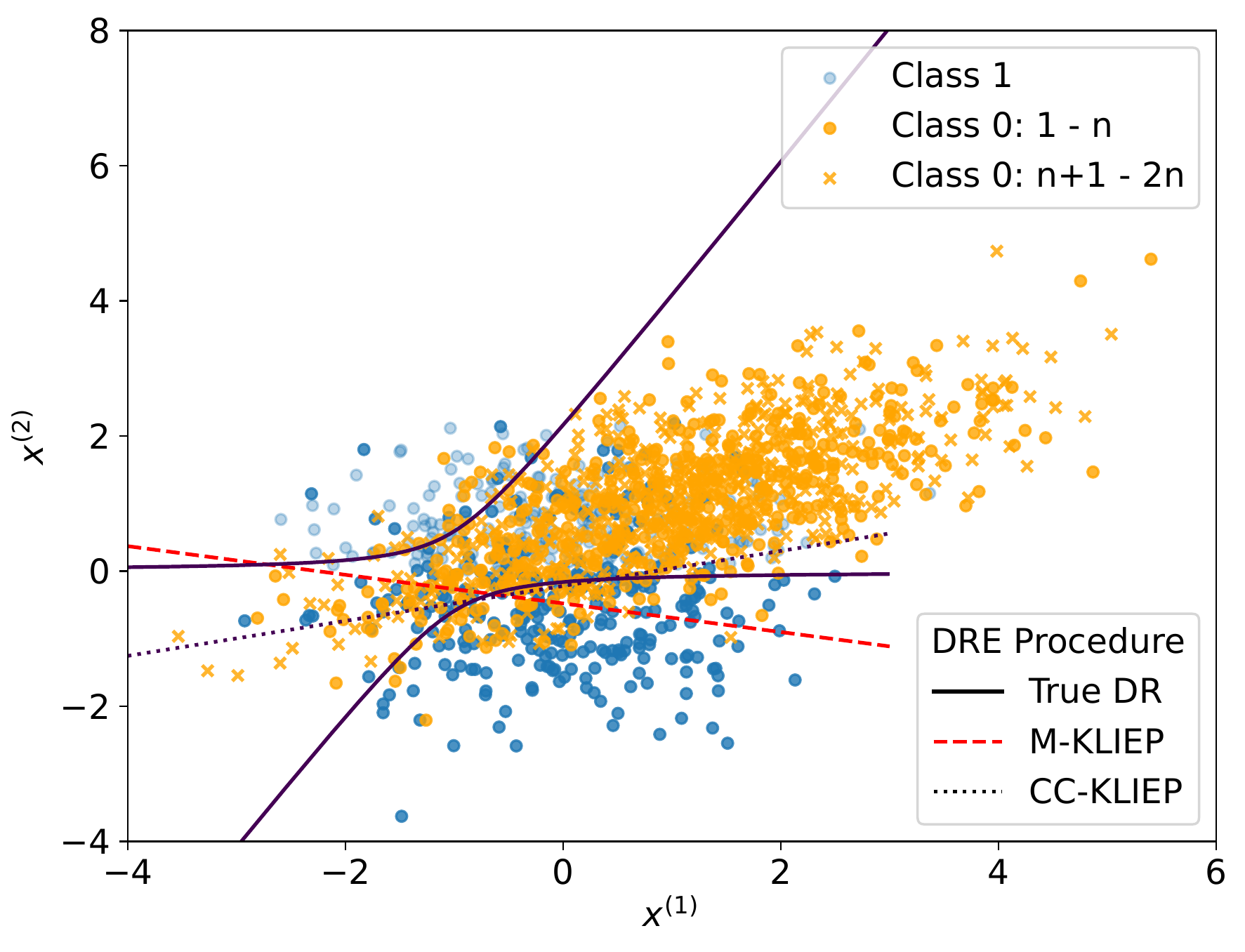}
        \caption{Scatter plot of $\rsamp$ alongside classification boundaries produced from corresponding $\rsampm$ via various procedures. If $X_i=\NaN$ then $Z_i$ is faded out.}
        \label{fig:np_class_bound}
\end{figure}
Figure \ref{fig:np_class_bound} shows one run of this experiment with $n_0=n_1=500$.
We see that M-KLIEP provides a good approximation to the classifier that uses $r^*$ despite the model being incorrectly specified while CC-KLIEP is biased due to the way it discards corrupted points.
\subsubsection*{Repeated Simulations}
We now run 100 simulations of the above experiment for various $n_1=n_0$ ranging from $100$ to $1500$. We then use these to estimate the expected power of the procedures alongside 99\% C.I.s . Figure \ref{fig:np_exp_power} shows the results of this experiment.
\begin{figure}
    \centering
    \includegraphics[width=0.45\textwidth]{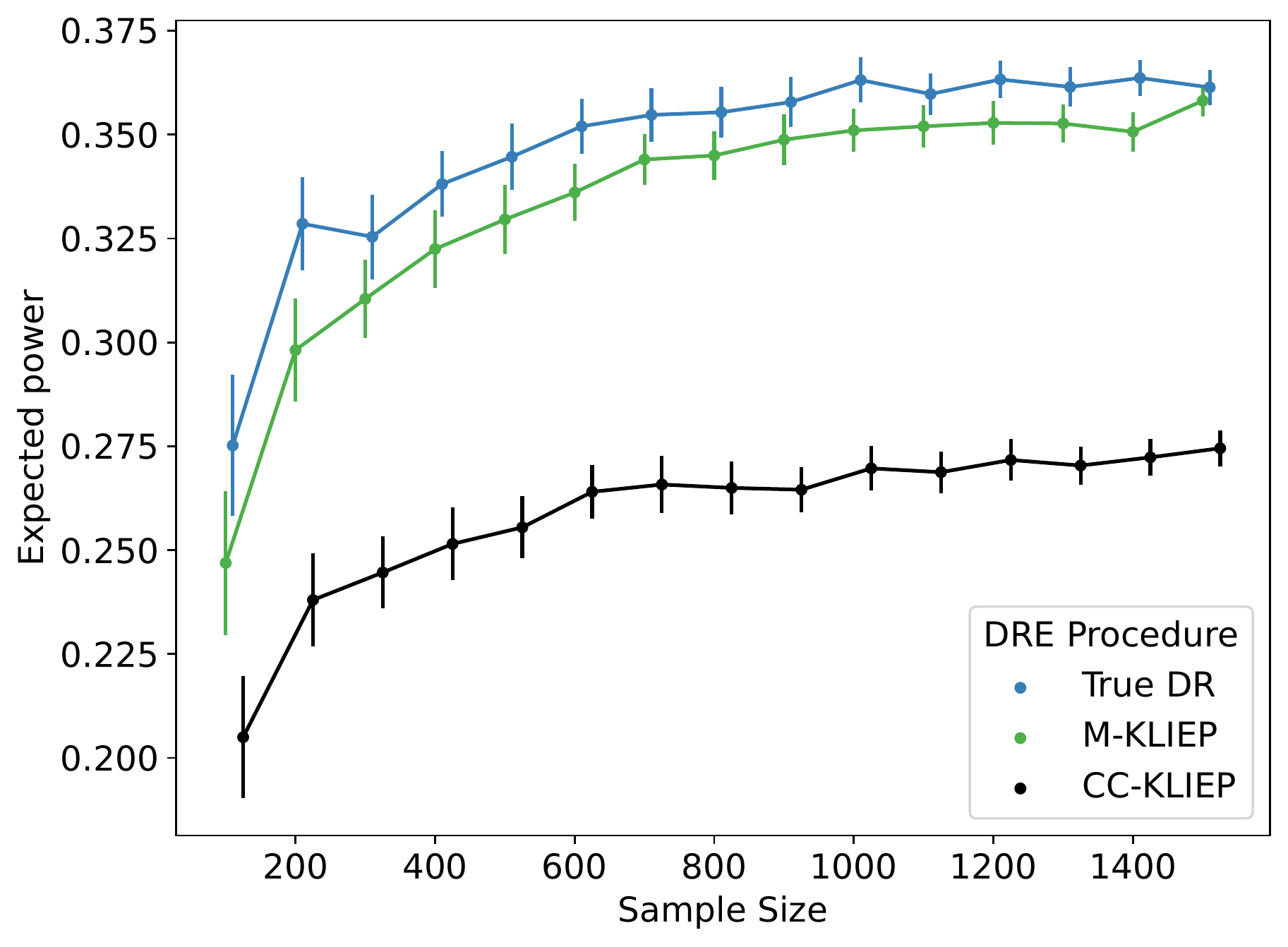}
    \caption{Expected Power for varying $n$ with 99\% C.I.s}
    \label{fig:np_exp_power}
\end{figure}

We see similar results to our correctly specified DRE experiment with the complete case approach performing poorly. As we can see, despite our model not being able to encompass the true density ratio, our estimate density ratio via M-KLIEP still gives a reasonable approximation to $r^*$ and therefore gives a good NP classification procedure. Due to $\cal G$ being incorrectly specified, M-KLIEP never converges to the expected power using the true DR. Additional synthetic experiments are given in Appendix \ref{app:synth_exp_add}

\section{REAL-WORLD DATASETS WITH SIMULATED MISSINGNESS}
We now go on to perform NP classification experiments with real-world data. To make this scenario more realistic we will assume the $\varphi$ to be unknown and aim to estimate them. We briefly tackle how to do this in the section below. 
\subsection{Learning the Missingness Function}\label{sec:miss_learn} Thus far we have assumed  $\varphi$ is known. We now consider the challenge of learning $\varphi$ by querying the latent values $Z_i$ for a subset of our missing data.


Here we are motivated by settings in which we can  send a small sub-sample of observations which erred (i.e. a subset of $\{i \in [n]: X_i=\NaN\}$) off for further investigation to obtain their respective true values $Z_i$.  Now we can learn the missingness function $\varphi$ by fitting a logistic regression with samples of $( Z_i,\one\{X_i=\NaN\})$ and treating $\one\{X_i=\NaN\}$ as the response variable.  


As we have not observed $Z_i$ for every $i\in [n]$, we cannot use a standard logistic regression to learn $\varphi$. However, we can use an adaption of logistic regression designed to deal with miss-representative class proportions presented in \cite{king2001}. More details on this can be found in the Appendix in section \ref{app:adj_logreg}.
We now try our methods on real-world data to assess their efficacy.

\subsection{Set-Up}
We choose datasets and classification problems where the NP classification is properly motivated: One would want to strictly control miss-classification for one class in these datasets. We have chosen 3 datasets which we refer to as ``Fire", ``CTG", and ``weather"; 
\begin{itemize}
    \item \textbf{CTG dataset}: This data set contains 11 different summaries of Cardiotocography (CTG) data for 2126 foetuses where we take each foetus to be an observation. Alongside this is a classification of each foetus as ``Healthy", ``Suspect", or ``Pathologic". We aim to predict whether the foetus is classified as ``Healthy" or not. We take the error controlled class to be ``Suspect" or ``Pathologic".
    \item \textbf{Fire dataset}: This data set contains 62,630 readings of 12 atmospheric measures such as temperature, humidity, and CO2 content. We aim to predict whether a fire is present. We take the presence of fire as the error controlled class.
    \item \textbf{Weather data}: This data set contains 142,193 observations of 62 dimensions giving various weather readings for a given day in a given location in Australia and whether it rained the following day. We aim to  predict whether it rains the following day. We take the occurrence of rain as the error controlled class. 
\end{itemize}

In all cases, we artificially induce missing observations separately across each dimension and only in the non error-controlled class. We construct each dimension-wise missing function $\varphi_j, j \in [d]$ as $\varphi_j(z)=(1+\exp\{\tau_j(a_{j,0}+a_{j,1}z)\})$
where $a_{j,0},a_{j,1}\in[0,\infty)$ and $\tau_j\in\{-1,1\}$. 

\begin{figure*}[t]
     \centering
     \begin{subfigure}{0.3\textwidth}
        \centering
        \includegraphics[width=1\textwidth]{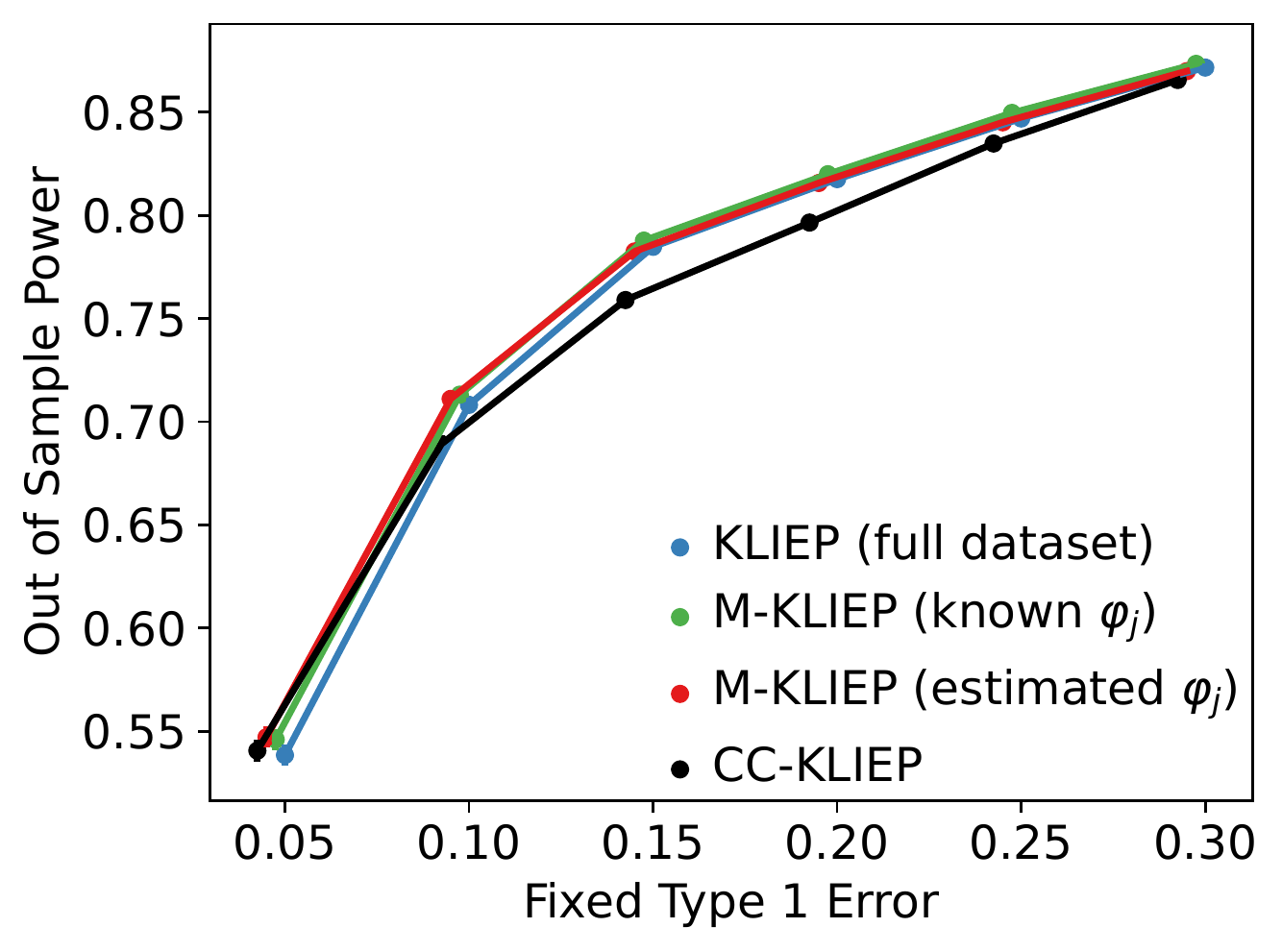}
        \caption{CTG Dataset.}
        \label{fig:np_RWE_CTG_varyalpha}
     \end{subfigure}
     \hfill
        \begin{subfigure}{0.3\textwidth}
        \centering
        \includegraphics[width=1\textwidth]{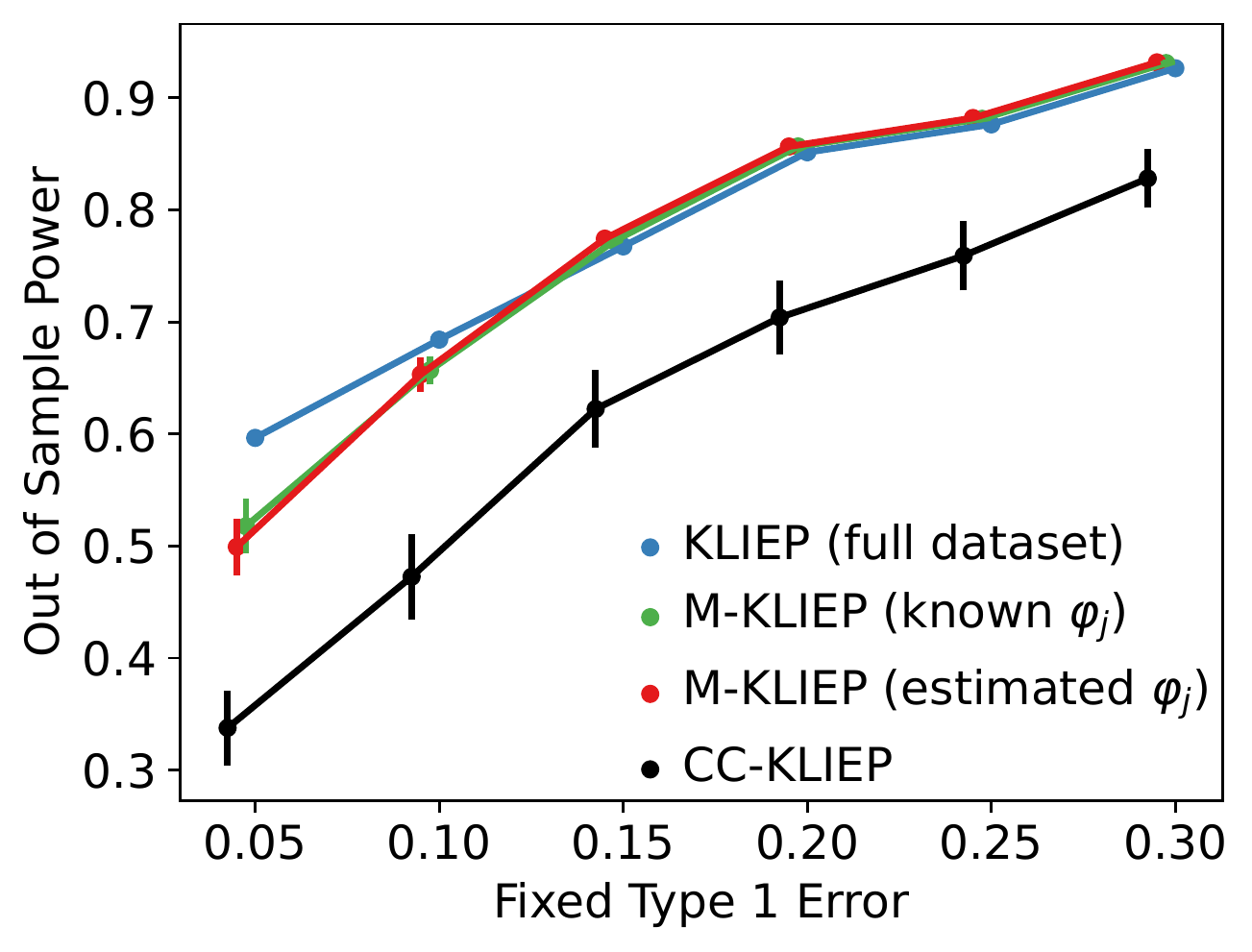}
        \caption{Fire Dataset.}
        \label{fig:np_RWE_smoke_varyalpha}
     \end{subfigure}
     \hfill
     \begin{subfigure}{0.3\textwidth}
        \centering
        \includegraphics[width=1\textwidth]{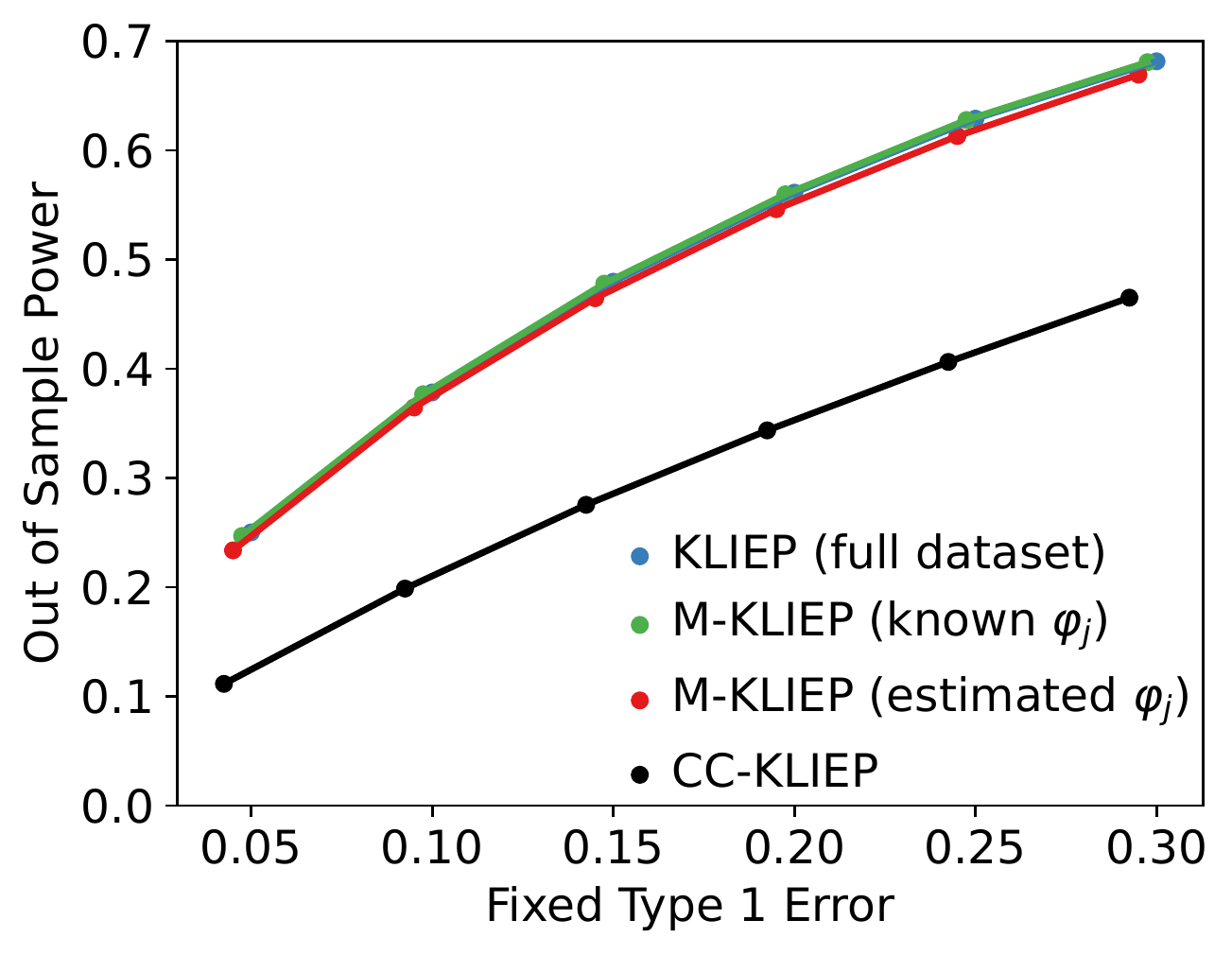}
        \caption{Weather Dataset.}
        \label{fig:np_RWE_weather_varyalpha}
     \end{subfigure}
     \caption{Out of sample power with pseudo 95\% C.I.s for various different target Type I errors.}
     \label{fig:np_RWE_vary_alpha}
    \hfill
\end{figure*}

\begin{figure*}[t]
     \centering
     \begin{subfigure}[b]{0.3\textwidth}
        \centering
        \includegraphics[width=\textwidth]{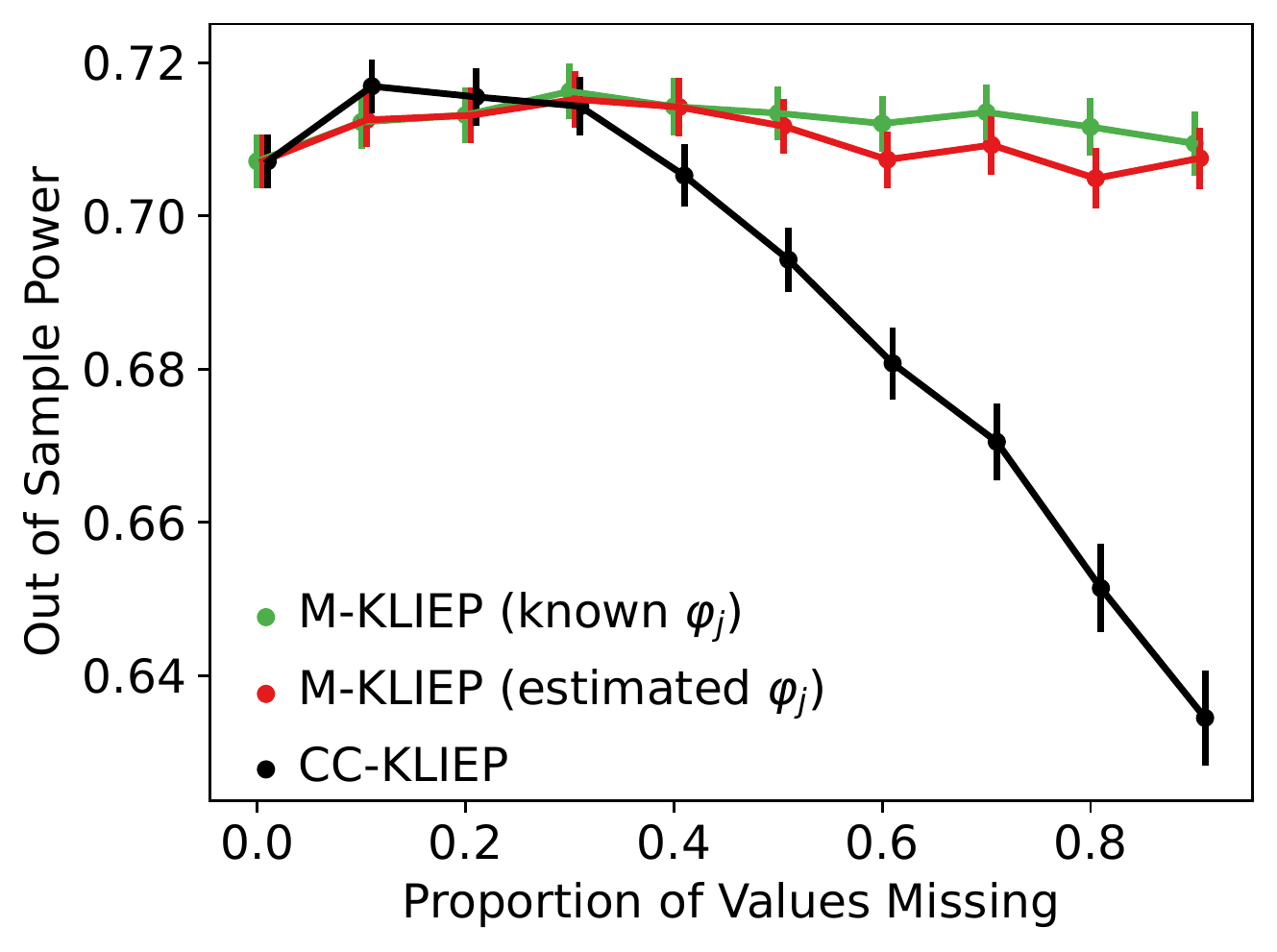}
        \caption{CTG Dataset.}
        \label{fig:np_RWE_CTG_varymiss}
     \end{subfigure}
     \hfill
     \begin{subfigure}[b]{0.3\textwidth}
        \centering
        \includegraphics[width=\textwidth]{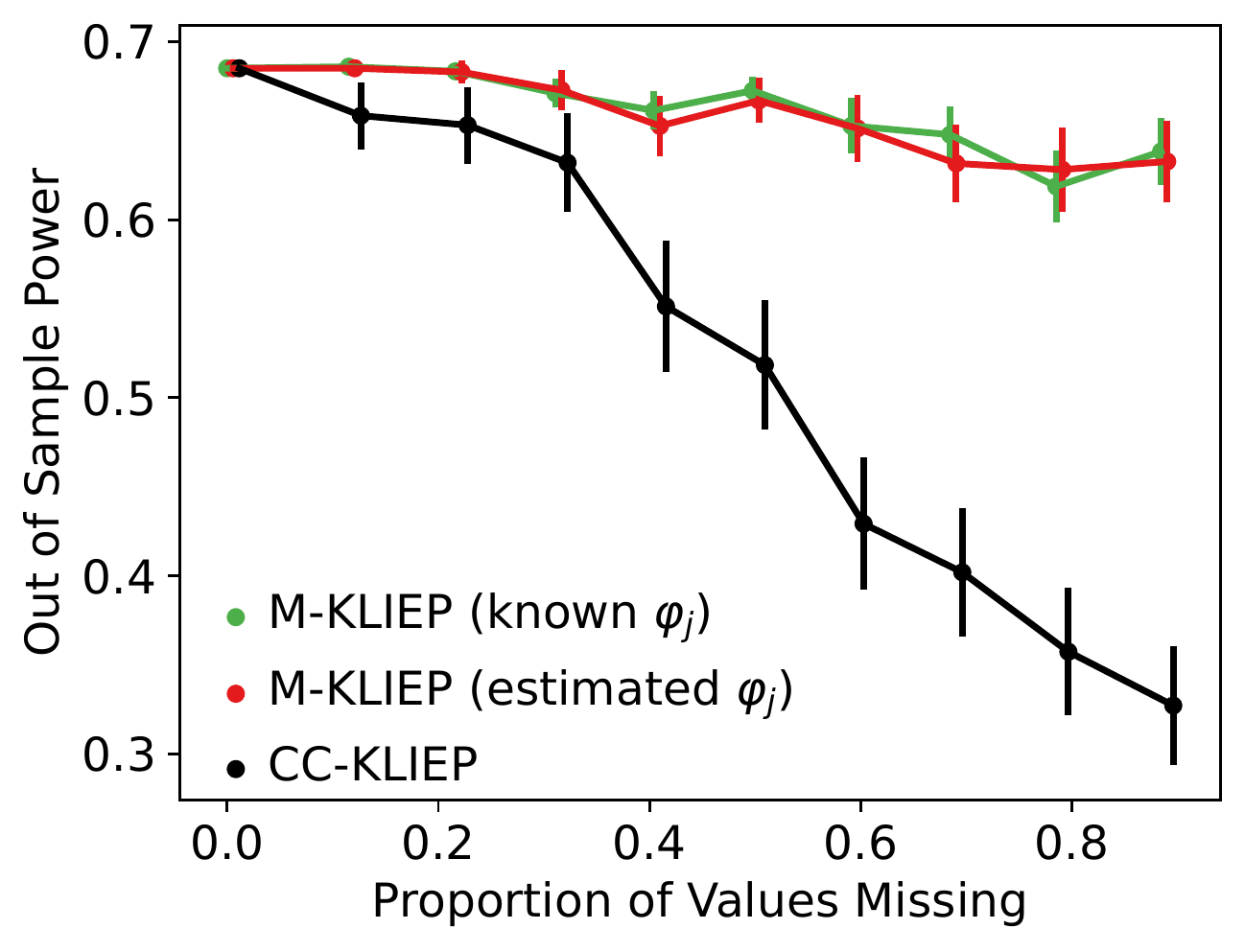}
        \caption{Fire Dataset.}
        \label{fig:np_RWE_smoke_varymiss}
     \end{subfigure}
     \hfill
      \begin{subfigure}[b]{0.3\textwidth}
        \centering
        \includegraphics[width=1\textwidth]{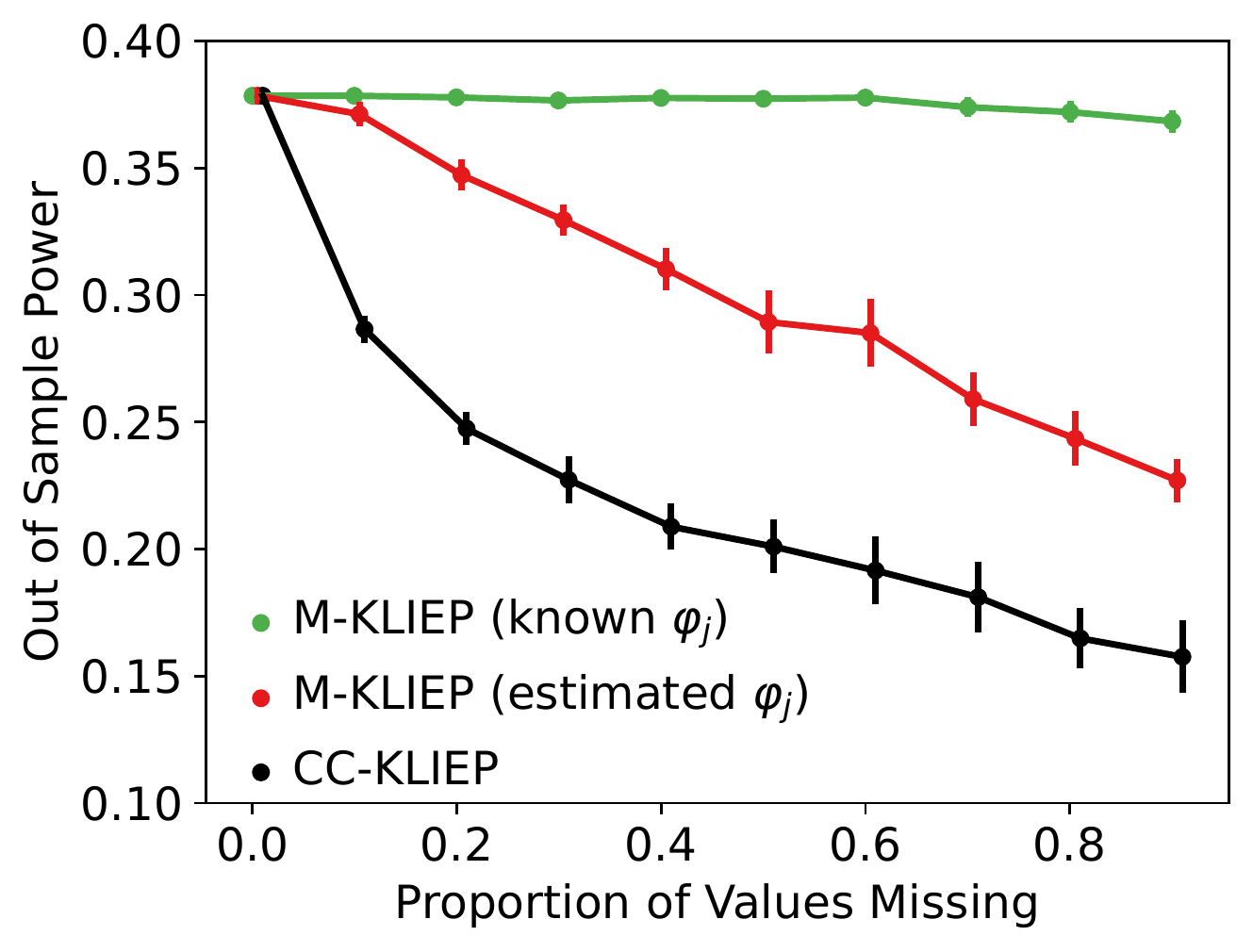}
        \caption{Weather Dataset}
        \label{fig:np_RWE_weather_varymiss}
     \end{subfigure}
     \caption{Out of sample power with pseudo 95\% C.I.s for various $\varphi$ with varying missing proportions.}
     \label{fig:np_RWE_varymiss}
\end{figure*}
We constructed NP classifiers using M-KLIEP and CC-KLIEP under our naive Bayes framework introduced in Section \ref{sec:partial_miss}. For M-KLIEP we test both cases where $\varphi_j$ is known and $\varphi_j$ is learned. Estimating $\varphi_j$ is done using the method described in Section \ref{sec:miss_learn} and each $\varphi_j$ is estimated separately using only data from that dimension. We also estimate the density ratio using the fully observed data via standard KLIEP as a benchmark.

The power of the NP classifiers produced with these DRE procedures was then calculated on fully observed testing data. We repeat this process multiple times with new random test/train splits and $\tau_j$ at each iteration to estimate the pseudo expected power and corresponding 95\% C.I.s.
We apply this technique at different target Type I errors and varying $a_{j,0}$ to construct datasets with different missing proportions.
Additional detail is given in Appendix \ref{app:real_exp_design}.

\subsection{Results}
In Figure \ref{fig:np_RWE_vary_alpha} we see that in both the Fire and Weather data, M-KLIEP significantly outperforms CC-KLIEP for all values of $\alpha$. For the CTG data we see significant out-performance for $\alpha\in[0.1,0.25]$ and comparable performance in the extremes. Surprisingly,  we see that M-KLIEP performs equally well when $\varphi$ is either learned or known. Further, M-KLIEP perform comparably with KLIEP run on fully observed datasets.
In Figure \ref{fig:np_RWE_varymiss} we see that M-KLIEP outperforms CC-KLIEP for larger missingness proportions. As we would expect, the more missing points, the worse CC-KLIEP performs. For the Fire and CTG data we see no loss in performance when learning $\varphi$. For the Weather data, an increase in the missing proportion leads to a decrease in performance of M-KLIEP using a learned $\varphi$. Remarkably, there is no performance loss when running M-KLIEP using the true $\varphi$ in all datasets. 

Results from additional experiments comparing with an iterative imputation approach are given in Appendix section \ref{app:real_exp_add}. 
\section{DISCUSSION \& CONCLUSIONS}
DRE is a widely used machine learning technique with a diverse range of applications. In this paper we have shown that, when data is MNAR, naively performing DRE by discarding all missing observations can lead to inaccurate estimates of the density ratio functions. We have proposed a novel procedure M-KLIEP as well as adaptations to a broad family of DRE procedures to account for the MNAR phenomenon when the missingness structure is known. For M-KLIEP we have presented finite sample bounds under a commonly used parametric form showing convergence at rate $\sqrt{\effectiveSampleSize}$ where $\effectiveSampleSize$ is the effective sample size. We have then extended these approaches in multiple ways to partial missingness across multiple dimensions under the naive Bayes framework. Finally, we have presented a technique to estimate missing patterns from data by querying the true values of a few missing observations.  

We then studied a downstream application of DRE in NP classification. 
We have adapted NP classification to MNAR data settings and shown that
our adaptation ensures satisfaction of our Type I error constraints with high probability.
We have also provided finite sample convergence results for the expected power of our NP classification procedure and hence shown that it converges to the optimal power under certain conditions.
 
We have shown all these adaptations to work well in practice on simulated data and applied them to real world data with  known and unknown synthetic missing patterns. 
In particular, our adapted NP classifier using M-KLIEP has been shown to negate losses in accuracy/power incurred by MNAR phenomenon in most cases.
We briefly explore the societal impact of our work in Appendix \ref{app:soc_imp}.


A natural extension of this work would be relaxing the naive Bayes assumption required for the handling partial missingness (Section \ref{sec:partial_miss}) to a more realistic set-up where dependencies are properly modelled and accounted for. This extension would enable applications in areas where the naive Bayes framework is unrealistic such as image classification. Another possible direction would be exploring the set-up where the missing pattern is known to only belong to some class as discussed in \cite{Sportisse2020}. 
This approach could be further expanded using a Bayesian framework by assigning of some prior belief on the missing pattern. Finally, our adaptation to NP classification could be extended further by adapting it to the case of partial missingness in both classes.

\subsubsection*{Acknowledgements}
Josh Givens was supported by a PhD studentship from the EPSRC Centre for Doctoral Training in Computational Statistics and Data Science (COMPASS).

\bibliography{ref}
\appendix
\section{ADDITIONAL PROOFS}
Here we prove key results from the paper however before we do we need to first introduce some additional notation. 
We let $p''_{\zerone}$ to refer to the density of $X^\zerone$ with w.r.t. the measure $\nu$ where $\nu$ is the unique measure on $\cal Z\cup\{\NaN\}$ s.t. (such that) $\nu(A)=\mu(A)$ for any $A\in\cal B_{\cal Z}$ and $\nu(\{\NaN\})=1$.
For square matrices $A,B\in\R^{d\times d}$ we take $A\succeq B$ to mean that $A-B$ is positive semi-definite and define $\lambda_\min(A)$ to be the smallest eigenvalue of $A$.
\vspace{0.1cm}

\subsection{KLIEP Objective Simplification}\label{app:KLIEP_rewrite}
\begin{lemma}\label{lemma:KLIEP_rewrite}
Let $\cal G$ be closed under positive scalar scalar multiplication so that for all $r\in\cal G,~a>0$, $a r\in\cal G$. Define $\tilde r$ to be the solution of the constrained optimisation problem
\begin{align*}
    \min_{r\in\cal G}&~\KL(p_1|r p_0)\\
    \text{subject to}:& \int_{\cal Z} r(z) p_0(z)\mu(\diff z)=1.
\end{align*}
Then we can re-write the solution to the above optimisation problem as $\tilde r\coloneqq N^{-1}\cdot r_0$ where
\begin{align*}
    r_0&\coloneqq\argmax{r\in\bar{\cal G}}\E[\log r(Z^1)]-\log\E[r(Z^0)]\\
    N&\coloneqq\E[r_0(Z^0)]
\end{align*}
where $\bar{\cal{G}}$ is $\bar{\cal G}\subseteq\cal G$ s.t. for any $r\in \bar{\cal G}$, there exists $r_0\in\bar{\cal G}, a>0$ s.t. $a\cdot r_0=r$. 
\end{lemma}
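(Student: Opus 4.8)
The plan is to exploit the scale invariance of the unconstrained objective in order to translate between the two problems. First I would simplify the KL divergence by expanding
\begin{align*}
\KL(p_1 | r p_0) = \int_{\cal Z} p_1(z)\log\frac{p_1(z)}{p_0(z)}\,\mu(\diff z) - \int_{\cal Z} p_1(z)\log r(z)\,\mu(\diff z),
\end{align*}
and observing that the first integral is a constant independent of $r$. Since $\int_{\cal Z} r(z)p_0(z)\mu(\diff z) = \E[r(Z^0)]$, this shows that minimising $\KL(p_1|r p_0)$ over the feasible set is equivalent to maximising $\E[\log r(Z^1)]$ subject to $\E[r(Z^0)] = 1$, reducing the constrained problem to a single functional on the constraint surface.

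Next I would record the key property of the unconstrained objective $J(r) := \E[\log r(Z^1)] - \log\E[r(Z^0)]$, namely its scale invariance: a direct computation gives $J(a r) = J(r)$ for all $a > 0$, because the two $\log a$ contributions cancel. The bridge between the problems is the normalisation map $r \mapsto \bar r := r/\E[r(Z^0)]$. For any $r \in \cal G$ with finite positive $\E[r(Z^0)]$, closure of $\cal G$ under positive scaling gives $\bar r \in \cal G$, the constraint $\E[\bar r(Z^0)] = 1$ holds, and $\E[\log\bar r(Z^1)] = J(r)$. Conversely, for any feasible $r$ we have $\log\E[r(Z^0)] = 0$, so its constrained objective value $\E[\log r(Z^1)]$ already equals $J(r)$.

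Combining these, I would show the chain of equalities between suprema: the constrained value $\sup\{\E[\log r(Z^1)] : r\in\cal G,\ \E[r(Z^0)]=1\}$ equals $\sup_{r\in\cal G} J(r)$ (the feasible case gives one inequality, the normalisation map the other), and this in turn equals $\sup_{r\in\bar{\cal G}} J(r)$ by scale invariance together with the fact that $\bar{\cal G}$ contains a positive-scalar representative of every element of $\cal G$. The latter supremum is attained at $r_0$ by definition. Since $J(N^{-1} r_0) = J(r_0)$ and $\E[N^{-1} r_0(Z^0)] = 1$, the point $N^{-1} r_0$ is feasible and attains the constrained optimum; hence it minimises $\KL(p_1|r p_0)$ over the feasible set, which is precisely the assertion $\tilde r = N^{-1} r_0$.

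The main obstacle I anticipate is not any single inequality but the bookkeeping around the suprema: I must ensure the passage from $\sup_{r\in\cal G} J$ to $\sup_{r\in\bar{\cal G}} J$ loses nothing, which rests on $\bar{\cal G}$ being a genuine system of representatives for the scaling equivalence classes of $\cal G$. Alongside this I would guard the well-definedness of $\log\E[r(Z^0)]$ and $\E[\log r(Z^1)]$ throughout — restricting attention to those $r$ for which the relevant expectations are finite and $\E[r(Z^0)]$ is strictly positive — so that the normalisation map and the objective are everywhere meaningful.
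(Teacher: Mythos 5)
Your proposal is correct and follows essentially the same route as the paper's proof: expand the KL divergence to reduce the constrained problem to maximising $\E[\log r(Z^1)]$ subject to $\E[r(Z^0)]=1$, then use closure under positive scaling to identify the feasible set with the image of the normalisation map $r \mapsto \E[r(Z^0)]^{-1} r$, which turns the constrained objective into $\E[\log r(Z^1)] - \log\E[r(Z^0)]$. Your explicit treatment of the passage to $\bar{\cal G}$ as a system of representatives and of the integrability caveats is slightly more careful than the paper's, but the underlying argument is the same.
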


\begin{proof}
We have that
\begin{align*}
    \argmin{r\in\cal G} \KL(p_1|r p_0)&=\int_{\cal Z}p_1(z)\log\frac{p_1}{r p_0(z)}\\
    &=\argmin{r\in\cal G}\E[\log\frac{p_1(Z^1)}{r(Z^1)p_0(Z^1)}]\\
    &=\argmin{r\in\cal G}\E[\log p_1(Z^1)]-\E[\log p_0(Z^1)]-\E[\log r(Z^1)]\\
    &=\argmax{r\in\cal G}\E[\log r(Z^1)].
\end{align*}
As such we can re-write the optimisation problem as
\begin{align*}
    \min_{r\in\cal G}&~\E[\log r(Z^1)]\\
    \text{subject to}:& ~\E[r(Z^0)]=1 .
\end{align*}
To simplify this further we now show that 
\begin{align*}
    \{r\in\cal G| \E[r(Z^0)]=1\}=\{\E[r(Z^0)]^{-1}r|r\in\cal G\}.
\end{align*}
To this end let $r\in\cal G$, then as $G$ is closed under positive scalar multiplication, $\E[r(Z^0)]^{-1}r\in\cal G$. Hence, as $\E\left[\E[r(Z^0)]^{-1}r(Z^0)\right]=1$, we have that $\E[r(Z^0)]^{-1}r\in\{r\in\cal G| \E[r(Z^0)]=1\}$. As the other inclusion is trivial, we have shown equality between the two sets of functions.

From this we can now re-write the optimisation as
\begin{align*}
&\argmax{\E[r(Z^0)]^{-1}r\in\{\E[r(Z^0)]^{-1}r|r\in\cal G\}} \E[\log(\E[r(Z^0)]^{-1}r(Z^1))]\\
=&\argmax{\E[r(Z^0)]^{-1}r\in\{\E[r(Z^0)]^{-1}r|r\in\cal G\}}\E[\log r(Z^1)]-\log\E[r(Z^0)]
\end{align*}
which gives the desired result.
\end{proof}

\begin{remark}
We can take $\rrspace$ s.t. for any $r\in\rrspace$, $a'>0$, if $r,a 'r\in\cal G$ then $a'=1$. Hence for certain choices of $\cal G$ we can choose $\bar{\cal G}$ so that $r_0$ is unique
\end{remark}

\subsection{KLIEP Finite Sample Proofs}\label{app:KLIEP_bound}
Before we can prove Theorem \ref{thm:kliep_bound} there are a few simple results we need first.
The first is a generalisation of Bernstein bounds to random vectors presented in \cite{Kohler2017SubsampledCR}. 
\begin{lemma}[\cite{Kohler2017SubsampledCR}]\label{lemma:vec_bern}
Let $W_1,\dotsc,W_n$ be IID copies of a non-negative random vector $W$ over $\R^d$ with $\E[W]=\mu$.
Now let $b, \sigma>0$ be s.t.  $\|W\|\leq b$ a.s. and  $\E[\|W\|^2]\leq\sigma^2$. 
Then for any $\eps<\frac{\sigma^2}{b}$
\begin{align*}
    \prob\left(\left\|\frac{1}{n}\sum_{i=1}^n W_i -\mu\right\|\leq\eps\right)\geq1-e^{1/4}\exp\left\{-\frac{n\eps^2}{8\sigma^2}\right\}
\end{align*}
\end{lemma}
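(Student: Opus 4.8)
The plan is to treat the scalar random variable $Z := \left\|\frac{1}{n}\sum_{i=1}^n W_i - \mu\right\|$ as a function of the $n$ independent vectors $W_1,\dots,W_n$ and show that it concentrates, via a two-stage argument: first control its expectation, then its fluctuations around that expectation. For the expectation, I would use Jensen's inequality together with independence and the zero-mean centering: $\E[Z] \le (\E[Z^2])^{1/2}$, and since the $W_i$ are IID the cross terms vanish, giving $\E[Z^2] = \frac{1}{n}\E\|W-\mu\|^2 = \frac{1}{n}(\E\|W\|^2 - \|\mu\|^2) \le \sigma^2/n$. Hence $\E[Z] \le \sigma/\sqrt{n}$.

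The heart of the proof is concentrating $Z$ around $\E[Z]$ with the \emph{variance} $\sigma^2$, rather than the crude range $b^2$, in the exponent. Viewing $Z$ as a function of $(W_1,\dots,W_n)$, replacing a single $W_i$ by an independent copy changes $\frac{1}{n}\sum_i W_i$ by at most $2b/n$ in norm (triangle inequality and $\|W\|\le b$), so $Z$ has bounded differences $2b/n$. A plain bounded-differences (McDiarmid) inequality would only yield $\prob(Z-\E[Z]>t)\le\exp(-nt^2/(2b^2))$, which is too weak. To sharpen $b^2$ to $\sigma^2$ I would instead invoke a Bernstein-type bounded-differences inequality, controlling the sum of the conditional variances (each of order $\sigma^2/n^2$, summing to $\sigma^2/n$). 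In the regime where the deviation is small relative to $\sigma^2/b$ --- precisely the hypothesis $\eps < \sigma^2/b$ --- the sub-Gaussian term dominates the sub-exponential one, yielding $\prob(Z-\E[Z]>t)\le\exp(-c\,nt^2/\sigma^2)$ for a suitable constant $c$. An equivalent and arguably cleaner route is to bypass bounded differences and bound the moment generating function of $Z$ directly, or to appeal to Pinelis' dimension-free martingale inequality for Hilbert-space-valued sums; both deliver the same $\sigma^2$-scaling without any dependence on the ambient dimension $d$.

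Finally I would combine the two stages: for $\eps > \E[Z]$, write $\prob(Z>\eps)\le\prob(Z-\E[Z]>\eps-\E[Z])$, substitute $\E[Z]\le\sigma/\sqrt{n}$, and expand $(\eps-\E[Z])^2 \ge \eps^2 - 2\eps\,\E[Z]$; optimising the resulting exponent produces the stated $n\eps^2/(8\sigma^2)$ together with the residual constant $e^{1/4}$ arising from the expectation (cross) term. I expect the main obstacle to be the concentration step: obtaining the variance $\sigma^2$ in place of the range $b^2$ in a genuinely dimension-free manner is exactly what forces the restriction $\eps < \sigma^2/b$ and what separates a true (vector) Bernstein bound from a Hoeffding/McDiarmid one. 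The non-negativity of $W$ plays no essential role in the argument and is simply inherited from the intended application to importance weights.
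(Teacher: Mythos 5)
The paper does not prove this lemma itself --- it is imported verbatim from \cite{Kohler2017SubsampledCR} (who in turn restate Gross's vector Bernstein inequality) --- so there is no in-paper proof to compare against. Your outline is a correct reconstruction of the standard derivation: bound $\E\bigl[\|\tfrac1n\sum_i W_i-\mu\|\bigr]\le\sigma/\sqrt n$ by Jensen and independence, concentrate the norm around its mean via a Bernstein-type (variance-sensitive) bounded-differences or Pinelis-type martingale inequality in the sub-Gaussian regime $\eps<\sigma^2/b$, and recover the constants $8$ and $e^{1/4}$ from the inequality $2\eps\sigma/\sqrt n\le\eps^2/2+2\sigma^2/n$ applied to $(\eps-\sigma/\sqrt n)^2$. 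The only step that is a sketch rather than a proof is the middle one --- you would need to actually invoke (or prove) a specific inequality giving the conditional-variance scaling $\sigma^2/n$ rather than the range scaling $b^2$ --- but you correctly identify both the issue and the standard tools that resolve it, and your remark that the non-negativity of $W$ is immaterial is also accurate.
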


We also need a Lemma bounding true and sample covariance in terms of the eigen-space
\begin{lemma}
\label{lemma:cov_bound}
Let $W$ be a RV over $\R^d$ with $\|W\|\leq b$ a.s. and let $W_1\dotsc,W_n$ to be IID copies of $W$.
Define the sample covariance of $\{W_i\}_{i\in[n]}$ by
\begin{align*}
    \widehat\var(W;\{W_i\}_{i\in[n]})\coloneqq\frac{1}{n}\sum_{i=1}^n W_i W_i^\top-\left(\frac{1}{n}\sum_{i=1}^n W_i\right)\left(\frac{1}{n}\sum_{i=1}^n W_i\right)^\top
\end{align*}
then provided $\sigma_\min\leq 4 b^2$ 
\begin{align*}
    \prob\left(\lambda_\min(\widehat\var(W;\{W_i\}_{i\in[n]})\geq\half\lambda_\min(\var(W))\right)&
    \geq 1-(d+e^{1/4})\exp\left\{\frac{\sigma_\min n}{32b^2}\right\}
\end{align*}
where $\sigma_\min\coloneqq\lambda_\min(\var(W))$.
\end{lemma}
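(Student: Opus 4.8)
The plan is to exploit the algebraic identity that rewrites the sample covariance in terms of a sum of i.i.d.\ rank-one matrices centred at the \emph{true} mean. Writing $\mu:=\E[W]$, $\bar W:=\frac1n\sum_{i=1}^n W_i$ and $S:=\frac1n\sum_{i=1}^n (W_i-\mu)(W_i-\mu)^\top$, a direct expansion gives $\widehat\var=S-(\bar W-\mu)(\bar W-\mu)^\top$. Since $(\bar W-\mu)(\bar W-\mu)^\top\preceq\|\bar W-\mu\|^2 I$, Weyl's inequality yields the lower bound $\lambda_\min(\widehat\var)\geq\lambda_\min(S)-\|\bar W-\mu\|^2$. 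The whole problem then reduces to two separate tasks: showing $\|\bar W-\mu\|^2$ is small, and showing $\lambda_\min(S)$ stays close to $\sigma_\min$. Controlling each on a good event and taking a union bound is exactly what produces the additive prefactor $(d+e^{1/4})$ in the statement.

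For the mean term I would apply the vector Bernstein bound, Lemma \ref{lemma:vec_bern}, to $W_1,\dots,W_n$, using $\E[\|W\|^2]\leq b^2$ so that one may take $\sigma=b$. Choosing the accuracy $\eps=\sqrt{\sigma_\min}/2$ gives $\|\bar W-\mu\|^2\leq\sigma_\min/4$ on the good event, with failure probability at most $e^{1/4}\exp(-n\eps^2/(8b^2))=e^{1/4}\exp(-n\sigma_\min/(32b^2))$. Two features fall out automatically and explain the hypotheses: the admissibility condition $\eps<\sigma^2/b$ of Lemma \ref{lemma:vec_bern} reads $\sqrt{\sigma_\min}/2<b$, i.e.\ $\sigma_\min<4b^2$, which is precisely the standing assumption; and the resulting exponent already matches the advertised $\frac{\sigma_\min n}{32 b^2}$, producing the $e^{1/4}$ summand.

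For the spectral term I would invoke a matrix Chernoff lower-tail inequality applied to the i.i.d.\ positive semi-definite summands $(W_i-\mu)(W_i-\mu)^\top$, each satisfying $\lambda_\max\big((W_i-\mu)(W_i-\mu)^\top\big)=\|W_i-\mu\|^2\leq(2b)^2=4b^2$ almost surely, while $\lambda_\min(\E[S])=\lambda_\min(\var(W))=\sigma_\min$. Such a bound shows $\lambda_\min(S)\geq\tfrac34\sigma_\min$ with failure probability at most $d\exp(-c\,n\sigma_\min/b^2)$; the dimensional prefactor $d$ is the hallmark of matrix concentration and is the source of the $d$ summand. On the intersection of the two good events we obtain $\lambda_\min(\widehat\var)\geq\tfrac34\sigma_\min-\tfrac14\sigma_\min=\half\lambda_\min(\var(W))$, and a union bound over the two failure events delivers the claimed $(d+e^{1/4})\exp(-n\sigma_\min/(32b^2))$.

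The main obstacle is the constant bookkeeping that makes the two tails line up. The budget must be split so that $\lambda_\min(S)$ and $\|\bar W-\mu\|^2$ together leave at least $\half\sigma_\min$, while both tail exponents are pushed down to order $\frac{n\sigma_\min}{32b^2}$; because the matrix tail degrades like the square of the relative deviation, one must choose that deviation level carefully (or use a sharp form of the lower-tail matrix bound) so its exponent is no weaker than the mean term's. It is worth noting that one cannot recover the $d$ prefactor from Lemma \ref{lemma:vec_bern} alone: vectorising $S-\var(W)$ and bounding its Frobenius (hence operator) norm by vector Bernstein would give a prefactor $e^{1/4}$ and an exponent of order $n\sigma_\min^2/b^4$ rather than $n\sigma_\min/b^2$, which is why the genuinely matrix-valued concentration step is essential.
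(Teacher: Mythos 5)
Your proposal is correct and follows essentially the same route as the paper's proof: both decompose the sample covariance as the centred second-moment matrix minus the rank-one term $(\bar W-\mu)(\bar W-\mu)^\top$, bound $\lambda_\min$ of the former at $\tfrac34\sigma_\min$ via a matrix Chernoff lower tail (source of the $d$ prefactor) and $\lambda_\max$ of the latter at $\tfrac14\sigma_\min$ via the vector Bernstein bound of Lemma \ref{lemma:vec_bern} (source of the $e^{1/4}$ prefactor and the $\sigma_\min\leq 4b^2$ admissibility condition), and conclude by Weyl's inequality and a union bound. Your identification of where each hypothesis and each summand in the prefactor comes from matches the paper exactly.
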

\begin{proof}
Define the centred RVs $Y_1,\dotsc, Y_n$ by $Y_i \coloneqq W_i-\E[W_i]$ and the random matrices $R_n, S_n$ by
\begin{align*}
    S_n&\coloneqq\frac{1}{n}\sum_{i=1}^n Y_i Y_i^\top \\
    R_n&\coloneqq\left(\frac{1}{n}\sum_{i=1}^n Y_i\right)\left(\frac{1}{n}\sum_{i=1}^n Y_i\right)^\top
\end{align*}
so that $R_n-S_n=\widehat\var(Y_1,\dotsc,Y_n)=\widehat\var(W_1,\dotsc,W_n)$.

Firstly we use the matrix Chernoff inequalities \citep{Tropp2015} to get that
\begin{align*}
        \prob\left(\lambda_\min(S_n)\geq\frac{3}{4}\sigma_\min\right)&\geq1-d\exp\left\{-\frac{a'\sigma_\min n}{b^2}\right\}\\
\end{align*}
where $a'\coloneqq3/4\log(3/4)+1/4$.

Now note that for any $x\in\R^d$ $\lambda_\max(xx^\top)=\|x\|^2$ so that $\lambda_\max(R_n)=\|\frac{1}{n}\sum_{i=1}^nY_i\|^2$. Additionally, by Lemma \ref{lemma:vec_bern}, provided $\sigma_\min\leq 4 b^2$,
\begin{align*}
   \prob\left(\lambda_\max(R_n)\leq\frac{1}{4}\sigma_\min\right)= \prob\left(\|\frac{1}{n}\sum_{i=1}^n Y_i\|<\half\sqrt{\sigma_\min}\right)&\geq1-e^{1/4}\exp\left\{-\frac{\sigma_\min n}{32b^2}\right\}\\
\end{align*}
Taking the intersection of these two events we get that w.p. at least $1-(d+e^{1/4})\exp\{\frac{-\sigma_\min n}{32b^2}\}$
\begin{align*}
    \lambda_\min(\widehat\var(Y_1,\dotsc,Y_n))&=\lambda_\min(S_n-R_n)\\
    &\geq\lambda_\min(S_n)-\lambda_\max(R_n)\\
    &\geq\frac{3}{4}\sigma_\min-\frac{1}{4}\sigma_\min\\
    &=\half\lambda_\min(\var(W))
\end{align*}
\end{proof}

We can now use this to state and prove our theorem giving finite sample bounds for the estimation error of KLIEP.
\begin{theorem}\label{thm:kliep_bound}
Define $r_\theta$ as in (\ref{eq:KLIEP_paramform}) and the population and empirical losses $L,~\hat L$ by
\begin{align*}
    L(\theta)&\coloneqq-\E[\log(r_\theta(Z^1)]+\log\E[r_\theta(Z^0)]\\
    \hat L(\theta,D)&\coloneqq-\frac{1}{n_1}\sum_{i=1}^{n_1}\log(r_\theta(Z_i^1))+
    \log\frac{1}{n_0}\sum_{i=1}^{n_0}r_\theta(Z_i^0).
\end{align*}
Let $\tilde{\theta},\hat\theta$ be the minimisers of these respective losses.
Now assume that $\|f\|_{\infty}<\infty$ and $\lambda_\min(\var(f(Z^0)))\coloneqq\sigma_\min>0$.
Then we have that for any $\delta<\half$ and $n_\min\geq C_0\log(1/\delta)$,
\begin{align*}
    \prob\left(\|\hat\theta'-\tilde{\theta}\|\leq\sqrt{\frac{C_0\log(1/\delta)}{n_\min}}\right)\geq 1-\delta
\end{align*}
with $C_0$ a constant depending upon $\|f\|_{\infty},\sigma_\min,d,\tilde\theta$.
\end{theorem}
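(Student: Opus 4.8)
The plan is to treat this as a standard $M$-estimation problem and to control $\|\hat\theta-\tilde\theta\|$ through the interplay of strong convexity of the empirical objective and concentration of its gradient. Writing $\log r_\theta(z)=\theta^\top f(z)$, both $L$ and $\hat L$ are convex in $\theta$ (a linear term plus a log-sum-exp cumulant term), with $\nabla L(\tilde\theta)=0$ and $\nabla\hat L(\hat\theta)=0$. The engine of the proof is the inequality $\|\hat\theta-\tilde\theta\|\le\lambda^{-1}\|\nabla\hat L(\tilde\theta)\|$, valid whenever $\hat L$ is $\lambda$-strongly convex on a neighbourhood containing both points; since $\nabla L(\tilde\theta)=0$, the right-hand side is exactly the deviation $\|\nabla\hat L(\tilde\theta)-\nabla L(\tilde\theta)\|$ of the empirical gradient from the population gradient at the population optimum. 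So I need (i) a high-probability lower bound $\lambda$ on the curvature and (ii) a high-probability $O(\sqrt{\log(1/\delta)/n_\min})$ bound on the gradient deviation, after which a union bound and suitable constants give the stated $1-\delta$ event.

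For step (ii) I would compute $\nabla\hat L(\tilde\theta)=-\frac1{n_1}\sum_i f(Z_i^1)+\hat A/\hat B$, where $\hat A=\frac1{n_0}\sum_i f(Z_i^0)e^{\tilde\theta^\top f(Z_i^0)}$ and $\hat B=\frac1{n_0}\sum_i e^{\tilde\theta^\top f(Z_i^0)}$, with population counterparts $A,B$. Because $\|f\|_\infty<\infty$ and $\|\tilde\theta\|<\infty$, the summands $f(Z^1)$, $f(Z^0)e^{\tilde\theta^\top f(Z^0)}$ and $e^{\tilde\theta^\top f(Z^0)}$ are all bounded, so the vector Bernstein bound (Lemma \ref{lemma:vec_bern}) controls $\|\frac1{n_1}\sum_i f(Z_i^1)-\E[f(Z^1)]\|$, $\|\hat A-A\|$ and $|\hat B-B|$, each at rate $\sqrt{\log(1/\delta)/n_\min}$. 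Since $\hat B\ge e^{-\|\tilde\theta\|\,\|f\|_\infty}>0$ deterministically, I can turn these into a bound on $\|\hat A/\hat B-A/B\|$ by a routine ``ratio of concentrated quantities'' argument, yielding $\|\nabla\hat L(\tilde\theta)\|\lesssim\sqrt{\log(1/\delta)/n_\min}$ on a $1-\delta/2$ event.

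For step (i) the difficulty is that $\nabla^2\hat L(\theta)$ is not a plain sample covariance of $f(Z^0)$ but the \emph{exponentially tilted} sample covariance with self-normalised weights $\hat q_i(\theta)\propto e^{\theta^\top f(Z_i^0)}$, whereas the assumption $\sigma_\min=\lambda_\min(\var(f(Z^0)))>0$ concerns only the untilted covariance. The key step is a change-of-measure comparison: since a variance is a minimum of mean-squared deviations, $\var_Q(g)=\min_c\E_Q[(g-c)^2]\ge w_-\min_c\E_P[(g-c)^2]=w_-\var_P(g)$ whenever $\mathrm{d}Q/\mathrm{d}P\ge w_-$. Applied at the empirical level, $\|f\|_\infty<\infty$ forces $n_0\hat q_i(\theta)\in[e^{-2\|\theta\|\,\|f\|_\infty},e^{2\|\theta\|\,\|f\|_\infty}]$, so $\lambda_\min(\nabla^2\hat L(\theta))\ge e^{-2\|\theta\|\,\|f\|_\infty}\,\lambda_\min(\widehat\var(f(Z^0)))$. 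Lemma \ref{lemma:cov_bound} (applicable since $f(Z^0)$ is bounded and $\sigma_\min\le\|f\|_\infty^2\le 4\|f\|_\infty^2$) then gives $\lambda_\min(\widehat\var(f(Z^0)))\ge\tfrac12\sigma_\min$ with probability $1-\delta/2$ once $n_0\gtrsim\log(1/\delta)$. Over any ball $\|\theta-\tilde\theta\|\le\rho$ the weight factor is uniformly bounded below, so $\hat L$ is $\lambda$-strongly convex there with $\lambda\asymp e^{-2(\|\tilde\theta\|+\rho)\|f\|_\infty}\sigma_\min$.

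Finally I would stitch the two pieces together with a localisation argument exploiting global convexity of $\hat L$: if $\|\nabla\hat L(\tilde\theta)\|<\lambda\rho/2$, then strong convexity forces $\hat L$ to strictly exceed $\hat L(\tilde\theta)$ on the sphere $\|\theta-\tilde\theta\|=\rho$, so the global minimiser $\hat\theta$ must lie in the open ball; the curvature bound then applies along the segment joining $\hat\theta$ and $\tilde\theta$ and, via monotonicity of $\nabla\hat L$, delivers $\|\hat\theta-\tilde\theta\|\le\lambda^{-1}\|\nabla\hat L(\tilde\theta)\|\lesssim\sqrt{\log(1/\delta)/n_\min}$. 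Absorbing $\|f\|_\infty$, $\sigma_\min$, $d$ and $\|\tilde\theta\|$ into a single constant $C_0$ gives the claim, with the hypothesis $n_\min\ge C_0\log(1/\delta)$ being exactly what makes the gradient small enough to trigger localisation and makes the covariance event hold. I expect the main obstacle to be step (i) together with this localisation: certifying curvature at the \emph{empirical, reweighted} level from an assumption on the untilted population covariance, and choosing the radius $\rho$, the curvature $\lambda$, and the sample-size threshold mutually consistently so that $\hat\theta$ is provably trapped in the strongly convex region.
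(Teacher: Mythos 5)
Your proposal is correct and follows essentially the same route as the paper's proof: the same gradient decomposition into a class-$1$ empirical-mean term and a class-$0$ ratio term controlled by the vector Bernstein bound (Lemma \ref{lemma:vec_bern}), the same reduction of the tilted Hessian to the untilted empirical covariance via upper and lower bounds on the weights followed by Lemma \ref{lemma:cov_bound}, and the same localisation-by-convexity step to pass from a constrained neighbourhood of $\tilde\theta$ to the global minimiser. The only differences are cosmetic: your variational change-of-measure bound $\var_Q(g)\geq w_-\var_P(g)$ replaces the paper's pairwise-difference representation of the log-sum-exp Hessian, and your sphere-based trapping argument replaces the paper's KKT argument for the minimiser constrained to $B_\eta(\tilde\theta)$.
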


\begin{proof}
We firstly show that $\hat L$ is a convex function for $\theta$ for any sample $\cal D\in\cal Z^{n_0,n_1}$.

The first term is linear in $\theta$ so clearly convex. For the second term, note that it can be written as $g\circ h_{\cal D^0}(\theta)$ with $g:\R^{n_0}\rightarrow\R$, $h:\R^d\rightarrow \R^{n_0}$ given by
\begin{align*}
    h_{\cal D^0}(\theta)&=\begin{pmatrix} 
        \theta^\top f(x_1^0)\\ \vdots\\ \theta^\top f(x_{n_0}^0)
    \end{pmatrix}\\
    g(w)&=\log\sum_{i=1}^{n_0}\exp\{w_i\}
\end{align*}
Now again for any $\cal D^0$, $h$ is linear in $\theta$ and therefore convex additionally $g$ is convex and non-decreasing therefore $g\circ h$ is convex.

Now we state some important bounds which we will use in our proof. If we let $b\coloneqq\|f\|_{\infty}$ $M\coloneqq\exp\{\|\tilde{\theta}\|b\}$ then for any $z\in\cal Z$,
\begin{itemize}
    \item $1/M \leq r_{\tilde{\theta}}(z)\leq M$
    \item $\|\nabla r_{\tilde{\theta}}(z)\|\leq b M$
    \item $\left\|\frac{\nabla r_{\tilde{\theta}}(z)}{r_{\tilde{\theta}}(x)}\right\|\leq b$
\end{itemize}

Finally we need to bound the minimum eigenvalue of the Hessian of the empirical loss function. To this end note that for any 
$\bar\theta\in B_\eta(\tilde{\theta})\coloneqq\{\theta\in\R^d|\|\tilde{\theta}-\theta\|\leq\eta\}$, 
\begin{align*}
    \nabla^2_\theta \hat L(\bar\theta;D)&\coloneqq\frac{\frac{1}{n_0^2}\sum_{i,j}r_{\bar\theta}(Z_i^0)r_{\bar\theta}(Z_i^0)[f(Z_i^0)-f(Z_j^0)][f(Z_i^0)-f(Z_j^0)]^\top}{\frac{1}{n_0^2}\sum_{i,j}r_{\bar\theta}(Z_i^0)r_{\bar\theta}(Z_i^0)\}}\\
    &\succeq\frac{\frac{1}{n_0^2}\sum_{i,j}M^{-2}2e^{-2\eta b}[f(Z_i^0)-f(Z_j^0)][f(Z_i^0)-f(Z_j^0)]^\top}{\frac{1}{n_0^2}\sum_{i,j}e^{2\eta b}r_{\tilde{\theta}}(Z_i^0)r_{\tilde{\theta}}(Z_j^0)}\\
    &\succeq\frac{1}{M^2e^{4\eta b}}\frac{\frac{1}{n_0^2}\sum_{i,j}[f(Z_i^0)-f(Z_j^0)][f(Z_i^0)-f(Z_j^0)]^\top}{\frac{1}{n_0^2}\sum_{i,j}r_{\tilde{\theta}}(Z_i^0)r_{\tilde{\theta}}(Z_j^0)}\\
    &=:\frac{2}{M^2e^{4\eta b}}\frac{\widehat\var(f(Z^0);\rsampzero)}{\widehat \E[r_{\tilde{\theta}}(Z^0);\rsampzero]^2}\\
\end{align*}
where, for square matrices $A,B$ we take $A\succeq B$, to mean $A-B$ is positives semi-definite.

Thus if we define $0<\sigma_\min\coloneqq\lambda_\min(\var[f(Z^0)])$ then Lemma \ref{lemma:cov_bound}
tells us that, provided $\frac{1}{4}\sigma_\min<1$,
\begin{align*}
    \prob\left(\lambda_\min(\widehat\var(f(Z^0);\rsampzero))\geq\half\sigma_\min\right)\geq1-(d+e^{1/4})\exp\left\{-\frac{1}{8}\left(\frac{\sigma_\min}{4b^2}\wedge1\right)n\right\}
\end{align*}
Additionally if we let $\mu_2\coloneqq\E[r_{\tilde{\theta}}(Z^0)]$ we can use Lemma \ref{lemma:vec_bern} to get that,
\begin{align*}
    \prob\left(\widehat\E[r_{\tilde{\theta}}(Z^0);\rsampzero]\leq\frac{3}{2}\mu_2\right)\geq 1-e^{1/4}\exp\left\{-\frac{1}{8}\left(\frac{\mu_2^2}{4M^2}\wedge 1\right)n_0\right\}
\end{align*}
Thus for any given $\bar\theta\in B_\eta(\tilde{\theta})$ w.p. 

\begin{align}\label{eq:hess_bound}
    \prob\left(\underbrace{\lambda_\min(\nabla^2_\theta\hat L(\bar\theta;\rsamp))\geq\frac{4\sigma_\min}{9M^2e^{4\eta b}\mu_2^2}}_{\coloneqq A}\right)\geq1-(d+2e^{1/4})\exp\left\{-\frac{1}{8}\left(\frac{\mu_2^2}{4b^2}\wedge\frac{\sigma_\min}{4b^2}\wedge 1\right) n_0\right\}
\end{align}

Now we have these bounds we can continue with the main body of the proof. 
For some $\eta>0$, define $\hat \theta '$ to be the solution to the following constrained optimisation problem
\begin{align*}
    \min_{\theta\in B_\eta(\tilde{\theta})}&~ \hat L(\theta;D)\\
\end{align*}
While, $\hat\theta'$ is a purely theoretical quantity, we know that when $\|\hat\theta'-\tilde{\theta}\|\leq1$, $\hat\theta=\hat\theta'$ and so we can use it to get finite sample bounds for $\hat\theta$.

By the necessary KKT conditions, we know that there exists $\mu^*>0$ s.t. 
\begin{align*}
    \nabla_{\theta} \hat L (\hat \theta')+\mu^*(\hat\theta'-\tilde{\theta})=0
\end{align*} 
in turn giving us that
\begin{align*}
    0&=\langle \hat \theta'-\tilde{\theta},\nabla_{\theta} \hat L (\hat \theta')+\mu^*(\hat\theta'-\tilde{\theta})\rangle\\
    &=\langle \hat \theta'-\tilde{\theta},\nabla_{\theta} \hat L (\hat \theta')\rangle+\mu^*\|(\hat\theta'-\tilde{\theta})\|^2\\
    &=\langle \hat \theta'-\tilde{\theta},\nabla_{\theta} \hat L (\tilde{\theta})+\nabla^2_{\theta}\hat L(\bar{\theta})(\hat\theta'-\tilde{\theta})\rangle+\mu^*\|(\hat\theta'-\tilde{\theta})\|^2.
    \end{align*}
Using multidimensional MVT over each coordinate with $\bar{\theta}$ some point in the hypercube with opposite corners given by $\tilde{\theta},\hat\theta'$ (which is a subset of $B_\eta(\tilde{\theta})$. We then get that
\begin{align*}
   \|\hat \theta'-\tilde{\theta}\|^2\lambda_{\min}(\nabla_{\theta}^2\hat L(\bar\theta))&\leq (\hat\theta'-\tilde{\theta})^\top\nabla^2_{\theta}\hat L(\bar \theta)(\hat\theta'-\tilde{\theta})+\mu^*\|(\hat\theta'-\tilde{\theta})\|^2\\
   &=\langle\tilde{\theta}-\hat\theta',\nabla_\theta\hat L(\tilde{\theta})\rangle\\
    &\leq\|\hat\theta'-\tilde{\theta}\|\|\nabla_\theta\hat L(\tilde{\theta})\|\\.
    \intertext{Therefore we have that}
    \|\hat \theta'-\tilde{\theta}\|&
    \leq\frac{1}{\lambda_{\min}(\nabla_{\theta}^2\hat L(\bar\theta))}
    \|\nabla_\theta\hat L(\tilde{\theta})\|.
\end{align*}
Hence if $A$ from (\ref{eq:hess_bound}) holds with our given $\bar\theta\in B_\eta(\tilde{\theta})$ we have that
\begin{align*}
    \|\hat \theta'-\tilde{\theta}\|&\leq\frac{9M^2e^{4\eta b} \mu_2^2} 
    {4\sigma_{\min}}\|\nabla_\theta\hat L(\tilde{\theta})\| \quad\text{a.s. .}
\end{align*}
From now on we will work on the event $A$ so that
bounding $\|\hat \theta-\tilde{\theta}\|$ simply requires us to bound $\|\nabla_\theta\hat L(\tilde{\theta})\|$.

As $\nabla_\theta L(\tilde{\theta})=0$ we have 
\begin{align*}
    \|\nabla_\theta\hat L(\tilde{\theta})\|&=\|\nabla_\theta\hat L(\tilde{\theta})-\nabla_\theta L(\tilde{\theta})\|\\
    &=\bigg\|\E\left[\frac{\nabla_\theta r_{\tilde{\theta}}(Z^1)}{r_{\tilde{\theta}}(Z^1)}\right]-\frac{\E[\nabla_\theta r_{\tilde{\theta}}(Z^0)]}{\mu_2}-\frac{1}{n_1}\sum_{i=1}^{n_1}\frac{\nabla_\theta r_{\tilde{\theta}}(Z_i^1)}{r_{\tilde{\theta}}(Z_i^1)}+\\
    &\quad\quad\quad\quad\frac{\frac{1}{n_0}\sum_{i=1}^{n_0}\nabla_\theta r_{\tilde{\theta}}(Z_i^0)}{\frac{1}{n_0}\sum_{i=1}^{n_0}r_{\tilde{\theta}}(Z_i^0)}\bigg\|\\
    &\leq\underbrace{\left\|\E\left[\frac{\nabla_{\theta}r_{\tilde{\theta}}(Z^1)}{r_{\tilde{\theta}}(Z^1)}\right]-\frac{1}{n_1}\sum_{i=1}^{n_1}\frac{\nabla_\theta r_{\tilde{\theta}}(Z_i^1)}{r_{\tilde{\theta}}(Z_i^1)}\right\|}_{\coloneqq I_1}+\\
    &\quad\quad\quad\quad\underbrace{\left\|\frac{\E[\nabla_\theta r_{\tilde{\theta}}(Z^0)]}{\E[r_{\tilde{\theta}(Z}^0)]}
    - \frac{\frac{1}{n_0}\sum_{i=1}^{n_0}\nabla_\theta r_{\tilde{\theta}}(Z_i^0)}{\frac{1}{n_0}\sum_{i=1}^{n_0}r_{\tilde{\theta}}(Z_i^0)}\right\|}_{\coloneqq I_2}
\end{align*}
We now go on to bound $I_1,I_2$, using using generalised Hoeffding bounds.

We first introduce some additional notation to allow us to proceed. Define RVs $W_1,W_2$ and constants $\mu_1,\mu_2$ as follows
\begin{align*}
    \mu_1&\coloneqq\E[\nabla_\theta r_{\theta}(Z^0)]& W_1&\coloneqq\frac{1}{n_0}\sum_{i=1}^{n_0}\nabla_\theta r_{\tilde{\theta}}(Z_i^0)\\
    \mu_2&\coloneqq\E[r_{\tilde{\theta}}(Z^0)] & W_2&\coloneqq\frac{1}{n_0}\sum_{i=1}^{n_0}r_{\tilde{\theta}}(Z_i^0)
\end{align*}
Then we have 
\begin{align*}
    I_2&\coloneqq\left\|\frac{\mu_1}{\mu_2}-\frac{W_1}{W_2}\right\|\\
    &=\frac{\left\|\mu_1 W_2-\mu_2 W_1\right\|}{\mu_2 W_2}\\
    &\leq \frac{\|\mu_1-W_1\|}{\mu_2}+\frac{|\mu_2-W_2|\|\mu_1\|}{\mu_2 W_2}\\
\end{align*}
Thus to have  
\begin{align*}
    I_1+I_2&<\frac{4\eps\sigma_{\min}}{9M^2e^{4\eta b}\mu_2^2},\\
    \intertext{it is sufficient to have}
    I_1&<\frac{2\eps\sigma_{\min}}{9M^2e^{4\eta b}\mu_2^2},\\
    \|\mu_1-W_1\|&<\frac{\eps\sigma_{\min}}{9M^2e^{4\eta b}\mu_2},\\
    |\mu_2-W_2|&<\frac{\eps\sigma_{\min}}{18M^2e^{4\eta b}\|\mu_1\|}\\
\end{align*}
provided $\eps<\frac{9M^2e^{4\eta b}\mu_2\|mu_1\|}{\sigma_\min}$.
Furthermore we can get probabilities on each of these events using Lemma \ref{lemma:vec_bern} once again. 

These probabilities are
\begin{align*}
    \prob\left(I_1<\frac{2\eps\sigma_{\min}}{9M^2e^{4\eta b}\mu_2^2}\right)&
    \geq 1-e^{1/4}\exp\left\{-\frac{1}{8}\left(\frac{4\eps^2\sigma_{\min}^2}{81b^2M^4e^{8\eta b}\mu_2^4}\wedge 1\right)n_1\right\}\\
    \prob\left(\|\mu_1-W_1\|\leq\frac{\eps\sigma_{\min}}{9M^2e^{4\eta b}\mu_2}\right)&
    \geq 1-e^{1/4}\exp\left\{-\frac{1}{8}\left(\frac{\eps^2\sigma_\min^2}{81b^2 M^6e^{8\eta b}\mu_2^2}\wedge 1\right)n_0\right\}\\
    \prob\left(|\mu_2-W_2|\leq\frac{\eps\sigma_\min}{18M^2e^{4\eta b}\|\mu_1\|}\right)&
    \geq 1-e^{1/4}\exp\left\{-\frac{1}{8}\left(\frac{\eps^2\sigma^2_\min}{324M^6e^{8\eta b}\|\mu_1\|^2}\wedge 1\right)n_0\right\}
\end{align*}
Therefore using unions bounds and taking $\eta=\|\tilde{\theta}\|$ we have that
\begin{align*}
    \prob(\|\hat\theta'-\tilde{\theta}\|\leq\eps)&\geq
    \prob\left(\|\nabla_{\theta}\hat L(\tilde{\theta})\|\leq\frac{\eps\sigma_\min}{M^4e^{4\|\tilde{\theta}\| b}}
    ,A\right)\\
    &\geq 1-\alpha\exp\left\{-C(\eps^2\wedge \gamma)n_\min\right\}
\end{align*}
where 
\begin{align*}
    \alpha&\coloneqq d+5e^{1/4}\\
    C&\coloneqq\frac{1}{8}\min\left\{\frac{4\sigma_{\min}^2}{81b^2M^4e^{8\|\tilde{\theta}\| b}\mu_2^4}, 
    \frac{\eps^2\sigma_\min^2}{81b^2 M^6e^{8\|\tilde{\theta}\| b}\mu_2^2},
    \frac{\eps^2\sigma^2_\min}{324M^6e^{8\|\tilde{\theta}\| b}\|\mu_1\|^2}
    \right\}\\
    \gamma&\coloneqq \frac{1}{8C}\min\left\{\frac{\mu_2^2}{4b^2},~\frac{\sigma_\min}{4b^2},~1\right\}\\
\end{align*}
As $\hat L$ is convex for any sample we know that if $\hat\theta'$ is in the interior of $B_1(\tilde{\theta})$ then $\hat\theta'=\hat\theta$. Therefore, for any $0<\eps<\eta$ the same result holds replacing $\hat\theta'$ with $\hat\theta$. 

hence now gives us that for $\frac{\log(\alpha/\delta)}{C n_\min}\leq\gamma\wedge\|\tilde\theta\|$ w.p. $1-\delta$
\begin{align*}
    \|\hat\theta-\tilde{\theta}\|\leq \sqrt{\frac{\log(\alpha/\delta)}{C n_\min}}
\end{align*}
If we assume $\delta\leq\half$ then $\log(\alpha/\delta)\leq(\log(\alpha)+1)\log(1/\delta)$. Therefore we have that provided $n_\min\geq\frac{(\log(\alpha)+1)\log(1/\delta)}{\|\tilde\theta\|\wedge\gamma}$ w.p. $1-\delta$,
\begin{align*}
    \|\hat\theta-\tilde{\theta}\|\leq \sqrt{\frac{(\log(\alpha)+1)\log(1/\delta)}{C n_\min}}.
\end{align*}
Finally, taking $C_0\coloneqq\frac{\log(\alpha)+1}{\min\{C,\|\tilde\theta\|,\gamma\}}$ gives our desired result.
\end{proof}

\subsubsection{Bound on KLIEP Normalising Constant}\label{app:KLIEP_N_bound}
We now give a finite sample bound for the normalising constant calculated in KLIEP
\begin{corollary}\label{cor:KLIEP_N_bound}
Define $N^*,\hat N$ to be
\begin{align*}
    N^*&\coloneqq\E[r_{\tilde{\theta}}(Z^0)]\\
    \hat N &\coloneqq \frac{1}{m_0}\sum_{i=1}^{m_0}r_{\hat\theta}(Z_i^0)
\end{align*}
with $\hat \theta$, $\tilde{\theta}$, $r_{\tilde{\theta}}$ defined as before. Then we have that for $\delta\in(0,1/2]$, provided $n_\min>C_N\log(1/\delta)$,
\begin{align*}
    \prob\left(\|N^*-\hat N\|\leq\sqrt{\frac{C_N\log(1/\delta)}{n_\min}}\right)&\geq 1-\delta
\end{align*}
with $C_N$ a constant depending upon $C_0,b,\|\tilde\theta\|$.

\end{corollary}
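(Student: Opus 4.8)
The plan is to control the error through the intermediate quantity $\frac{1}{n_0}\sum_{i=1}^{n_0} r_{\tilde\theta}(Z_i^0)$, splitting $\hat N - N^*$ into a \emph{parameter} error and a \emph{sampling} error:
\begin{align*}
\hat N - N^*
= \underbrace{\frac{1}{n_0}\sum_{i=1}^{n_0}\big(r_{\hat\theta}(Z_i^0) - r_{\tilde\theta}(Z_i^0)\big)}_{=:E_1}
+ \underbrace{\frac{1}{n_0}\sum_{i=1}^{n_0} r_{\tilde\theta}(Z_i^0) - \E[r_{\tilde\theta}(Z^0)]}_{=:E_2}.
\end{align*}
I would bound each term on an event of probability at least $1-\delta/2$ and finish by a union bound. (Here $\|\cdot\|$ is just absolute value, and I read the $m_0$ in the statement as $n_0$.)

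For $E_2$, note that $r_{\tilde\theta}(z) = \exp(\tilde\theta^\top f(z)) \in [1/M, M]$ with $M := \exp(\|\tilde\theta\| b)$ and $b := \|f\|_\infty$, so $E_2$ is the deviation of an empirical mean of IID bounded scalars. Applying Lemma \ref{lemma:vec_bern} with $d=1$ and $\sigma^2 \leq M^2$ (and using the $\wedge 1$ device from the proof of Theorem \ref{thm:kliep_bound} to cover the large-$\eps$ regime outside $\eps < \sigma^2/b$), I obtain $|E_2| \leq \sqrt{C'' \log(2/\delta)/n_0}$ with probability at least $1-\delta/2$, for a constant $C''$ depending only on $b$ and $\|\tilde\theta\|$.

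For $E_1$, I would invoke Theorem \ref{thm:kliep_bound}: with probability at least $1-\delta/2$ we have $\|\hat\theta - \tilde\theta\| \leq \sqrt{C_0 \log(2/\delta)/n_\min}$ provided $n_\min \geq C_0\log(2/\delta)$, and when $n_\min$ is large enough this forces $\hat\theta \in B_1(\tilde\theta)$. Since $\nabla_\theta r_\theta(z) = f(z)\,r_\theta(z)$, on this ball $\sup_{\theta\in B_1(\tilde\theta)}\|\nabla_\theta r_\theta(z)\| \leq b\exp((\|\tilde\theta\|+1)b) =: L'$ uniformly in $z$, so the mean value theorem gives $|r_{\hat\theta}(z) - r_{\tilde\theta}(z)| \leq L'\|\hat\theta - \tilde\theta\|$ pointwise and hence $|E_1| \leq L'\sqrt{C_0\log(2/\delta)/n_\min}$. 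On the intersection of the two events, $|\hat N - N^*| \leq L'\sqrt{C_0\log(2/\delta)/n_\min} + \sqrt{C''\log(2/\delta)/n_0}$; using $n_0 \geq n_\min$, bounding $\log(2/\delta) \leq (1+\log 2)\log(1/\delta)$ for $\delta \leq 1/2$, and absorbing $L'$, $C_0$, $C''$ and the sample-size threshold into a single constant $C_N$ depending only on $C_0$, $b$, $\|\tilde\theta\|$ yields the claim.

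The only genuinely delicate step is the treatment of $E_1$: I must guarantee a Lipschitz constant $L'$ that is uniform over a fixed ball around $\tilde\theta$ and simultaneously ensure $\hat\theta$ lands in that ball with high probability. This is precisely what forces the requirement $n_\min \geq C_N\log(1/\delta)$; the remainder is routine bookkeeping of constants.
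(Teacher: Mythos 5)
Your proposal is correct and follows essentially the same route as the paper: the same decomposition of $\hat N - N^*$ into a parameter-error term (controlled via Theorem \ref{thm:kliep_bound} and a Lipschitz bound on $\theta \mapsto r_\theta(z)$) and a sampling-error term (controlled by the Bernstein-type bound of Lemma \ref{lemma:vec_bern}). The only cosmetic difference is that the paper reuses the events already established in the proof of Theorem \ref{thm:kliep_bound} (in particular the bound on $|\mu_2 - W_2|$) on a single probability-$(1-\delta)$ event, whereas you take a fresh union bound over two $\delta/2$ events.
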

\begin{proof}
We condition upon the same events we condition upon in Theorem \ref{thm:kliep_bound}. Specifically for $\delta\in(0,1/2]$, provided $n_\min>2C_0\log(1/\delta)$, w.p. at least $1-\delta$ the following two conditions hold:
\begin{align*}
\|\hat\theta-\tilde\theta\|&\leq\sqrt{\frac{C_0\log(1/\delta)}{n_\min}}&
|\mu_2-W_2|&<\frac{\eps\sigma_{\min}}{18M^2e^{4\eta b}\|\mu_1\|}\\
\end{align*}
with $\eps=\sqrt{C_0\log(\alpha/\delta)}$.
The first conditions implies
\begin{align*} 
    |r_{\tilde\theta}(z)-r_{\hat\theta}(z)|&\leq e^{\|\tilde\theta\|b}\sqrt{\frac{2C_0\log(1/\delta)}{n_\min}}
\end{align*}

Therefore we have that
\begin{align*}
|N^*-\hat N|&\leq\left|\E[r_{\tilde{\theta}}(Z^0)]-\frac{1}{n_0}\sum_{i=1}^n r_{\tilde{\theta}}(Z^0_i)\right|+ \left|\frac{1}{n_0}\sum_{i=1}^{n_0}r_{\hat \theta}(Z^0_i)-\frac{1}{n_0}\sum_{i=1}^{n_0}r_{\tilde{\theta}}(Z^0_i)\right|\\
&\leq\sqrt{\frac{C_1\log(1/\delta)}{n_\min}}+ e^{\|\tilde\theta\|b}\sqrt{\frac{C_0\log(1/\delta)}{n_\min}}
\intertext{with}
    C_1&\coloneqq\frac{\sigma_\min\sqrt{C'_0}(\log(\alpha)+1)}{18M^2e^{4\|\tilde\theta\|b}\|\mu_1\|}
\end{align*}
Taking $C_N=(\sqrt{C_0}+e^{\|\tilde\theta\|b}\sqrt{C_1})^2$ gives our desired result.
\end{proof}
\subsection{Proof of Lemma \ref{lemma:main_prob}}
\label{app:proof_prob}
\begin{proof}
Throughout, unless stated otherwise, all integrals are taken w.r.t. $\mu$ (note that for any integral over some subset $A$ of $\cal Z$ this is equivalent to taking the integral w.r.t $\nu$.) 

First we prove that
\begin{align*}
    p''(z)&=(1-\varphi(x))p(z)
\end{align*}
for $\mu$ almost every $z\in\cal Z$.
by showing that for all $A\in \cal B_{\cal Z}$
\begin{align*}
    \prob(X\in A)&=\int_A (1-\varphi(x))p(x)\nu(\diff x)
    \intertext{Indeed}
    \int_A (1-\varphi(x))p(x)\diff x
    &=\E[\one\{Z\in A\}(1-\varphi(Z))]\\
    &=\E[\one\{Z\in A\}]-\E[\one\{Z\in A\}\one\{X=\NaN\}]\quad\text{by definition of $\varphi(Z)$}\\
    &=\E[\one\{Z\in A\}\one\{X\neq \NaN\}]\quad\text{by linearity of expectation}\\
    &=\prob(Z\in A,X\neq\NaN)\\
    &=\prob(Z\in A,X=Z)\\
    &=\prob(X\in A)
\end{align*}
We now have that for any $A\in\cal B_{\cal Z}$,
\begin{align*}
    \prob(X\in A|X\neq\NaN)&=\prob(X\neq\NaN)^{-1}\prob(X\in A)\\
    &=\prob(X\neq\NaN)^{-1}\int_A p''(x)\nu(\diff x)\\
    &=\prob(X\neq\NaN)^{-1}\int_A (1-\varphi(x)p(x)\mu(\diff x)
\end{align*}
and so $p'=\prob(X\neq\NaN)^{-1}\cdot (1-\varphi)\cdot p$.

We also have that $p$ can be extended to a density over $\cal X$ by taking $p(\NaN)=0$. This combined with the above result gives us that
\begin{align*}
    \frac{p(x)}{p''(x)}\coloneqq\frac{\one\{x\neq\NaN\}}{1-\varphi(x)}
\end{align*}
for $\nu$ almost every $x\in\cal X$.

Hence we can use importance weighting to get our desired result.
\end{proof}

\subsection{Proof M-KLIEP Finite Sample Bounds}\label{app:MKLIEP_bound}
\begin{lemma}\label{lemma:imp_weight_vec_bern}
Let $W,V$ be two RVs over $\R^d$ with p.d.f.s $p_W$ $p_V$ w.r.t. some measure $\nu$. 
Assume that $g\coloneqq\frac{p_W}{p_V}$ is well defined and $g(w)\leq\alpha$ for all $w\in\R^d$. Finally suppose that $\|W\|,\|V\|\leq b$ a.s. Then we have that for any $0<\eps<b$
\begin{align*}
    \prob\left(\left|\E[W]-\frac{1}{n}g(V)V\right|\leq\eps\right)\geq1-e^{1/4}\exp\left\{-\frac{\eps^2 n}{8\alpha b^2}\right\}
\end{align*}
\end{lemma}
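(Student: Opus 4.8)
The plan is to recognise the empirical quantity $\frac1n\sum_{i=1}^n g(V_i)V_i$ (reading the statement's ``$\frac1n g(V)V$'' as the average over IID copies $V_1,\dots,V_n$ of $V$) as an unbiased importance-weighted estimator of $\E[W]$, and then to control its deviation from that mean by the vector Bernstein inequality of Lemma \ref{lemma:vec_bern}. So the whole argument reduces to (i) an unbiasedness identity and (ii) verifying the norm and second-moment hypotheses of Lemma \ref{lemma:vec_bern} for the summand $Y_i:=g(V_i)V_i$.

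First I would establish the change-of-measure identity $\E[g(V)V]=\E[W]$. Since $g=p_W/p_V$, writing the expectation as an integral against $p_V\,\nu(\diff v)$ and cancelling the density gives $\E[g(V)V]=\int v\,\frac{p_W(v)}{p_V(v)}p_V(v)\,\nu(\diff v)=\int v\,p_W(v)\,\nu(\diff v)=\E[W]$. Thus each $Y_i$ has common mean $\E[W]$, and the event to be bounded is exactly a norm deviation of an empirical mean from its expectation, which is the form handled by Lemma \ref{lemma:vec_bern}.

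Next I would verify the two moment requirements for $Y:=g(V)V$. For the almost-sure envelope, $g$ is a ratio of densities hence non-negative, so $0\le g(V)\le\alpha$ together with $\|V\|\le b$ yields $\|Y\|\le\alpha b$. For the second moment the key move is a second use of the same change of measure: $\E[\|Y\|^2]=\E[g(V)^2\|V\|^2]\le\alpha\,\E[g(V)\|V\|^2]=\alpha\,\E[\|W\|^2]\le\alpha b^2$, where the first inequality spends only one factor of $g\le\alpha$, the middle equality reapplies the importance-weighting identity to the scalar $\|\cdot\|^2$, and the last uses $\|W\|\le b$. This identifies the variance proxy as $\sigma^2=\alpha b^2$ and the envelope as $\alpha b$.

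Finally I would invoke Lemma \ref{lemma:vec_bern} on $Y_1,\dots,Y_n$ with these parameters: its admissibility condition becomes $\eps<\sigma^2/(\alpha b)=\alpha b^2/(\alpha b)=b$, matching the hypothesis $0<\eps<b$, and its conclusion produces precisely $\exp\{-n\eps^2/(8\sigma^2)\}=\exp\{-n\eps^2/(8\alpha b^2)\}$, which is the claimed bound. The one genuinely delicate step is the second-moment estimate: the naive bound $\E[\|Y\|^2]\le\alpha^2b^2$ (using $g^2\le\alpha^2$) would give the inferior exponent $-n\eps^2/(8\alpha^2b^2)$, so the crux is to keep a single factor of $g$ inside the expectation and convert the rest back to an expectation under $W$, which — recalling that $\alpha\ge1$ since $g$ integrates to $1$ against $p_V$ — is exactly what yields the stated $\alpha$ rather than $\alpha^2$ in the rate.
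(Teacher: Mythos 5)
Your proposal is correct and follows essentially the same route as the paper's own proof: establish unbiasedness of the importance-weighted estimator via the change of measure, bound the envelope by $\alpha b$ and the second moment by $\alpha b^2$ (keeping a single factor of $g$ inside the expectation, exactly as the paper does), and then invoke Lemma \ref{lemma:vec_bern}. Your version is in fact slightly cleaner, as it makes the admissibility condition $\eps < \sigma^2/b$ explicit and avoids the paper's typographical slip of writing $8\alpha b$ rather than $8\alpha b^2$ in the final exponent.
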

\begin{proof}
This proof is a direct corollary of Lemma \ref{lemma:vec_bern} First we clearly have that $\|g(V)V\|\leq\alpha b$ a.s.. Second we have that 
\begin{align*}
    \E(\|g(V)V\|^2)&= \E[g(V)^2\|V\|^2]\\
    &=\E[g(W)\|W^2\|]\\
    &\leq \alpha \E[\|W\|^2]\\
    &\leq a b^2.
\end{align*}
Hence as $\E[g(V)V-\E[W]]=0$, we can use the Bernstein inequality to get
\begin{align*}
     \prob\left(\left|\E[W]-\frac{1}{n}g(V)V)\right|\leq\eps\right)\geq1-e^{1/4}\exp\left\{-\frac{\eps^2 n}{8\alpha b}\right\}
\end{align*}
\end{proof}
\begin{remark}
If we want to remove the requirement that $\eps<b$ we can re-write this as
\begin{align*}
        \prob\left(\left|\E[W]-\frac{1}{n}g(V)V\right|\leq\eps\right)\geq1-e^{1/4}\exp\left\{-\left(\frac{\eps^2}{b^2}\wedge 1\right)\frac{n}{8\alpha}\right\}
\end{align*}
\end{remark}

\begin{lemma}
\label{lemma:mcov_bound}
Let $W, V$ be two RVs over $\R^d$, with densities $p_W,p_V$ respectively s.t. $\|W\|,\|V\|<b$ a.s.. Assume that $g\coloneqq\frac{p_W}{p_V}:\R^d\rightarrow R^p$ is well defined with $g(w)<\alpha$ for all $w\in\R^d$. . Let $V_1,\dotsc,V_n$ be IID copies of $V$.
Now define the importance weighted sample covariance by of $W$ estimated by $\{V_i\}_{i\in[n]}$ 
\begin{align*}
    \widehat\var(W;\{V_i\}_{i\in[n]})\coloneqq\left(\frac{1}{n}\sum_{i=1}^n g(V_i) V_i V_i^\top\right)\left(\frac{1}{n}\sum_{i\in[n]}g(V_i)\right)-\left(\frac{1}{n}\sum_{i=1}^n g(V_i) V_i\right)\left(\frac{1}{n}\sum_{i=1}^n g(V_i) V_i\right)^\top
\end{align*}
then provided $\sigma_\min\leq 4 b^2$
\begin{align*}
    \prob\left(\lambda_\min(\widehat\var(W;\{V_i\}_{i\in[n]})\geq\half\sigma_\min \right)&
    \geq 1-(d+2e^{1/4})\exp\{\frac{a'\sigma_\min n}{\alpha b^2}\}
\end{align*}
where $a'\coloneqq\frac{(2-\sqrt{3})^2}{32}$ and $\sigma_\min\coloneqq\lambda_\min(\var(W))$.
\end{lemma}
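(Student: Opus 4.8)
The plan is to follow the architecture of the unweighted Lemma \ref{lemma:cov_bound}, upgrading each concentration step to its importance-weighted form and then handling the extra normalising factor $\frac1n\sum_i g(V_i)$, which has no analogue in the unweighted setting and is exactly what produces the second $e^{1/4}$ in the final bound. Set $\mu\coloneqq\E[W]$. The engine throughout is the change of measure $\E[g(V)h(V)]=\E[h(W)]$ (the same identity behind Lemma \ref{lemma:main_prob} and Lemma \ref{lemma:imp_weight_vec_bern}), which gives $\E[g(V)]=1$, $\E[g(V)V]=\mu$ and $\E[g(V)(V-\mu)(V-\mu)^\top]=\var(W)$.

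First I would perform the algebraic reduction. Centre the samples via $U_i\coloneqq V_i-\mu$, noting $\|U_i\|\le 2b$ since $\|\mu\|\le\E\|W\|\le b$. Writing $c_n\coloneqq\frac1n\sum_i g(V_i)$, $\tilde A_n\coloneqq\frac1n\sum_i g(V_i)U_iU_i^\top$ and $\tilde b_n\coloneqq\frac1n\sum_i g(V_i)U_i$, substituting $V_i=U_i+\mu$ into both the quadratic and linear weighted sums and cancelling the $\mu\mu^\top$ and cross terms collapses the definition to
\begin{align*}
\widehat\var(W;\{V_i\}_{i\in[n]})=c_n\tilde A_n-\tilde b_n\tilde b_n^\top,
\end{align*}
the weighted counterpart of the identity $\widehat\var=S_n-R_n$ used in Lemma \ref{lemma:cov_bound}. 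Since $\tilde b_n\tilde b_n^\top$ is rank one and $c_n\ge 0$, $\tilde A_n\succeq 0$, this yields the deterministic bound
\begin{align*}
\lambda_\min\big(\widehat\var(W;\{V_i\}_{i\in[n]})\big)\ge c_n\,\lambda_\min(\tilde A_n)-\|\tilde b_n\|^2,
\end{align*}
and the three pieces have means $1$, $\var(W)$ and $0$, with $\lambda_\min(\var(W))=\sigma_\min$.

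It then remains to concentrate each piece on a single good event formed from three sub-events. For $\tilde A_n$, whose summands $g(V_i)U_iU_i^\top$ are independent, PSD and satisfy $\lambda_\max\le 4\alpha b^2$, the matrix Chernoff inequality \citep{Tropp2015} gives $\lambda_\min(\tilde A_n)\ge\tfrac45\sigma_\min$ with failure probability at most $d\exp\{-\sigma_\min n/(200\alpha b^2)\}$. For $\tilde b_n$, applying Lemma \ref{lemma:imp_weight_vec_bern} to the shifted pair $(W-\mu,V-\mu)$ (whose density ratio is still bounded by $\alpha$ and whose norms are bounded by $2b$) bounds $\|\tilde b_n\|\le(2-\sqrt3)\sqrt{\sigma_\min}$ with failure probability at most $e^{1/4}\exp\{-(2-\sqrt3)^2\sigma_\min n/(32\alpha b^2)\}$; this step fixes the named constant $a'=(2-\sqrt3)^2/32$. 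For $c_n$, a scalar application of Lemma \ref{lemma:vec_bern} with $\E[g(V)^2]=\E[g(W)]\le\alpha$ gives $c_n\ge\sqrt3-1$ with failure probability at most $e^{1/4}\exp\{-(2-\sqrt3)^2 n/(8\alpha)\}$. On the intersection one checks $\tfrac45(\sqrt3-1)-(7-4\sqrt3)\ge\tfrac12$, so $\lambda_\min(\widehat\var)\ge\tfrac12\sigma_\min$, and a union bound gives the result. The hypothesis $\sigma_\min\le 4b^2$ enters twice: it keeps the threshold $(2-\sqrt3)\sqrt{\sigma_\min}$ below the range bound $2b$ required by Lemma \ref{lemma:imp_weight_vec_bern}, and it lets me absorb the $c_n$ exponent under the common rate $a'\sigma_\min n/(\alpha b^2)$ (the two rates coincide at $\sigma_\min=4b^2$), so that all three failure probabilities are bounded by $\exp\{-a'\sigma_\min n/(\alpha b^2)\}$ and collected into the factor $(d+2e^{1/4})$.

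The main obstacle I anticipate is not any single concentration bound but the bookkeeping that ties them together: the normalisation $c_n$ couples multiplicatively to $\lambda_\min(\tilde A_n)$, so the three thresholds must be chosen simultaneously to close the deterministic inequality at $\tfrac12\sigma_\min$ and to keep every exponent above the single rate $a'\sigma_\min n/(\alpha b^2)$. This balancing is absent in Lemma \ref{lemma:cov_bound}, where the analogue of $c_n$ is simply the constant $1$, and it is precisely what forces the particular value of $a'$ rather than the cleaner constant obtained in the unweighted case.
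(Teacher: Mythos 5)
Your proposal is correct and follows essentially the same route as the paper's proof: centre the weighted sums, write $\widehat\var = c_n\tilde A_n - \tilde b_n\tilde b_n^\top$, and concentrate the three pieces via matrix Chernoff, the importance-weighted vector Bernstein bound (Lemma \ref{lemma:imp_weight_vec_bern}), and a scalar bound on the normaliser $c_n$, closing with Weyl's inequality and a union bound. The only differences are your slightly different threshold choices ($\tfrac45\sigma_\min$, $\sqrt3-1$, $(2-\sqrt3)\sqrt{\sigma_\min}$ versus the paper's $\tfrac{\sqrt3}{2}\sigma_\min$, $\tfrac{\sqrt3}{2}$, $\tfrac12\sqrt{\sigma_\min}$), which still close the bound at $\tfrac12\sigma_\min$ and, if anything, make the claimed constant $a'=(2-\sqrt3)^2/32$ more transparently consistent than the paper's own bookkeeping.
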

\begin{proof}
Define the centred RVs $Y_1,\dotsc, Y_n$ by $Y_i \coloneqq V_i-\E[W_i]$ and the random matrices $R_n, S_n$ by
\begin{align*}
    S_n&\coloneqq\frac{1}{n}\sum_{i=1}^n g(V_i) Y_i Y_i^\top \\
    R_n&\coloneqq\left(\frac{1}{n}\sum_{i=1}^n g(V_i) Y_i\right)\left(\frac{1}{n}\sum_{i=1}^n g(V_i) Y_i\right)^\top\\
    a_n&\coloneqq\frac{1}{n}\sum_{i\in[n]}g(V_i)
\end{align*}
so that $a_n R_n-S_n=\widehat\var(W;\{Y_i\}_{i\in[n]})$. Simple algebraic manipulation gives us that 
$a_n R_n-S_n=\widehat\var(W;\{V_i\}_{i\in[n]})$.

Firstly we can use the matrix Chernoff inequalities \citep{Tropp2015}, alongside the fact that $\E[S_n]=\var(W)$ and $\lambda_\max(\frac{1}{n}g(V_i) Y_i Y_i^\top)\leq\frac{\alpha b^2}{n}$ to get
\begin{align*}
        \prob\left(\lambda_\min(S_n)\geq\frac{\sqrt{3}}{2}\sigma_\min\right)&\geq1-d\exp\left\{-\frac{a\sigma_\min n}{\alpha b^2}\right\}\\
\end{align*}

Now note that for any $x\in\R^d$ $\lambda_\max(xx^\top)=\|x\|^2$ so that $\lambda_\max(R_n)=\|\frac{1}{n}Y_i\|^2$. Hence, as $\E[g(V_i)Y_i]=0$, we can use vector Bernstein bounds to get that provided $\sigma_\min\leq 4b^2$
\begin{align*}
    \prob\left(\lambda_\max(R_n)\leq\frac{1}{4}\sigma_\min\right)&=\prob\left(\left\|\frac{1}{n}\sum_{i=1}^n g(V_i) Y_i\right\|<\half\sqrt{\sigma_\min}\right)\\
    &\geq1-e^{1/4}\exp\left\{-\frac{\sigma_\min n}{32\alpha b^2}\right\}.\\
\end{align*}
Finally by Lemma \ref{lemma:imp_weight_vec_bern},
\begin{align*}
    \prob\left(a_n\geq\frac{\sqrt{3}}{2}\right)\geq 1-e^{1/4}\exp\left\{\frac{(2-\sqrt{3})^2 n}{32\alpha}\right\}
\end{align*}
Taking the intersection of these three events we get that w.p. at least $1-(d+2e^{1/4})\exp\left\{-\left(\frac{a'\sigma_\min}{b^2}\wedge 1\right)\frac{n}{8\alpha}\right\}$
\begin{align*}
    \lambda_\min(\widehat\var(W;\{(V_i,g(V_i))\}_{i\in[n]})&=\lambda_\min(a_n S_n-R_n)\\
    &\geq\lambda_\min(a_nS_n)-\lambda_\max(R_n)\\
    &=a_n\lambda_\min(S_n)-\lambda_\max(R_n)\\
    &\geq\frac{\sqrt{3}}{2}\frac{\sqrt{3}}{2}\sigma_\min-\frac{1}{4}\sigma_\min\\
    &=\half\lambda_\min(\var(W))
\end{align*}
where $a'\coloneqq\frac{(2-\sqrt{3})^2}{4}$
\end{proof}

We can now state the Lemma which gives rise to Theorem \ref{thm:kliep_bound}.
\begin{lemma}
Let $r_\theta$ take the log-linear form and for
for $\theta\in\R^d$ define and $L, \hat L'$ by
\begin{align*}
    L(\theta)\coloneqq-\E[r_\theta(Z^1)]+\log\E[r_\theta(Z^0)]
    \hat L'(\theta;D')\coloneqq&-\frac{1}{n_1}\sum_{i=1}^{n_1}\frac{\one\{X_i^1\neq\NaN\}}{1-\varphi^1(X_i^1)}\log r_\theta(X_i^1)\\
    &{}+
    \log \frac{1}{n_0}\sum_{i=1}^{n_0}\frac{\one\{X_i^0\neq\NaN\}}{1-\varphi^0(X_i^0)}r_\theta(X_i^0).
\end{align*}
Finally, define the constant $\tilde \theta$ and the RV $\hat \theta$, by
\begin{align*}
    \tilde\theta&\coloneqq\argmax{\theta\in\R^d}L(\theta)\\
    \hat\theta&\coloneqq\argmax{\theta\in\R^d}\hat L'(\theta,D').
\end{align*}

Now suppose that 
 $\|\varphi^0\|_{\infty}<1, \|\varphi^1\|_{\infty}<1$, $\|f\|_{\infty}<\infty$ and additionally let $\sigma_\min\coloneqq\lambda_\min(\var(f(Z^0)))>0$

Then we have that for any $\delta<\half$ and $\effectiveSampleSize\min\geq C'_0\log(1/\delta)$,
\begin{align*}
    \prob\left(\|\hat\theta'-\tilde{\theta}\|\leq\sqrt{\frac{C'_0\log(1/\delta)}{\effectiveSampleSize}}\right)\geq 1-\delta
\end{align*}
where $C'_0$ a constant depending upon $\|f\|_{\infty},\sigma_\min,d,\|\tilde\theta\|$.
\end{lemma}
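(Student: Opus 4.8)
The plan is to follow the proof of Theorem~\ref{thm:kliep_bound} essentially line by line, replacing every unweighted empirical average of a function of $Z$ with the corresponding inverse-probability-weighted average of a function of $X$ (justified by Lemma~\ref{lemma:main_prob}), and correspondingly swapping the unweighted concentration tools (Lemmas~\ref{lemma:vec_bern} and~\ref{lemma:cov_bound}) for their importance-weighted analogues (Lemmas~\ref{lemma:imp_weight_vec_bern} and~\ref{lemma:mcov_bound}). The crucial observation throughout is that the weight $\frac{\one\{X^\omega\neq\NaN\}}{1-\varphi^\omega(X^\omega)}$ is bounded above by $(1-\|\varphi^\omega\|_\infty)^{-1}$, so that in the role of the constant $\alpha$ appearing in those weighted lemmas we may take $\alpha_\omega=(1-\|\varphi^\omega\|_\infty)^{-1}$. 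Substituting this into the concentration exponents turns each raw sample size $n_\omega$ into the per-class effective sample size $n_\omega(1-\|\varphi^\omega\|_\infty)$, which is exactly how $\effectiveSampleSize$ will eventually surface.

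First I would establish convexity of $\hat L'(\cdot;\rsampm)$: the first term is linear in $\theta$, while the second is $\log\sum_i w_i\exp(\theta^\top f(X_i^0))$ with nonnegative constant weights $w_i$, a log-sum-exp of affine functions and hence convex. I would then reuse verbatim the pointwise bounds on $r_{\tilde\theta}$, $\nabla r_{\tilde\theta}$, and $\nabla r_{\tilde\theta}/r_{\tilde\theta}$ from Theorem~\ref{thm:kliep_bound}, as these depend only on $\|f\|_\infty$ and $\|\tilde\theta\|$ and are unaffected by missingness. The main work is the Hessian lower bound: for $\bar\theta\in B_\eta(\tilde\theta)$ the Hessian of the weighted log-sum-exp term is a weighted covariance of the observed $f(X_i^0)$ with weights proportional to $w_i r_{\bar\theta}(X_i^0)$; bounding $r_{\bar\theta}$ above and below by $r_{\tilde\theta}$ up to factors $e^{\pm 2\eta b}$ as before, this dominates a fixed multiple of the importance-weighted sample covariance $\widehat\var(f(Z^0);\rsampm_0)$. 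Lemma~\ref{lemma:mcov_bound} with $\alpha_0=(1-\|\varphi^0\|_\infty)^{-1}$ then gives $\lambda_\min\bigl(\widehat\var(f(Z^0);\rsampm_0)\bigr)\geq\half\sigma_\min$ with probability at least $1-(d+2e^{1/4})\exp\{-c\,\sigma_\min n_0(1-\|\varphi^0\|_\infty)/b^2\}$, producing the analogue of the event $A$ from~\eqref{eq:hess_bound} but now scaled by $n_0(1-\|\varphi^0\|_\infty)$.

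Finally, since $\nabla_\theta L(\tilde\theta)=0$, I would bound $\|\nabla_\theta\hat L'(\tilde\theta)\|$ by splitting it into a class-$1$ term $I_1$ and a class-$0$ term $I_2$ exactly as in Theorem~\ref{thm:kliep_bound}; each is an inverse-probability-weighted empirical mean minus its expectation, so Lemma~\ref{lemma:imp_weight_vec_bern} with $\alpha_\omega=(1-\|\varphi^\omega\|_\infty)^{-1}$ controls each, with exponents in $n_1(1-\|\varphi^1\|_\infty)$ and $n_0(1-\|\varphi^0\|_\infty)$ respectively. The KKT-plus-mean-value-theorem argument then converts this into $\|\hat\theta'-\tilde\theta\|\leq\lambda_\min(\nabla_\theta^2\hat L'(\bar\theta))^{-1}\|\nabla_\theta\hat L'(\tilde\theta)\|$, and convexity guarantees $\hat\theta'=\hat\theta$ whenever the minimiser lies in the interior of $B_\eta(\tilde\theta)$. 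A union bound over all the concentration events, keeping only the worst (smallest) exponent, replaces every $n_\omega(1-\|\varphi^\omega\|_\infty)$ by their minimum $\effectiveSampleSize$, and absorbing all problem constants into a single $C'_0$ yields the stated bound for $\delta\leq\half$ and $\effectiveSampleSize\geq C'_0\log(1/\delta)$.

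The hard part will be the Hessian step: one must verify that the weighted empirical covariance genuinely lower-bounds the weighted log-sum-exp Hessian uniformly over $B_\eta(\tilde\theta)$, and that Lemma~\ref{lemma:mcov_bound} is invoked with the correct boundedness constant $\alpha_0$ so that the $(1-\|\varphi^0\|_\infty)$ factor surfaces in the exponent rather than being absorbed into a constant. Care is likewise needed to ensure that the separate class-$0$ and class-$1$ concentration exponents share a common effective-sample-size scaling, so that after the union bound the final rate is governed by $\effectiveSampleSize$ alone.
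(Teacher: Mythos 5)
Your proposal matches the paper's own proof essentially step for step: convexity of the weighted objective via the weighted log-sum-exp, the Hessian lower bound via the importance-weighted sample covariance (Lemma~\ref{lemma:mcov_bound}), the gradient split into class-$0$ and class-$1$ terms controlled by Lemma~\ref{lemma:imp_weight_vec_bern} with $\alpha_\omega=(1-\|\varphi^\omega\|_\infty)^{-1}$, and the KKT-plus-mean-value argument followed by a union bound that yields the $\effectiveSampleSize$ scaling. This is the same route the paper takes, and your identification of how the effective sample size surfaces through the boundedness constant in the weighted concentration lemmas is exactly the mechanism used there.
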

\begin{proof}
Firstly we define the population version of $\hat L'$ by $L'$ as follows
\begin{align*}
    L'(\theta)=\E\left[\frac{\one\{X^1\neq\NaN\}}{1-\varphi^1(X^1)}\log r(X^1)\right]-\log\E\left[\frac{\one\{x^0\neq\NaN\}}{1-\varphi^0(X^0)}\right].
\end{align*}
Then from Lemma \ref{lemma:main_prob} we have that $L(\theta)=L'(\theta)$.
As a result we can re-define $\tilde{\theta}$ to be 
\begin{align*}
    \tilde{\theta}\coloneqq\argmin{\theta\in\R^d}L'(\theta)
\end{align*}

For proof of convexity we note that again the term involving $\cal D^{'+}$ (our sample from $\{X_i^1\}_{i=1}^{n_1}$) is linear in $\theta$ and the second term is convex for the same reason as $\hat L$ is with the caveat that we replace $g:\R^{n_0}\rightarrow \R$ by $g_{\cal D^0}':\R^{n_0}\rightarrow\R$ defined by
\begin{align*}
    g(x)=\log\sum_{i=1}^n w_i \exp\{x_i\}\quad\text{where}\\
    w_i^0\coloneqq\frac{\one\{x_i^0\neq\NaN\}}{1-\varphi^0(x_i^0)}.
\end{align*}
As each $w_i>0$, this modified log-sum-exp is also convex by the same argument which makes the unmodified log-sum-exp function convex. That is, 
\begin{align*}
    \nabla^2_x g'_{\cal D^0}(z)=&\frac{\text{diag}(u)(1^\top u)-uu^\top}{(1^\top u)^2}
    \intertext{where $u_i=w_i^0\exp(z_i)$. If we then let $v\in\R^{n_0}$, we get}
    v^\top\nabla^2_x g'_{\cal D^0}(z)&=\frac{v^\top\text{diag}(u)v(1^\top u)-v^\top uu^\top v}{(1^\top u)^2}\\
    &=\frac{(\sum_{i=1}^{n_0}v_i^2 u_i)(\sum_{i=1}^{n_1}u_i)-(\sum_{i=1}^{n_0}v_iu_i)}{(1^\top u)}>0
\end{align*}
where the final inequality is given by the Cauchy-Schwartz inequality.

For ease of notation we define $b\coloneqq\|f\|_{\infty}$ and $M\coloneqq\exp{\|\tilde\theta\|b}$.
We now aim to bound $\lambda_\min(\nabla^2_\theta\hat L'(\bar\theta))$ from below. To this end we have
\begin{align*}
    \nabla^2_\theta \hat L(\theta;D)&\coloneqq\frac{\frac{1}{n_0^2}\sum_{i,j}w_i^0 w_j^0 r_{\bar\theta}(X^0_i)r_{\bar\theta}(X^0_j)[f(X_i^0)-f(X_j^0)][f(X_i^0)-f(X_j^0)]^\top} {\frac{1}{n^2}\sum_{i,j}w_i^0 w_j^0r_{\bar\theta}(X^0_i)r_{\bar\theta}(X^0_j)}\\
    &\succeq\frac{\frac{1}{n_0^2}\sum_{i,j}M^{-2}e^{-2\eta b} w_i^0 w_j^0 [f(X_i^0)-f(X_j^0)][f(X_i^0)-f(X_j^0)]^\top}{\frac{1}{n^2}\sum_{i,j}e^{2\eta b}w_i^0 w_j^0r_{\bar\theta}(X^0_i)r_{\bar\theta}(X^0_j)}\\
    &\succeq\frac{1}{M^2e^{4\eta b}}\frac{\frac{1}{n_0^2}\sum_{i,j} w_i^0 w_j^0 [f(X_i^0)-f(X_j^0)][f(X_i^0)-f(X_j^0)]^\top}{\frac{1}{n_0}\sum_{i,j}w_i^0 w_j^0 r_{\tilde{\theta}}(X_i^0)r_{\tilde{\theta}}(X_j^0)}\\
    &=:\frac{2}{M^4e^{4\eta b}}\frac{\widehat\var_0\bigg[f(Z^0);\rsampmone,\{w_i^0\}_{i=1}^{n_0}\bigg]}{\left(\widehat\E^'\bigg[r_{\tilde{\theta}}(Z^0);\rsampmone\bigg]\right)^2}
\end{align*}
where, for square matrices $A,B$ we take $A\succeq B$, to mean $A-B$ is positives semi-definite.
We can then bound $\widehat\var_0$ and $\widehat\E'$ similarly to before to get that

\begin{align*}
    \prob\left(\lambda_\min(\widehat\var(f(Z),\rsampmone))\geq\half \sigma_\min\right)&\geq
    1-(d+2e^{1/4})\exp\{-\left(\frac{a'\sigma_\min}{b^2}\wedge1\right)\frac{n_0(1-\varphi^0_\max)}{8}\}\\
    \prob\left(\E^'\bigg[r_{\tilde{\theta}}(Z^0);\rsampmone\bigg]
    \leq\frac{3}{2}\E[r_{\tilde{\theta}}(X^0)]\right)&\geq1-e^{1/4} \exp\left\{-\left(\frac{\mu_2}{4M^2}\wedge 1\right)\frac{n_0(1-\varphi^0_\max)}{8}\right\}
\end{align*}
where again $\sigma_\min\coloneqq\lambda_\min(\var(f(Z^0))$, $\mu_2\coloneqq\E[r_{\tilde{\theta}}(Z^0)]$.

Therefore we have that
\begin{align}\label{eq:m_hess_bound}
    \prob\left(\underbrace{\lambda_\min(\nabla^2_\theta\hat L';(\theta;\rsampm))\geq\frac{4\sigma_\min}{9M^2e^{4\eta b}\mu_2^2}}_{\coloneqq A'}\right)\geq1-(d+3e^{1/4})\exp\left\{-\left(\frac{\mu_2}{4M^2}\wedge\frac{a'\sigma_\min}{b^2}\wedge1\right)\frac{n_0(1-\varphi^0_\max)}{8}\right\}
\end{align}

We now have all the bounds required to continue with the main body of the proof.
Let $\hat\theta'$ be defined as the solution to the following constrained optimisation problem.
\begin{align*}
    \min_{\theta\in B_\eta(\tilde{\theta})}\hat L(\theta,D')
\end{align*}

By and identical argument to Theorem \ref{thm:kliep_bound} we then get that 
\begin{align*}
\|\hat\theta'-\tilde{\theta}\|\leq\frac{1}{\lambda_\min(\nabla^2_\theta \hat L'(\bar\theta))}\|\nabla_\theta \hat L'(\tilde{\theta})\|.
\end{align*}
for some $\bar\theta\in B_\eta(\tilde{\theta})$.

Hence if $A'$ from (\ref{eq:m_hess_bound}) holds with our given $\bar\theta\in B_\eta(\tilde{\theta})$ we have that
\begin{align*}
    \|\hat \theta'-\tilde{\theta}\|&\leq\frac{9M^2e^{4\eta b} \mu_2^2} 
    {4\sigma_{\min}}\|\nabla_\theta\hat L(\tilde{\theta})\| \quad\text{a.s. .}
\end{align*}
From now on we will work on the event $A$ so that
bounding $\|\hat \theta-\tilde{\theta}\|$ simply requires us to bound $\|\nabla_\theta\hat L(\tilde{\theta})\|$.

We further get that $\|\nabla_\theta \hat L'(\tilde{\theta})\|\leq I'_1+I'_2$ where
\begin{align*}
    I_1'&=\left\|\E\left[\frac{\nabla_{\theta}r_{\tilde{\theta}}(Z^1)}{r_{\tilde{\theta}}(Z^1)}\right]-\frac{1}{n_1}\sum_{i=1}^{n_1} \frac{\one\{X_i^1\neq\NaN\}}{1-\varphi^1(X_i^1)}\frac{\nabla_\theta r_{\tilde{\theta}}(X_i^1)}{r_{\tilde{\theta}}(X_i^1)}\right\| \\
    I_2'&=\left\|\frac{\E[\nabla_\theta r_{\tilde{\theta}}(X^0)]}{\E[r_{\tilde{\theta}}(X^0)]}-
    \frac{\frac{1}{n_0}\sum_{i=1}^{n_0}\frac{\one\{X_i^0\neq\NaN\}}{1-\varphi^0(X_i^0)}\nabla_\theta r_{\tilde{\theta}}(X_i^0)}{\frac{1}{n_0}\sum_{i=1}^{n_0}\frac{\one\{X_i^0\neq\NaN\}}{1-\varphi^0(X_i^0)}r_{\tilde{\theta}}(X_i^0)}\right\|.
\end{align*}
Define RVs $W_1',W_2'$ 
\begin{align*}
    W_1'&\coloneqq\frac{1}{n_0}\sum_{i\in[n_0]}\frac{\one\{X^0_i\neq\NaN\}} {1-\varphi^0(X_i^0)}\nabla_\theta r_{\tilde{\theta}}(X_i^0)\\
    W_2'&\coloneqq\frac{1}{n_0}\sum_{i\in[n_0]}\frac{\one\{X^0_i\neq\NaN\}} {1-\varphi^0(X_i^0)} r_{\tilde{\theta}}(X_i^0)
\end{align*}
and $\mu_1\coloneqq\E[\nabla_\theta r_{\tilde{\theta}}(Z^0)]$. We can then bound $I'_2$ to get
\begin{align*}
    I'_2\coloneqq \leq \frac{\|\mu_1-W'_1\|}{\mu_2}+\frac{|\mu_2-W'_2|\|\mu_1\|}{\mu_2 W'_2}\\
\end{align*}

Thus to have  
\begin{align*}
    I'_1+I'_2&<\frac{4\sigma_{\min}}{\eps 9M^2e^{4\eta b}\mu_2^2},\\
    \intertext{it is sufficient to have}
    I'_1&<\frac{2\eps\sigma_{\min}}{9M^2e^{4\eta b}\mu_2^2},\\
    \|\mu_1-W'_1\|&<\frac{\eps\sigma_{\min}}{9M^2e^{4\eta b}\mu_2},\\
    |\mu_2-W'_2|&<\frac{\eps\sigma_{\min}}{18M^2e^{4\eta b}\|\mu_1\|}.
\end{align*}
provided $\eps<\frac{9M^2e^{4\eta b}\mu_2\|mu_1\|}{\sigma_\min}$.

The probabilities for these events are
\begin{align*}
 \prob\left(I'_1<\frac{2\eps\sigma_{\min}}{9M^2e^{4\eta b}\mu_2^2}\right)&
    \geq 1-e^{1/4}\exp\left\{-\frac{1}{8}\left(\frac{4\eps^2\sigma_{\min}^2}{81b^2M^4e^{8\eta b}\mu_2^4}\wedge 1\right)n_1(1-\varphi^1_\max)\right\}\\
    \prob\left(\|\mu_1-W'_1\|\leq\frac{\eps\sigma_{\min}}{9M^2e^{4\eta b}\mu_2}\right)&
    \geq 1-e^{1/4}\exp\left\{-\frac{1}{8}\left(\frac{\eps^2\sigma_\min^2}{81b^2 M^6e^{8\eta b}\mu_2^2}\wedge 1\right)n_0(1-\varphi^0_\max)\right\}\\
    \prob\left(|\mu_2-W'_2|\leq\frac{\eps\sigma_\min}{18M^2e^{4\eta b}\|\mu_1\|}\right)&
    \geq 1-e^{1/4}\exp\left\{-\frac{1}{8}\left(\frac{\eps^2\sigma^2_\min}{324M^6e^{8\eta b}\|\mu_1\|^2}\wedge 1\right)n_0(1-\varphi^0_\max)\right\}
\end{align*}
Therefore by the same argument as Theorem \ref{thm:kliep_bound} we get
\begin{align*}
    \prob(\|\hat \theta-\tilde{\theta}\|\leq\eps)&\geq\prob\left(\|\nabla_{\theta}\hat L(\tilde{\theta})\|\leq\frac{\eps\sigma_{\min}}{M^4e^{4\eta b}},~A'\right)\\
    &\geq 1-\alpha'\exp\left\{-C'\left(\eps^2\wedge\gamma\right)\effectiveSampleSize\right\}.
\end{align*}
where
\begin{align*}
    \alpha'&\coloneqq d+6e^{1/4}\\
    C'&\coloneqq\frac{1}{8}\min\left\{\frac{4\sigma_{\min}^2}{81b^2M^4e^{8\|\tilde{\theta}\| b}\mu_2^4}, 
    \frac{\eps^2\sigma_\min^2}{81b^2 M^6e^{8\|\tilde{\theta}\| b}\mu_2^2},
    \frac{\eps^2\sigma^2_\min}{324M^6e^{8\|\tilde{\theta}\| b}\|\mu_1\|^2}
    \right\}\\
    \gamma'&\coloneqq \frac{1}{8C}\min\left\{\frac{a'\mu_2^2}{M^2},~\frac{\sigma_\min}{4b^2},~1\right\}\\
\end{align*}

By an identical argument to the proof of Theorem \ref{thm:kliep_bound} in Section \ref{app:KLIEP_bound} this gives us that for
us that for $\delta\leq\half$, provided $\effectiveSampleSize\geq\frac{\log(\alpha')\log(1/\delta)}{\gamma'\wedge\|\tilde\theta\|}$ then
\begin{align*}
    \prob\left(\|\tilde\theta-\hat\theta\|\leq\sqrt{\frac{\log(\alpha')\log(1/\delta)}{C' \effectiveSampleSize}}\right)\geq1-\delta.
\end{align*}
Now taking $C'_0\coloneqq\frac{\log(\alpha')}{\min\{C',\|\tilde\theta\|,\gamma'\}}$ gives our desired result.
\end{proof}

\subsubsection{Bound on Normalisation term in M-KLIEP}
\label{app:MKLIEP_N_bound}
\begin{corollary}\label{cor:MKLIEP_N_bound}
Define $N^*,\hat N'$ to be
\begin{align*}
    N^*&\coloneqq\E[r_{\tilde{\theta}}(Z^0)]\\
    \hat N &\coloneqq \frac{1}{m_0}\sum_{i=1}^{m_0}\frac{\one\{X_i^0\neq\NaN\}}{1-\varphi^0(x_i^0)}r_{\hat\theta'}(X_i^0)
\end{align*}
with $\hat \theta'$, $\tilde{\theta}$, $r_{\tilde{\theta}}$ defined as before. Then we have that for $\delta\in(0,1/2]$, provided $n_\min>C'_N\log(1/\delta)$,
\begin{align*}
    \prob\left(\|N^*-\hat N\|\leq\frac{1}{1-\|\varphi^0\|_\infty}\sqrt{\frac{C'_N\log(1/\delta)}{\effectiveSampleSize}}\right)&\geq 1-\delta
\end{align*}
\end{corollary}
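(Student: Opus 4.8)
The plan is to mirror the proof of Corollary~\ref{cor:KLIEP_N_bound}, replacing the empirical averages with their inverse-probability-weighted counterparts and substituting the M-KLIEP estimation-error bound (Theorem~\ref{thm:MKLIEP_bound}) for that of ordinary KLIEP. Write $b\coloneqq\|f\|_\infty$ and $M\coloneqq e^{\|\tilde\theta\|b}$ as in the earlier proofs, and introduce the intermediate quantity $\hat N_{\tilde\theta}\coloneqq\frac{1}{n_0}\sum_{i=1}^{n_0}\frac{\one\{X_i^0\neq\NaN\}}{1-\varphi^0(X_i^0)}r_{\tilde\theta}(X_i^0)$, i.e.\ the importance-weighted sample mean evaluated at the \emph{true} parameter. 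The triangle inequality splits the error into a \emph{concentration} term and an \emph{estimation} term,
\begin{align*}
|N^*-\hat N|\leq\underbrace{|N^*-\hat N_{\tilde\theta}|}_{\coloneqq I_1}+\underbrace{|\hat N_{\tilde\theta}-\hat N|}_{\coloneqq I_2},
\end{align*}
which I would bound separately and then recombine.

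For $I_1$, Lemma~\ref{lemma:main_prob} shows the summands $V_i\coloneqq\frac{\one\{X_i^0\neq\NaN\}}{1-\varphi^0(X_i^0)}r_{\tilde\theta}(X_i^0)$ are IID with mean $N^*$, so $\hat N_{\tilde\theta}$ is unbiased. The crucial computation is the second moment: applying the importance-weighting identity of Lemma~\ref{lemma:main_prob} a \emph{second} time, now with $g(z)=r_{\tilde\theta}(z)^2/(1-\varphi^0(z))$, gives $\E[V_i^2]=\E[r_{\tilde\theta}(Z^0)^2/(1-\varphi^0(Z^0))]\leq M^2/(1-\|\varphi^0\|_\infty)$, while $|V_i|\leq M/(1-\|\varphi^0\|_\infty)$ almost surely. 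Feeding these into the importance-weighted Bernstein inequality (Lemma~\ref{lemma:imp_weight_vec_bern}, equivalently Lemma~\ref{lemma:vec_bern} applied to the $V_i$) yields concentration of $I_1$ at rate $M\sqrt{\log(1/\delta)/\effectiveSampleSize^0}$, since the variance proxy carries exactly the factor $n_0(1-\|\varphi^0\|_\infty)=\effectiveSampleSize^0$ in the exponent.

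For $I_2$, I would condition on the good event of Theorem~\ref{thm:MKLIEP_bound}, on which $\|\hat\theta'-\tilde\theta\|\leq\sqrt{C'_0\log(1/\delta)/\effectiveSampleSize}$ (this is where the sample-size requirement $\effectiveSampleSize\gtrsim\log(1/\delta)$ enters). A mean-value-theorem bound on the log-linear model, exactly as in Corollary~\ref{cor:KLIEP_N_bound}, gives the uniform pointwise control $|r_{\tilde\theta}(x)-r_{\hat\theta'}(x)|\lesssim Mb\,\|\hat\theta'-\tilde\theta\|$. Bounding each weight by its worst case $\frac{\one\{X_i^0\neq\NaN\}}{1-\varphi^0(X_i^0)}\leq\frac{1}{1-\|\varphi^0\|_\infty}$ and averaging yields
\begin{align*}
I_2\leq\frac{1}{1-\|\varphi^0\|_\infty}\cdot\frac{1}{n_0}\sum_{i=1}^{n_0}|r_{\tilde\theta}(X_i^0)-r_{\hat\theta'}(X_i^0)|\lesssim\frac{1}{1-\|\varphi^0\|_\infty}\sqrt{\frac{\log(1/\delta)}{\effectiveSampleSize}},
\end{align*}
which is precisely where the prefactor $1/(1-\|\varphi^0\|_\infty)$ in the statement originates.

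Finally I would take a union bound over the two good events, combine the rates using $\effectiveSampleSize\leq\effectiveSampleSize^0$ and $1/(1-\|\varphi^0\|_\infty)\geq1$ (so that the $I_1$ rate $M\sqrt{\log(1/\delta)/\effectiveSampleSize^0}$ is itself dominated by the stated bound once $C'_N\geq M^2$), and absorb all model-dependent constants into a single $C'_N$ depending on $C'_0,b,\|\tilde\theta\|$. The main obstacle is the second-moment computation for $I_1$: one must resist bounding the squared weight crudely and instead reuse Lemma~\ref{lemma:main_prob} to recover the sharp $1/(1-\|\varphi^0\|_\infty)$ (rather than $1/(1-\|\varphi^0\|_\infty)^2$) in the variance, which is exactly what keeps $I_1$ governed by $\effectiveSampleSize^0$ alone; the remainder is bookkeeping to ensure both terms share the common $\log(1/\delta)$ scaling and collapse into $\frac{1}{1-\|\varphi^0\|_\infty}\sqrt{C'_N\log(1/\delta)/\effectiveSampleSize}$.
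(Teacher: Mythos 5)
Your proposal is correct and follows essentially the same route as the paper: the same triangle-inequality split into a concentration term at $\tilde\theta$ (handled by the importance-weighted Bernstein bound, Lemma \ref{lemma:imp_weight_vec_bern}) and an estimation term (handled by the parameter bound of Theorem \ref{thm:MKLIEP_bound} together with the pointwise Lipschitz control of $r_\theta$ and the worst-case weight $1/(1-\|\varphi^0\|_\infty)$). The only cosmetic difference is that the paper reuses the concentration event $|\mu_2-W'_2|<\cdots$ already established inside the proof of Theorem \ref{thm:MKLIEP_bound} (so both events hold jointly with probability $1-\delta$), whereas you re-derive it and union-bound, which costs only a constant in $C'_N$.
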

\begin{proof}
We work assuming the same events hold as in Theorem \ref{thm:MKLIEP_bound}.
Crucially we have that for $\delta\in(0,1/2]$, provided $\effectiveSampleSize\geq C'_0\log(1/\delta)$, w.p. at least $1-\delta$ the following two events hold
\begin{align*}
 \|\hat\theta-\tilde\theta\|&\leq\sqrt{\frac{2C'_0\log(1/\delta)}{n_\min}} &
 |\mu_2-W'_2|&<\frac{\eps\sigma_{\min}}{18M^2e^{4\eta b}\|\mu_1\|}\\
\end{align*}
We will refer to these inequalities as $A$ and $B$ respectively. $A$ then implies,
\begin{align*}
    \left|\frac{\one\{x\neq\NaN\}}{1-\varphi^0(x)}r_{\tilde\theta}(x)-\frac{\one\{x\neq\NaN\}}{1-\varphi^0(x)}r_{\hat\theta}(z)\right|\leq \frac{1}{1-\|\varphi^0\|_{\infty}}e^{\|\tilde\theta\|b}\sqrt{\frac{C'_0\log(1/\delta)}{n_\min}}
\end{align*}
Therefore A and B together give
\begin{align*}
|N^*-\hat N'|&\leq\left|\E[r_{\tilde{\theta}}(Z^0)]-\frac{1}{n_0}\sum_{i=1}^n r_{\tilde{\theta}}(Z^0_i)\right|+ \left|\frac{1}{n_0}\sum_{i=1}^{n_0}r_{\hat \theta'}(Z^0_i)-\frac{1}{n_0}\sum_{i=1}^{n_0}r_{\tilde{\theta}}(Z^0_i)\right|\\
&\leq \sqrt{C'_1\frac{\log(1/\delta)}{\effectiveSampleSize}} +\frac{1}{1-\|\varphi^0\|_{\infty}}e^{\|\tilde\theta\|b}\sqrt{\frac{C'_0\log(1/\delta)}{\effectiveSampleSize}}
\intertext{with}
    C_1&\coloneqq\frac{\sigma_\min\sqrt{C'_0}(\log(\alpha)+1)}{18M^2e^{4\|\tilde\theta\|b}\|\mu_1\|}.
\end{align*}
Therefore, taking $C'_N=(\sqrt{C'_0}+\exp\{\|\tilde\theta\|b\}\sqrt{C'_1})^2$ gives our desired result.
\end{proof}
\subsection{M-KLIEP Lower Bound}\label{app:KLIEP_L_bound}
For each $\zeta \in (0,\infty)$ let's define a density $f_\zeta$ of a measure supported on $[0,1]$ by
\begin{align*}
f_\zeta(y)=\frac{\zeta e^{\zeta y}}{e^\zeta-1},
\end{align*}
for $y \in [0,1]$, and $f_\zeta(y)=0$ for $y \notin [0,1]$. In addition, let $f_0$ denote the uniform density on $[0,1]$.

\begin{lemma}\label{ref:klBound} For all $\zeta \in [0,2]$ we have 
\begin{align*}
\mathrm{KL}(f_0,f_\zeta) \leq \frac{\zeta^2}{20}.
\end{align*}
\end{lemma}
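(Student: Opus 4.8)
The plan is to evaluate the divergence in closed form and then control it by a second-order Taylor argument. Since $f_0 \equiv 1$ on $[0,1]$, the first step is the computation
\begin{align*}
\KL(f_0, f_\zeta) = -\int_0^1 \log f_\zeta(y)\, \diff y = \log\frac{e^\zeta - 1}{\zeta} - \frac{\zeta}{2} =: g(\zeta),
\end{align*}
using $\int_0^1 y\,\diff y = \half$. The useful structural observation is that $f_\zeta$ is the exponential tilt of $f_0$, namely $f_\zeta(y) = f_0(y)\exp(\zeta y - \psi(\zeta))$ with log-partition function $\psi(\zeta) = \log\frac{e^\zeta-1}{\zeta}$, which is the cumulant generating function of $Y \sim \mathrm{Unif}[0,1]$. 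Hence $g(\zeta) = \psi(\zeta) - \zeta\,\E[Y]$, and the standard exponential-family identities give $g(0) = 0$, $g'(0) = \psi'(0) - \E[Y] = 0$, and $g''(\zeta) = \psi''(\zeta) = \var_\zeta(Y)$, the variance of $Y$ under the tilted density $f_\zeta$.

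Given these first two derivatives vanish at the origin, the conclusion will follow from a uniform bound on $g''$ via Taylor's theorem with integral remainder. Specifically, I would show $g''(\zeta) \le \tfrac{1}{10}$ for all $\zeta \in [0,2]$ and then write
\begin{align*}
g(\zeta) = \int_0^\zeta (\zeta - s)\, g''(s)\, \diff s \le \frac{1}{10}\cdot\frac{\zeta^2}{2} = \frac{\zeta^2}{20},
\end{align*}
which is exactly the claimed inequality. (In fact the argument below yields the sharper $g'' \le \tfrac{1}{12}$, hence even $g(\zeta) \le \zeta^2/24$, so there is comfortable slack.)

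The remaining task, which I expect to be the main obstacle, is the variance bound. Differentiating $\psi$ directly gives $g''(s) = \psi''(s) = \tfrac{1}{s^2} - \tfrac{1}{4\sinh^2(s/2)}$; substituting $u = s/2 \in [0,1]$ reduces the target to the hyperbolic inequality $\phi(u) := \tfrac{1}{u^2} - \tfrac{1}{\sinh^2 u} \le \tfrac13$. The plan here is to show that $\phi$ is nonincreasing on $(0,\infty)$ and that $\phi(u) \to \tfrac13$ as $u \to 0^+$, so that $\phi$ attains its supremum in the limit at the origin. Monotonicity, after differentiating and clearing denominators, amounts to the elementary inequality $u^3 \cosh u \le \sinh^3 u$ for $u > 0$; I would verify this either by comparing power series (the difference $\sinh^3 u - u^3\cosh u$ has leading term $\tfrac{1}{15}u^7$ and nonnegative coefficients thereafter) or by checking that $\sinh^3 u - u^3\cosh u$ together with enough of its derivatives vanish and are nonnegative at $u=0$. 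With this hyperbolic estimate in hand, $g''(s) = \tfrac14\phi(s/2) \le \tfrac{1}{12} \le \tfrac{1}{10}$ on $[0,2]$, and the Taylor bound above completes the proof.
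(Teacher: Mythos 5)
Your proof is correct, and after the opening step it takes a genuinely different route from the paper's. Both arguments begin identically, evaluating the divergence in closed form as $\mathrm{KL}(f_0,f_\zeta)=\log\frac{e^\zeta-1}{\zeta}-\frac{\zeta}{2}=\log\bigl\{\sinh(\zeta/2)/(\zeta/2)\bigr\}$. The paper then finishes in one line by citing an off-the-shelf Jordan-type inequality (Lemma 3.3(i) of Kl\'en et al.), $\sinh(x)/x\le 1+x^2/5$ for $x\in[0,1]$, followed by $\log(1+t)\le t$. You instead exploit the exponential-family structure: the quantity is $\psi(\zeta)-\zeta\,\E[Y]$ with $\psi$ the cumulant generating function of $Y\sim\mathrm{Unif}[0,1]$, so it vanishes to second order at $\zeta=0$ and $g''(\zeta)=\psi''(\zeta)=\zeta^{-2}-\bigl(4\sinh^2(\zeta/2)\bigr)^{-1}$ is the tilted variance; Taylor's theorem with integral remainder then reduces everything to $\sup g''\le 1/12$, which you correctly reduce to the hyperbolic inequality $u^3\cosh u\le\sinh^3 u$. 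That inequality does hold with the difference's expansion beginning at $\tfrac{1}{15}u^7$ and nonnegative coefficients thereafter; a clean way to certify this is to write $\sinh^3 u=\tfrac14(\sinh 3u-3\sinh u)$ and compare coefficients termwise, where the $u^3$ and $u^5$ terms cancel exactly and all higher terms are strictly positive. Your route is longer but fully self-contained, valid for all $\zeta\ge 0$ rather than only $[0,2]$, and yields the sharper bound $\zeta^2/24$; the paper's is shorter at the cost of leaning on an external lemma. Either argument establishes the stated $\zeta^2/20$ bound.
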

\begin{proof} Given any $\zeta \in (0,2]$ we have
\begin{align*}
\mathrm{KL}(f_0,f_\zeta) & = \int_0^1 \log\left(\frac{e^\zeta-1}{\zeta e^{\zeta y}}\right)dy = \log\left\lbrace \frac{\sinh(\zeta/2)}{(\zeta/2)}\right\rbrace \leq \log\left( 1+\frac{\zeta^2}{20}\right) \leq \frac{\zeta^2}{20},
\end{align*}
where the penultimate inequality uses the bound \cite[Lemma 3.3 (i)]{klen2010jordan}. The case where $\zeta=0$ is immediate.
\end{proof}

Let $P_0$ be the uniform distribution on $A_d:=[0,1/\sqrt{2}]\times [0,1/\sqrt{2(d-1)}]^{d-1}$, take $X^0=Z^0 \sim P_0$ and $C_d:=\sqrt{2^d\cdot (d-1)^{d-1}}$. Moreover, given $\zeta \in [0,2]$, $w_1 \in [0,1]$ and $\tau  \in \{0,1\}$ we write $P_1(\zeta,w_1,\tau)$ for the distribution on $X^1=(X^1_j)_{j\in [d]}$ constructed by choosing $Z^1$ with density $(z_j)_{j \in [d]}\mapsto C_d \cdot f_{\tau \cdot \zeta}(\sqrt{2}z_1)$ on $A_d$, and choosing $\varphi^1$ so that $\varphi^1(z) = w_1$ for all $z \in [0,1]^d$. We write $\overline{P}_{n_0,n_1}(\zeta,w_1,\tau)$ for the joint distribution on $\rsampm:=(\rsampm_0,\rsampm_1)$ where $\rsampm_0 \sim P_0^{n_0}$ and $\rsampm_1 \sim P_1(\zeta,w_1,\tau)^{n_1}$.

\begin{lemma}\label{ref:klBoundProduct} Given $\zeta \in [0,2]$, $n_1 \in \N$ and $w_1 \in [0,1]$ we have
\begin{align*}
\mathrm{KL}&\left\lbrace \overline{P}_{n_0,n_1}(\zeta,w_1,0),\overline{P}_{n_0,n_1}(\zeta,w_1,0)\right\rbrace  \leq \frac{n_1 (1-w_1)\zeta^2}{20}.
\end{align*}
\end{lemma}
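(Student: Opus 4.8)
The plan is to reduce the multi-sample, multi-coordinate KL divergence to the one-dimensional bound of Lemma \ref{ref:klBound} by repeatedly using the tensorization (additivity) of KL divergence across independent blocks. Here the two distributions being compared are the $\tau=0$ and $\tau=1$ members of the family, since $C_d \cdot f_{\tau\zeta}(\sqrt 2 z_1)$ is uniform when $\tau=0$ and carries the exponential tilt $f_\zeta$ in the first coordinate when $\tau=1$. First I would note that under $\overline{P}_{n_0,n_1}(\zeta,w_1,\tau)$ the sample factorises as $\rsampm_0 \sim P_0^{n_0}$ and $\rsampm_1 \sim P_1(\zeta,w_1,\tau)^{n_1}$, and that the law of the $\rsampm_0$ block does not depend on $\tau$. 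Additivity of KL over these independent blocks then kills the $\rsampm_0$ contribution and collapses the $n_1$ IID copies of $X^1$ into a single factor:
\begin{align*}
\KL\left\{\overline{P}_{n_0,n_1}(\zeta,w_1,0),\overline{P}_{n_0,n_1}(\zeta,w_1,1)\right\} = n_1 \cdot \KL\left\{P_1(\zeta,w_1,0),P_1(\zeta,w_1,1)\right\}.
\end{align*}

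Second, and this is the crux, I would compute the single-observation KL between the two laws of $X^1$ while correctly accounting for the missingness mechanism. Because $\varphi^1 \equiv w_1$ is constant under both values of $\tau$, the law of $X^1$ places mass exactly $w_1$ on $\{\NaN\}$ in both cases, while on $\cal Z$ it has density $(1-w_1)\,q_\tau$ with respect to $\mu$, where $q_\tau$ is the density of $Z^1$ and we have used the identity $p''=(1-\varphi)p$ established in the proof of Lemma \ref{lemma:main_prob}. The atoms at $\NaN$ coincide and contribute nothing to the divergence, and on $\cal Z$ the common prefactor $(1-w_1)$ cancels inside the logarithm, so that only the integral of $(1-w_1)q_0$ against $\log(q_0/q_1)$ remains:
\begin{align*}
\KL\left\{P_1(\zeta,w_1,0),P_1(\zeta,w_1,1)\right\} = (1-w_1)\,\KL\left(q_0,q_1\right).
\end{align*}
Verifying that the missing atom washes out and that precisely one factor of $(1-w_1)$ survives is the delicate part; this factor is exactly what produces the effective-sample-size scaling $n_1(1-w_1)$ underlying the lower bound of Theorem \ref{thm:drMissLB}.

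Finally I would reduce $\KL(q_0,q_1)$ to the scalar quantity of Lemma \ref{ref:klBound}. Both densities factorise over the coordinates of $A_d$, the marginals in coordinates $z_2,\dots,z_d$ are uniform and identical for $\tau\in\{0,1\}$, so additivity leaves only the first coordinate: $\KL(q_0,q_1)=\KL(g_0,g_1)$, where a short normalisation check (using $C_d=\sqrt{2^d(d-1)^{d-1}}$) gives the marginal $g_\tau(z_1)=\sqrt 2\, f_{\tau\zeta}(\sqrt 2 z_1)$ on $[0,1/\sqrt 2]$. The affine change of variables $u=\sqrt 2\, z_1$ is a bijection pushing $g_\tau$ forward to $f_{\tau\zeta}$ on $[0,1]$, and since KL divergence is invariant under such transformations, $\KL(g_0,g_1)=\KL(f_0,f_\zeta)$. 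Applying Lemma \ref{ref:klBound} for $\zeta\in[0,2]$ gives $\KL(f_0,f_\zeta)\leq\zeta^2/20$, and chaining the three displays yields the claimed bound $n_1(1-w_1)\zeta^2/20$. Apart from the missingness bookkeeping in the second step, the tensorization and change-of-variable arguments are entirely routine.
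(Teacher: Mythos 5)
Your proof is correct and follows essentially the same route as the paper's: tensorise the KL divergence over the independent sample blocks so that only the $n_1$ copies of $X^1$ contribute, reduce the single-observation divergence to $(1-w_1)\,\mathrm{KL}(f_0,f_\zeta)$ after observing that the $\NaN$ atom and the coordinates $z_2,\dotsc,z_d$ contribute nothing, and invoke Lemma \ref{ref:klBound}. You also correctly read the statement as comparing $\tau=0$ with $\tau=1$ (the display in the lemma repeats $\tau=0$ twice, evidently a typo), and your explicit bookkeeping of the missingness atom makes rigorous a step the paper's computation passes over silently.
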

\begin{proof} We have 
\begin{align*}
\mathrm{KL}&\left\lbrace P_1(\zeta,w_1,0),P_1(\zeta,w_1,1)\right\rbrace  = \int_{\mathcal{X}} \log\left( \frac{dP_1(\zeta,w_1,0)}{dP_1(\zeta,w_1,1)} \right) dP_1(\zeta,w_1,0)\\
& =(1-w_1) \cdot \int_{A_d} \log\left( \frac{C_d \cdot f_{0}(\sqrt{2} \cdot z_{1})}{ C_d \cdot f_{\zeta}(\sqrt{2} \cdot z_{1})} \right) \left(C_d \cdot f_{0}(\sqrt{2} \cdot z_{1})\right) dz_{1}\\
& =(1-w_1) \cdot \int_{[0,1]} \log\left( \frac{ f_{0}(y)}{ f_{\zeta}(y)} \right)  f_{0}(y) dy\\
&= (1-w_1) \cdot \mathrm{KL}(f_0,f_\zeta) \leq \frac{(1-w_1) \cdot \zeta^2}{20},
\end{align*}
where we used Lemma \ref{ref:klBound} in the final step. Hence, by the product rule for Kullback-Leibler divergence \cite[Chapter 2]{tsybakov2009introductiona} we have
\begin{align*}
\mathrm{KL}&\left\lbrace \overline{P}_{n_0,n_1}(\zeta,w_1,0),\overline{P}_{n_0,n_1}(\zeta,w_1,0)\right\rbrace \\ &= n_0 \cdot \mathrm{KL}\left(P_0,P_0\right) + n_1 \cdot \mathrm{KL}\left\lbrace P_1(\zeta,w_1,0),P_1(\zeta,w_1,1)\right\rbrace  \leq \frac{n_1(1-w_1)\zeta^2}{20}.
\end{align*}

\end{proof}

We are now ready to complete the proof of Theorem \ref{thm:drMissLB}.

\begin{proof}[Proof of Theorem \ref{thm:drMissLB}] We assume, without loss of generality, that $n_1w_1 \leq n_0 w_0$. Suppose $Z^0 \sim P_0$ and $Z^1$ has density $(z_j)_{j \in [d]}\mapsto \sqrt{2} \cdot f_{\tau \cdot \zeta}(\sqrt{2} \cdot z_1)$ for some $\zeta \in [0,1/\sqrt{2}]$, $\tau  \in \{0,1\}$ then the density ratio between the densities of $Z^1$ and $Z^0$ is proportional to $e^{\theta(\zeta,\tau)^\top z}$ on $[0,1]^d$ where $\theta(\zeta,\tau)=(\sqrt{2} \cdot \tau \cdot \zeta ,0,\ldots,0)^\top \in \R^d$, and hence $\|\theta(\zeta,\tau)\| \leq 1$. Next, we convert $\hat{\theta}=(\hat{\theta}_j)_{j \in [d]}$ into an estimator $\hat{\tau}$ by $\hat{\tau}=\one\{\hat{\theta}_j>\zeta/\sqrt{2}\}$. Consequently, for $\tau \in \{0,1\}$ we have
\begin{align*}
\|\hat{\theta}-\theta(\zeta,\tau)\|_2&\geq |\hat{\theta}_1-\zeta \tau| \geq \frac{\zeta}{\sqrt{2}}\cdot \one\{\hat{\tau}\neq \tau\}.
\end{align*}
Hence, by \cite[Theorem 2.2(iii)]{tsybakov2009introductiona} for at least one $\tau \in \{0,1\}$ we have
\begin{align*}
\min_{\tau \in \{0,1\}}&\prob_{\rsampm \sim \overline{P}_{n_0,n_1}(\zeta,w_1,\tau)}\left\lbrace \|\hat{\theta}-\theta(\zeta,\tau)\|_2 \geq \frac{\zeta}{\sqrt{2}}  \right\rbrace \\& \geq \min_{\tau \in \{0,1\}}\prob_{\rsampm \sim \overline{P}_{n_0,n_1}(\zeta,w_1,\tau)}\left( \hat{\tau}\neq \tau \right) \geq \frac{1}{4}\cdot \exp\left(-\frac{n_1 (1-w_1)\zeta^2}{20}\right).
\end{align*}
To complete the proof we take 
\begin{align*}
\zeta:=\min\left\lbrace  \sqrt{\frac{20\log(1/(4\delta))}{\min\{n_0(1-w_0),n_1(1-w_1)\}}} ,\frac{1}{\sqrt{2}}\right\rbrace.
\end{align*}
\end{proof}

\subsection{Proof of Missing NP Classifier Finite Sample Bounds}
Before we can proof this result we need to state a (relatively trivial) result on conditional probabilities.

\begin{lemma}\label{lem:cond_prob}
Let $X,Y$ be RVs on measurable space $(\cal Z,\cal B_{\cal Z})$ and define events $F\in\sigma(X)$, and $~E_0,E_1\in\cal B_{\cal Z}$. Suppose that $F\cap E_0\subseteq E_1$ then
\begin{align*}
    \prob(F\cap \{\E[\one_{E_0} Y|X]>\E[\one_{E_1}Y|X]\})=0.
\end{align*}
\end{lemma}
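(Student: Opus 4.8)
The plan is to exploit the defining ``pull-out'' property of conditional expectation together with a two-sided sign argument. Write $G:=\{\E[\one_{E_0}Y\mid X]>\E[\one_{E_1}Y\mid X]\}$ for the event in question, so that the goal becomes $\prob(F\cap G)=0$. The first step is to observe that both $F$ and $G$ lie in $\sigma(X)$: $F$ does by hypothesis, and $G$ does because $\E[\one_{E_0}Y\mid X]$ and $\E[\one_{E_1}Y\mid X]$ are (versions of) $\sigma(X)$-measurable functions, so their difference, and hence the super-level set $G$, are $\sigma(X)$-measurable. Consequently $\one_{F\cap G}=\one_F\one_G$ is $\sigma(X)$-measurable, which is what will later license pulling it inside a conditional expectation.

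The second step translates the set inclusion $F\cap E_0\subseteq E_1$ into a pointwise inequality of indicators. Since $\omega\in F\cap E_0$ implies $\omega\in E_1$, we have $\one_F\one_{E_0}\le\one_{E_1}$, and therefore $\one_F(\one_{E_1}-\one_{E_0})\ge 0$ everywhere. Multiplying by the weight $Y\ge 0$ --- which is the natural setting here, since in the application $Y$ is the inverse-probability weight $\one\{X^0\neq\NaN\}/(1-\varphi^0(X^0))\ge 0$ --- gives $\one_F(\one_{E_1}-\one_{E_0})Y\ge 0$.

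The core step is then to evaluate $J:=\E\!\left[\one_{F\cap G}(\one_{E_1}-\one_{E_0})Y\right]$ in two ways. On one hand, $\one_{F\cap G}(\one_{E_1}-\one_{E_0})Y=\one_G\cdot\one_F(\one_{E_1}-\one_{E_0})Y\ge 0$ by Step~2, so $J\ge 0$. On the other hand, because $\one_{F\cap G}$ is $\sigma(X)$-measurable we may pull it inside the conditional expectation and use linearity to obtain $J=\E\!\left[\one_{F\cap G}\big(\E[\one_{E_1}Y\mid X]-\E[\one_{E_0}Y\mid X]\big)\right]$; on $G$ the bracket is strictly negative, so the integrand is $\le 0$ and hence $J\le 0$. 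Combining the two bounds forces $J=0$.

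To finish, I would note that in the second representation the integrand is $\le 0$ everywhere and strictly negative wherever $\one_{F\cap G}=1$; a vanishing integral of a function that is $\le 0$ and strictly negative on $\{F\cap G\}$ forces $\one_{F\cap G}=0$ almost surely, i.e.\ $\prob(F\cap G)=0$. The argument is essentially routine once the set-up is correct; the only points requiring care are the measurability of $G$ (so that $F\cap G\in\sigma(X)$ and the pull-out property genuinely applies) and the sign bookkeeping in this last step, where one must use that $\E[\one_{E_1}Y\mid X]-\E[\one_{E_0}Y\mid X]$ is \emph{strictly}, not merely weakly, negative on $G$ in order to pass from $J=0$ to $\prob(F\cap G)=0$.
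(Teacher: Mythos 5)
Your proof is correct and takes essentially the same route as the paper's: both pull the $\sigma(X)$-measurable indicator $\one_{F\cap G}$ inside the conditional expectation and play the resulting sign off against the pointwise inequality $\one_F\one_{E_0}\le\one_{E_1}$ coming from $F\cap E_0\subseteq E_1$, concluding via the ``nonpositive integrand, zero integral, strictly negative on the event'' argument (the paper's ``equality iff $\prob(F\cap C)=0$'' step). Your explicit observation that $Y\ge 0$ is required is a genuine improvement in precision --- the paper's statement omits this hypothesis even though its own proof multiplies the indicator inequality by $Y$, and the lemma is false without it.
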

\begin{proof}
Define the event $C\coloneqq\{\E[\one_{E_0}Y|X]>\E[\one_{E_1}Y|X]\}$ then by construction, as $F\cap C\in\sigma(X)$, this gives
\begin{align*}
    \E[\one_{E_0\cap F\cap C} Y]&=\E[\E[\one_{E_0}Y|X]\one_{F\cap C}]\\
    &\geq\E[\E[\one_{E_1}Y|X]\one_{F\cap C}]\\
    &=\E[\one_{E_1\cap F\cap C}Y]
\end{align*}
where equality holds throughout if and only if $\prob(F\cap C)=0$.
Alternatively, as $F\cap E_0\subseteq E_1$ we have that $F\cap E_0\cap C\subseteq F\cap E_1\cap C$
so that
\begin{align*}
    \E[\one_{F\cap C\cap E_0}Y]&\leq\E[\one_{F\cap C\cap E_1}Y].
\end{align*}
Therefore equality holds and $\prob(F\cap C)=0$.
\end{proof}

We can now go on to prove Theorem \ref{thm:NP_pow_bound}
\begin{proof}[Proof of Theorem \ref{thm:NP_pow_bound}] This proof is adapted from \cite{Tong2013}.

For notational simplicity we define functions $\hat h,\tilde h:\cal Z\rightarrow\R$ by
\begin{align*}
    \hat h(z)&\coloneqq g\circ r_{\hat\theta} \\
    \tilde h(z)&\coloneqq g\circ r_{\tilde\theta}
\end{align*}
so that $R_1(\hat \phi),~R_1(\tilde\phi)$ can be re-written as
\begin{align*}
    \hat\phi=\prob(\hat h(Z^1)\leq\hat C_{\alpha,\delta,\hat h})\\
    \tilde\phi=\prob(\tilde h(Z^1)\leq C^*_{\alpha,\tilde h})
\end{align*}
Define events $G_0,G_1$ by
\begin{align*}
    G_0\coloneqq\{\tilde h(Z^0)>C^*_{\alpha,\tilde h},~\hat h(Z^0)\leq\hat C_{\alpha,\delta,\hat h}\}\\
    G_1\coloneqq\{\tilde h(Z^0)\leq C^*_{\alpha,\tilde h},~\hat h(Z^0)>\hat C_{\alpha,\delta,\hat h}\}\\
\end{align*}
We now have that
\begin{align*}
    R_1(\hat\phi)-R_1(\tilde\phi)=\E\left[\one G_0 \cup G_1 
    \left| \tilde{h(Z^0)} - C^*_{\alpha,\tilde{h}}\right|\right]
    +2C^*_{\alpha,\tilde h}(\prob( \tilde h(Z^0)\geq C^*_{\alpha,\tilde h})-\prob(\hat h(Z^0)\geq\hat C_{\alpha,\delta,\hat h}))
\end{align*}

For ease of notation we will define $\Delta_{\effectiveSampleSize^0,\delta}$ as in Lemma \ref{lemma:NP_miss_typeI}, that is,
\begin{align*}
    \Delta_{\effectiveSampleSize^0,\delta}&\coloneqq\sqrt{\frac{16\log(1/\delta)}{\effectiveSampleSize^0}}
\end{align*}
Now define 2 events $E_0,~E_1$ as follows
\begin{align*}
    E_0&\coloneqq\{\|\hat \theta-\tilde{\theta}\|\leq\eps''\}\\
    E_1&\coloneqq\{\prob(\hat h(X^0))\geq\hat C_{\alpha,\delta,\hat h}|\rsampm)\geq\alpha-2\Delta_{\effectiveSampleSize^0,\delta}\}
    \intertext{where}
    \eps''&\coloneqq\sqrt{\frac{C'_0\log(1/\delta)}{\effectiveSampleSize}}.
\end{align*}

Theorem \ref{thm:MKLIEP_bound} and Lemma \ref{lemma:NP_miss_typeI} give us that both these events occur w.p. $1-\delta$ so that $\prob(E_0\cap E_1)\geq1-2\delta$. We now aim to show that, 
\begin{align*}
    E_0\cap E_1 \subseteq \{R_1(\hat\phi)-R_1(\tilde\phi)<\eps'\}.
\end{align*}

We immediately have that 
\begin{align*}
    E_1\subseteq\{\prob( \tilde h(Z^0)\geq C^*_{\alpha,\tilde h})-\prob(\hat h(Z^0)\geq\hat C_{\alpha,\delta,\hat h}|\rsampm)\leq2\Delta_{\delta,\effectiveSampleSize}\}
\end{align*}

Additionally, we note that as $\|\nabla_\theta g\circ r_{\theta}(z)\|\leq L$ for all $z,\theta$, so that
\begin{align*}
E_0\subseteq\{\sup_{z\in\cal Z}\|\tilde h(z)-\hat h(z)\|^2\leq\eps''L\}.
\end{align*}

We now aim to bound $\hat C_{\alpha,\delta,g}$ above.
Taking the intersection of these two events we get that
\begin{align*}
    E_0\cap E_1\subseteq\{\prob(\tilde h(Z^0)\geq\hat C_{\alpha,\delta,\hat h}-\eps''L|\hat C_{\alpha,\delta,\hat h})\geq\alpha-2\Delta_{\effectiveSampleSize^0,\delta}\}
\end{align*}
On the other hand, the lower bound on our condition gives that $2\Delta_{\effectiveSampleSize^0,\delta}>a$,
\begin{align*}
    2\Delta_{\effectiveSampleSize^0,\delta}&\leq\prob[C^*_{\alpha,\tilde h}<\tilde h(Z^0)\leq C^*_{\alpha,\tilde h}+(2B_0\Delta_{\effectiveSampleSize^0,\delta})^{1/\gamma_0}]\\
    &=\prob[\tilde h(X^0)\leq C^*_{\alpha,\tilde h}+(2B_0\Delta_{\effectiveSampleSize^0,\delta})^{1/\gamma_0}]-
    \prob[\tilde h(X^0)\leq C^*_{\alpha,\tilde h}]\\
    &=\prob[\tilde h(X^0)\leq C^*_{\alpha,\tilde h}+(2B_0\Delta_{\effectiveSampleSize^0,\delta})^{1/\gamma_0}]-(1-\alpha). 
\end{align*}
Combining these two results we get that
\begin{align*}
    E_0\cap E_1\subseteq\{\prob[\tilde h(X^0)\geq C^*_{\alpha,\tilde h}+(2B_0\Delta_{\effectiveSampleSize^0,\delta})^{1/\gamma_0}]\leq \alpha-2\Delta_{\effectiveSampleSize^0,\delta} \leq\prob[\tilde h(X^0)\geq \hat C_{\alpha,\delta,\tilde h}-\eps'L|\hat C_{\alpha,\delta,\hat h}]\},
\end{align*}
and hence, $E_0\cap E_1\subseteq\{ \hat C_{\alpha,\delta,g}<C^*_{\alpha,\tilde h}+\eps''L+(2B_0\Delta_{\effectiveSampleSize^0,\delta})^{1/\gamma_0}\}$ which implies that
\begin{align*}
    E_0\cap E_1\subseteq\{C^*_{\alpha,\tilde h}\leq \tilde h(X^0)\leq C^*_{\alpha,\tilde h}+2\eps''L+(2B_0\Delta_{\effectiveSampleSize^0,\delta})^{1/\gamma_0}\}=:G'_0.
\end{align*}

We can now use this to bound the expectation. Indeed using Lemma \ref{lem:cond_prob} we have that
\begin{align*}
    E_0\cap E_1\subseteq\{\E[\one\{G_0\} |\tilde h(Z^0)-C^*_{\alpha,\tilde h}|~|\rsampm]&\leq\E[\one\{G'_0\}|\tilde h(Z^0)-C^*_{\alpha,\tilde h}| |\rsampm]\}\\
\end{align*}
We we also have
\begin{align*}
    \E[\one\{G'_0\}|\tilde h(Z^0)-C^*_{\alpha,\tilde h}| |\rsampm]
    &\leq \left(2\eps''L+(2B_0\Delta_{\effectiveSampleSize^0,\delta})^{1/\gamma_0}\right)\prob(G'_0)\\
    &\leq B_2(2\eps''L+(2B_0\Delta_{\effectiveSampleSize^0,\delta})^{1/\gamma_0})^{\gamma_1+1}
    \intertext{with}
    B_2&\coloneqq \min\left\{\frac{1}{B_0a^{\gamma_1/\gamma_0}}, B_1\right\}
\end{align*}
where the final step uses our probability lower bound. 

By an identical argument again 
\begin{align*}
E_0\cap E_1\subseteq\{\E[\one_{G_1} |\tilde h(Z^0)-C^*_{\alpha,\tilde h}|~|\rsampm] \leq B_2(2\eps''L+(2B_0\Delta_{\effectiveSampleSize^0,\delta})^{1/\gamma_0})^{\gamma_1+1}\}
\end{align*}
and so we get 
\begin{align*}
    E_0\cap E_1\subseteq\{R_1(\hat\phi)-R_1(\tilde\phi)\leq 2B_2(2\eps''L+(2B_0\Delta_{\effectiveSampleSize^0,\delta})^{1/\gamma_0})^{\gamma_1+1}+2C^*_{\alpha,\tilde h}\Delta_{\delta,\effectiveSampleSize^0}\}.
\end{align*}
Taking $C_2\coloneqq4B_2(L\vee B_0)(\gamma_1+1)+2C^*_{\alpha,\tilde h}$ therefore gives 
$E_0\cap E_1\subseteq\{R_1(\hat\phi)-R_1(\tilde\phi)\leq\eps'\}$ and hence,
\begin{align*}
    \prob(R_1(\hat\phi)-R_1(\tilde\phi)\leq\eps)&\leq\prob(R_1(\hat\phi)-R_1(\tilde\phi)\leq\eps,~E_0\cap E_1)\\
    &=\prob(E_0\cap E_1)\\
    &=1-2\delta.
\end{align*}
\end{proof}

\section{SUPPLEMENTARY METHODS}
\subsection{\texorpdfstring{$f$}{f}-Divergence based DRE}\label{app:Other_DRE}
We now present another example DRE which relates to the $f$-Divergence. First we state the following theorem on which these approaches are built
\begin{theorem}
Let $f:(0,\infty)\rightarrow \R$ be convex and lower-semicontinuous function and define $f':\R_+\rightarrow\R$ to be the derivative of $f$. Firstly, there exists unique $f^*:A\rightarrow \R$ with $A\subseteq\R$ by
\begin{align*}
    f^*(t)\coloneqq\sup_{u\in (0,\infty)}{u t - f(u)}
\end{align*}
Secondly, 
\begin{align*}
    \argmin{T\in\cal T}\E[T(Z^1)]-\E[f^*(T(Z^0))]=T^* 
\end{align*}
where, $T^*\coloneqq f'\circ r^*$ and $\cal T$ is some set of non-negative functions containing $T^*$.
\end{theorem}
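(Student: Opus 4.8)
The plan is to recognise this as the variational (Fenchel--Legendre) characterisation of the $f$-divergence and to prove it by reducing the optimisation over $\cal T$ to a pointwise optimisation problem. For the first assertion, $f^*$ is simply the convex conjugate of $f$: since $f$ is convex and lower-semicontinuous, the quantity $f^*(t) = \sup_{u>0}\{ut - f(u)\}$ is a well-defined element of $\R\cup\{+\infty\}$ for each $t$, the set $A\coloneqq\{t : f^*(t)<\infty\}$ is its effective domain, and uniqueness is immediate because the conjugate is determined by the defining supremum. Here I would simply invoke standard convex analysis (the Fenchel--Moreau theorem) rather than reprove it.

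For the second assertion, I would first rewrite the objective as a single integral against $p_0$. Writing $r^* = p_1/p_0$ (well defined since $p_0$ is strictly positive),
\begin{align*}
\E[T(Z^1)] - \E[f^*(T(Z^0))] = \int_{\cal Z} p_0(z)\big[\, T(z)\, r^*(z) - f^*(T(z)) \,\big]\, \mu(\diff z).
\end{align*}
Since the integrand at each $z$ depends on $T$ only through the scalar value $t = T(z)$, and $p_0(z)\geq 0$, the functional is optimised by optimising the bracket pointwise: for fixed $z$ I would solve $\sup_{t}\{\, t\, r^*(z) - f^*(t)\,\}$.

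The crux is then the biconjugation step. By Fenchel--Moreau, convexity and lower-semicontinuity of $f$ give $f^{**}=f$, so
\begin{align*}
\sup_{t}\{\, t\, r^*(z) - f^*(t)\,\} = f^{**}(r^*(z)) = f(r^*(z)),
\end{align*}
with the supremum attained exactly at those $t$ with $t\in\partial f(r^*(z))$; under the assumed differentiability of $f$ this singles out $t = f'(r^*(z))$. Hence the pointwise optimiser is $T^*(z) = f'(r^*(z)) = (f'\circ r^*)(z)$, the optimal value of the objective equals $\int_{\cal Z} p_0\, f(r^*)\,\mu(\diff z)$ (the $f$-divergence), and because $T^*\in\cal T$ by hypothesis while the pointwise optimum of a decoupled integral is a global optimum, $T^*$ solves the optimisation over $\cal T$.

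I expect the main obstacle to be the rigorous justification of exchanging the functional optimisation with the pointwise one: this needs the integrals to be well defined (measurability of $z\mapsto f^*(T(z))$ and integrability), care at boundary values of $r^*$ where $\partial f$ may be empty or set-valued, and the standing assumption that $\cal T$ is rich enough to contain $T^*=f'\circ r^*$ so that the pointwise optimum is attainable within the admissible class. A secondary point worth flagging is the optimisation direction: $f^*$ is convex, so the objective is \emph{concave} in $T$ and is maximised at $T^*$ (its optimal value being the $f$-divergence); the statement is therefore really a maximisation rather than a minimisation, and I would align the sign convention accordingly.
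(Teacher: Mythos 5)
Your proof is correct. Note, however, that the paper does not actually prove this theorem: it is stated without proof in Appendix \ref{app:Other_DRE} as the standard variational (Fenchel--Legendre) characterisation of $f$-divergences, so there is no in-paper argument to compare against. Your route --- rewriting the objective as $\int_{\cal Z} p_0(z)\bigl[T(z)r^*(z)-f^*(T(z))\bigr]\mu(\diff z)$, optimising the bracket pointwise, and invoking $f^{**}=f$ with the supremum attained at $t\in\partial f(r^*(z))$ --- is exactly the classical proof of this representation. Your observation about the optimisation direction is also correct and worth recording: since $f^*$ is convex, the functional is concave in $T$ and is \emph{maximised} at $T^*=f'\circ r^*$, with optimal value equal to the $f$-divergence; the ``argmin'' in the statement is a sign slip, which the paper itself implicitly confirms by writing the immediate corollary below the theorem with an argmax. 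The caveats you flag (measurability and integrability of $f^*\circ T$, behaviour where $r^*(z)$ hits the boundary of the domain of $f'$ or where $\partial f$ is empty or set-valued, and uniqueness of the maximiser, which requires strict convexity of $f$ at the relevant points so that the pointwise supremum is attained at a single $t$) are precisely what a fully rigorous version would need to pin down, and none of them derails the argument for the differentiable, strictly convex choices of $f$ used in the paper's examples.
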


As a direct result of this, for any set of positive real values functions $\cal G$ containing $r^*$ we have that
\begin{align*}
    r^*&\coloneqq\argmax{r\in\cal G}\E[f'(r(Z^1))]-\E[f^*(f'(r(Z^0))_]
\end{align*}

We can then approximate this using $\rsamp$ to get
\begin{align*}
    \frac{1}{n_1}\sum_{i=1}^{n_1}f'(r(Z^1_i))-\frac{1}{n_0}\sum_{i=1}^{n_0}f^*(f'(r(Z^0_i)))
\end{align*}
Note that while we have theoretical guarantees of consistency in the case of correctly specified $\cal G$, we have much less justification for this approach in the case of incorrectly specified $\cal G$. 
This is simply because from a heuristic perspective, we have no idea if  
\begin{align*}
    \tilde r&\coloneqq\argmax{r\in\cal G}\E[f'(r(Z^1))]-\E[f^*(f'(r(Z^0)))]
\end{align*} is sensible approximation of $r^*$ when $\cal G$ is miss-specified. 
As a result, one could argue that KLIEP is a more principled approach as in the objective we are approximating we choose some notion of the ``closest" $r\in\cal G$ to $r^*$.  

We now go on to expand this process in the case of 2 popular choice of $f$
\subsubsection*{JS-Divergence}
If we take $f(u)\coloneqq u\log(u)-(u + 1)\log\left(\frac{1+u}{2}\right)$ then the associated $f$-divergence is the JS-Divergence. 
The corresponding $f':(0,\infty)\rightarrow \R,~f^*:(\log(2),\infty)\rightarrow\R$ are then given by

\begin{align*}
    f'(t)&\coloneqq \log\left(\frac{2t}{1+t}\right) & f^*(t)&\coloneqq-\log(2-\exp(t))
\end{align*} giving us that
\begin{align*}
    r^*&\coloneqq\argmax{r\in\cal G}\E[f'(r(Z^1))]-\E[f^*(f'(r(Z^0)))]\\
    &=\argmax{r\in\cal G}\E\left[\log\left(\frac{2r(Z^1)}{1+r(Z^1)}\right)\right]+\E\left[\log\left(\frac{2}{1+r(Z^0)}\right)\right]\\
    &=\argmax{r\in\cal G}\E\left[\log\left(\frac{r(Z^1)}{1+r(Z^1)}\right)\right]+\E\left[\log\left(\frac{1}{1+r(Z^0)}\right)\right]\\
\end{align*}
for correctly specified $\cal G$. The form that our approximation for this take is highly similar to the form of the logistic regression approach and is in fact identical when $n_1=n_0$. The only difference being that this approach adjusts for the class imbalance directly in the estimator while the logistic regression approach adjusts for it after the fact.

\subsubsection*{KL-Divergence}
If we take $f(t)\coloneqq t\log(t)$ then the corresponding $f$-divergence is the KL-divergence.
This choice of $f$ has corresponding $f^*:\R\rightarrow\R,~ f':(0,\infty)\rightarrow R$ defined by 
\begin{align*}
    f^*(t)&\coloneqq\exp\{t-1\}& f'(t)&\coloneqq1+\log(t).
\end{align*}

Thus we get that
\begin{align*}
    r^*&\coloneqq\argmax{r\in\cal G}\E[f'(r(X^1))]-\E[f^*(f'(r(X^0)))]\\
    &=\argmax{r\in\cal G}\E[\log(r(X^1))]-\E[r(X^0)].
\end{align*}
Note the objective of this estimator differs from KLIEP as $\E[r(X^0)]$ is not logged.

\subsubsection{Adaptations to MNAR Data} \label{app:Other_DRE_miss}
Adapting this to work with $\rsampm$ simply equates to modifying the objective to be
\begin{align*}
    \frac{1}{n_1}\sum_{i=1}^{n_0}\frac{\one\{X_i^1\neq\NaN\}}{1-\varphi^1(X_i^1)}f'(X_i^1)- \frac{1}{n_0}\sum_{i=1}^{n_0}\frac{\one\{X_i^0\neq\NaN\}}{1-\varphi^0(X_i^0)}f^*(f'(X_i^0)).
\end{align*}

\subsection{Original Neyman-Pearson Classification}\label{app:np_class}
We now describe the Neyman-Pearson classification procedure laid out in \citep{Tong18}. This procedure is describe in Algorithm \ref{alg:np} and constructs a classifier from our data. We assume we have $n_0$ additional copies of $Z^0$ which we label $Z^0_{n_0+1},\dotsc,Z^0_{2n_0}$ however the algorithm adapts to any number of samples from either distribution. We also let $g:\R\rightarrow\R$ be any strictly increasing function.
\begin{algorithm}[ht]
\caption{Neyman-Pearson Classification Procedure} \label{alg:np}
\begin{algorithmic}[1]
\vspace{0.1cm}
\State Use $\{Z_i^1\}_{i=1}^{n_1},\{Z_i^0\}_{i=1}^{n_0}$ to estimate $r^*$ with $\hat r$ by any DRE procedure.
\vspace{0.2cm}
\State Set $i^*=\min\{i\in\{1,\dotsc,n_0\}|\prob(W\geq i)\leq\delta\}$ where $W\sim \text{Binomial}(n_0,1-\alpha)$
\vspace{0.2cm}
\State For $i\in\{1,\dotsc,n_0\}$ compute $\hat r_i\coloneqq g\circ \hat r(Z_{n_0+i}^0)$
\vspace{0.2cm}
\State Sort $\hat r_1,\dotsc,\hat r_{n_0}$ in increasing order to get $\hat r_{(1)},\dotsc,\hat r_{(n_0)}$ with $\hat r_{(i)}\leq \hat r_{(i+1)}$ 
\vspace{0.2cm}
\State Define $\hat C_{\alpha,\delta,g\circ \hat r}\coloneqq \hat r_{(i^*)}$
\vspace{0.2cm}
\State Define the classifier $\hat\phi_{\rsamp}$ by 
\begin{align*}
    \hat\phi_{\rsamp}(z)=\one\{g\circ\hat r(z)>C_{\alpha,\delta,g\circ \hat r}\} \quad\text{ for all $z\in\cal Z$.}
\end{align*}
\end{algorithmic}
\end{algorithm}
Note that the classifier produced by the algorithm does not depend on $g$.

\begin{prop}
Algorithm \ref{alg:np} constructs a Neyman-Pearson classifier satisfying Equation (\ref{eq:type1er2}).
\end{prop}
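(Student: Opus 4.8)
The plan is to exploit the two–sample split built into Algorithm~\ref{alg:np}: the estimate $\hat r$ is trained on $\{Z_i^1\}_{i=1}^{n_1},\{Z_i^0\}_{i=1}^{n_0}$, whereas the threshold $\hat C_{\alpha,\delta,g\circ\hat r}$ is selected from the held-out copies $Z^0_{n_0+1},\dotsc,Z^0_{2n_0}$. Conditioning on the first split (equivalently, on $\hat r$), the variables $T_i:=g\circ\hat r(Z^0_{n_0+i})$ for $i\in[n_0]$ are IID copies of $T:=g\circ\hat r(Z^0)$, and by Steps~4--5 the threshold is exactly their $i^*$-th order statistic, $\hat C_{\alpha,\delta,g\circ\hat r}=T_{(i^*)}$. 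Writing $\bar{F}(c):=\prob(T>c\mid\hat r)$ for the conditional survival function (non-increasing and right-continuous), and using that the fresh test point $Z^0$ is independent of all training data, the conditional Type~I error of the returned classifier is precisely $\prob(\hat\phi_{\rsamp}(Z^0)=1\mid\rsamp)=\bar{F}(\hat C_{\alpha,\delta,g\circ\hat r})=\bar{F}(T_{(i^*)})$. Since the bound I aim for will be uniform in $\hat r$, it suffices to prove $\prob(\bar{F}(T_{(i^*)})\leq\alpha\mid\hat r)\geq 1-\delta$ and then take expectations over $\hat r$ to recover~(\ref{eq:type1er2}).

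The core is a standard order-statistic/binomial tail argument carried out conditionally on $\hat r$. First I would set $c_\alpha:=\inf\{c\in\R:\bar{F}(c)\leq\alpha\}$; right-continuity and monotonicity of $\bar{F}$ give $\bar{F}(c_\alpha)\leq\alpha$ and $\bar{F}(c)>\alpha$ for $c<c_\alpha$, so that $\{\bar{F}(T_{(i^*)})>\alpha\}\subseteq\{T_{(i^*)}<c_\alpha\}$. Next, $T_{(i^*)}<c_\alpha$ holds iff at least $i^*$ of the $T_i$ lie strictly below $c_\alpha$, whence
\begin{align*}
\prob\left(T_{(i^*)}<c_\alpha \,\middle|\, \hat r\right)=\prob\left(\sum_{i\in[n_0]}\one\{T_i<c_\alpha\}\geq i^* \,\middle|\, \hat r\right).
\end{align*}
The definition of $c_\alpha$ forces $\prob(T<c_\alpha\mid\hat r)\leq 1-\alpha$ (let $c\uparrow c_\alpha$ in $\prob(T\leq c)<1-\alpha$), so $\sum_i\one\{T_i<c_\alpha\}$ is stochastically dominated by $W\sim\mathrm{Binomial}(n_0,1-\alpha)$. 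Hence the right-hand side is at most $\prob(W\geq i^*)$, which is $\leq\delta$ by the choice of $i^*$ in Step~2. Combining the two displays gives $\prob(\bar{F}(T_{(i^*)})>\alpha\mid\hat r)\leq\delta$, and integrating over $\hat r$ closes the argument.

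The one delicate point, and the step I expect to require the most care, is the possibility of atoms in the law of $T$ (equivalently, ties among the $T_i$); this is why I work with the survival function $\bar{F}$ and the strict-inequality set $\{T_i<c_\alpha\}$ rather than assuming continuity. The stochastic-domination passage is exactly what makes the binomial tail valid irrespective of atoms, so no continuity assumption on $Z^0$ or $\hat r$ is needed. The structural fact that the whole proof rests on is the independence of the thresholding sample from $\hat r$, guaranteed by the sample split: it is what licenses the conditioning and, crucially, makes the resulting $\delta$-bound independent of $\hat r$ so that it survives integration.
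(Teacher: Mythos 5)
Your proposal is correct and follows essentially the same route as the paper's proof: condition on $\hat r$, identify the threshold as the $i^*$-th order statistic of the held-out class-$0$ evaluations, use right-continuity of the conditional survival function to define the population quantile $c_\alpha$ with $\bar F(c_\alpha)\leq\alpha$, reduce the failure event to a binomial tail via $\prob(T<c_\alpha\mid\hat r)\leq 1-\alpha$, and invoke the definition of $i^*$ before integrating over $\hat r$. Your handling of atoms via the inclusion $\{\bar F(T_{(i^*)})>\alpha\}\subseteq\{T_{(i^*)}<c_\alpha\}$ is in fact slightly more careful than the paper, which asserts this as an equality, but the argument is the same.
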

\begin{proof}
As the algorithm does not depend on $g$ we take $g$ as the identity in the proof.
We adapt the proof given in \citep{Tong18}. Let $\hat r\in\cal H$ and define $A$ to be the event
\begin{align*}
    A\coloneqq\{\prob(\hat\phi_{\rsamp}(Z^0)=+|\rsamp)>\alpha\}.
\end{align*}
We then aim to show that $\prob(A|\hat r)\leq\delta$. To this end let $C^*_{\alpha,\hat r}\coloneqq\inf\{C\in\R|\prob(\hat r(Z^0)>C|\hat r)\leq\alpha\}$. 

Now as $\prob(\hat r(Z^0)>C|\hat r)$ is a right continuous decreasing function we have that $\prob(\hat r(Z^0)>C_{\alpha,\hat r}|\hat r)\leq\alpha$.

Thus we can re-write $A$
\begin{align*}
    A&=\{\hat C < C^*\}\\
    &=\{\hat r_{(i^*)} < C^*\}\\
    &=\{i^* \text{ or more of } \hat \hat r_{1},\dotsc,\hat r_{n_0}\text{ are less than } C^*\}\\
    &=\{\sum_{i=1}^{n_0}B_i\geq i^*\}\quad\text{where }B_i\coloneqq\one\{\hat r_{i}< C^*\}.\\
    \intertext{Hence we have}
    \prob(A|\hat r)&=\prob(\sum_{i=1}^{n_0}B_i\geq i^*|\hat r).
\end{align*}
As $Z_{n_0+1}^0,\dotsc,Z_{2n_0}^0$ are independent, we have that $B_1,\dotsc,B_{n_0}$ are independent given $\hat r$. Furthermore, we have that $q\coloneqq\prob(B_i=1)\leq 1-\alpha$. If we now define $W'\sim\text{Binomial}(n_0,q),~W\sim\text{Binomial}(n_0,1-\alpha)$ we get that.
\begin{align*}
    \prob(A|\hat r)&=\prob(W'\geq i^*)\\
    &\leq\prob(W\geq i^*)\quad\text{as }q\leq 1-\alpha\\
    &\leq\delta \quad\text{by definition of } i^*.
\end{align*}

Finally, as $\prob(A|\hat r)\leq\delta$ for all non-negative measurable $\hat r$ we have that $\prob(A)\leq\delta$.
\end{proof}

\subsection{Logistic Regression Re-adjustment}\label{app:adj_logreg}
Here we describe how to use the adjusted logistic regression described in \cite{king2001} to learn the $\varphi$ under the scenario presented in Section \ref{sec:miss_learn}.

First define $I=\{i\in[n]|z_i \text{ is observed}\}$ so that for $i\in I$ either $X_i\neq\NaN$ or we have queried $Z_i$. We then refer to $\{(W_i,Z_i)\}_{i\in I}$ as our set of fully observed samples. 
These fully observed samples will have disproportionally few $W_i=1$ however $\{Z_i|W_i=1,i\in I\}$, $\{Z_i|W_i=0, i \in I\}$ still give exact samples from $Z|W=1,~Z|W=0$ respectively. 
\cite{king2001} propose a re-adjustment to logistic regression specifically for this case in which the true proportion each response are different to the proportion in your given data. 

Let $\hat \beta_0, \hat \beta_1$ be the logistic regression estimators for the coefficients of the intercept and slope respectively from our fully observed samples. Now let $\hat \tau$ be some consistent estimate of $\prob(W=1)$ and $\hat \tau_I=\frac{\sum_{i\in I}\one\{W_i=1\}}{|I|}$. Now define
\begin{align*}
    \hat \beta_0'=\hat \beta_0-\log\left\{\frac{(1-\hat\tau)\hat\tau_{I}}{\hat\tau(1-\hat\tau_{I}}\right\}
\end{align*}
Then $\hat \beta_0'$ and $\hat \beta_1$ are consistent estimates of the logistic regression parameters. In our case we will simply take $\hat \tau=\frac{\sum_{i\in[n]}w_i=1}{n}$. This allows us to simplify and get
\begin{align*}
    \hat \beta_0'=\hat \beta_0-\log\left\{m/n_1\right\}
\end{align*}
where $n_1=\sum_{i\in[n]}\one\{w_i=1\}$ and $m=\sum_{i\in I}\one\{w_i=1\}$ that is, the number of observations we have queried.

\section{SYNTHETIC EXPERIMENT DETAILS} \label{app:synth_exp}
\subsection{5-dimensional Correctly Specified Case}
For setting we take $Z_1,Z_0$ to have PDFs defined by
\begin{align*}
    p_1(z)=N(z;\mu_1,I)\\
    p_0(z)=N(z;\mu_0,I)
\end{align*} with $\mu_0=0,~\mu_1=(0.1,0.1,0.1,0.1,0.1)^\top$ and where $N(z;\mu,\Sigma)$ is the PDF of a multivariate normal distribution with mean $\mu$ and variance $\Sigma$ evaluated at $z$.
We take $\varphi^1(x)=\half\one\{x^\top\bm a>0\}$, $\varphi^0=0$ where $\bm a=(1,1,1,1,1)^\top$. 

To estimate the density ratio, we use $r_\theta(x)\coloneqq\exp\{\theta^\top x\}$
making the model correctly specified with ''true" parameter $\tilde\theta\coloneqq -\mu_1$. 
for given $n\in\mathbb{N}$, $n$ IID samples are drawn from both $X^0$ and $X^1$. M-KLIEP and CC-KLIEP are then fit using gradient simple gradient descent to obtain parameter estimates. This process is repeated $100$ times for each $n$ and with $n\in\{100,200,300,\dotsc,1500\}$.

\subsection{Mixed Gaussian Neyman-Pearson Case}
For this experiment we take $Z^1$ and $Z^0$ to be 2-dimensional Gaussian mixtures with the following densities
\begin{align*}
    p_1(z)&=\half N\left(z;\begin{pmatrix}0 \\ 0\end{pmatrix},I \right)+
    \half N\left(z;\begin{pmatrix}-1 \\ 4\end{pmatrix},I \right)\\
    p_0(z)&=\half N\left(z;\begin{pmatrix}1 \\ 0\end{pmatrix},I \right)+
    \half N\left(z;\begin{pmatrix}0 \\ 4\end{pmatrix}, I \right)
\end{align*}
Now let $\varphi^1=0.9\one_{Z^\top\bm a>2}(Z)$ where $\bm a=(0,1)^\top$, $\varphi^0=0$ and define $X^1,X^0$ on $\R^2\cup\{\NaN\}$ as before. For given $n\in\mathbb{N}$ we generate $n$ samples from $X^0,X^1$ and use these to estimate $\tilde r$ by both M-KLIEP and CC-KLIEP. An additional $Z^0=X^0$ and these used to produce a classifier via Algorithm \ref{alg:np} with $\alpha=\delta=0.1$. These samples are also used to produce a classifier via Algorithm \ref{alg:np} with $\hat r$ replaced by $r^*$, the true density ratio.
One instance of this procedure with $n=500$ is presented in Figure \ref{fig:np_class_bound}

To estimate the power of the produced classifiers, 1,000,000 samples from $Z^1$ are produced and the proportion classified as $1$ recorded. This process is then repeated for 100 times for each $n\in\{100,200,300,\dotsc,1500\}$. The estimated powers of the classifiers produced from these iterations is then used to estimate the expected power of the procedures alongside 99\% confidence intervals

\section{ADDITIONAL SYNTHETIC EXPERIMENTS} \label{app:synth_exp_add}
\subsection{Naive Bayes Assumption Test}
We wanted to test the effect of our Naive Bayes assumption on our ability to estimate the density ratio. For this experiment we take $Z_0,~Z_1$ to be distributed as follows:
\begin{align*}
    Z_1&\sim N\lrbr{ 
    \begin{pmatrix}  0 \\ 0 \end{pmatrix},~\begin{pmatrix}
        1 & \rho \\ \rho & 1
    \end{pmatrix}}\\
    Z_0&\sim N\lrbr{ 
    \begin{pmatrix}  1 \\ 2 \end{pmatrix},~\begin{pmatrix}
        1 & \rho \\ \rho & 1
    \end{pmatrix}}\\
\end{align*}
for varying $\rho\in[0,1]$. To test our Naive Bayes assumption alongside our M-KLIEP approach we also induced non-uniform missingness separately in the features of the class $0$ data. The missingness functions used were $\varphi^1_1(x)=0.8\cdot\one\lrbrc{x>0}$, $\varphi^1_2(x)=0.8\cdot\one\lrbrc{x<0}$ and then no missingness for class 0 i.e. $\varphi^0_1\equiv\varphi^0_2\equiv 0.$ 

100 samples from $X_0,X_1$ were then generated, a density ratio fit through the Naive Bayes assumption, and then 100 more samples from $X_0$ (equivalent to $Z_0$) were used to produce an NP classifier. Figure \ref{fig:NP_power_naivebayes} shows you the average power of the classifier alongside 95\% C.I.s from 100 montecarlo simulations.

\begin{figure}
    \centering
    \includegraphics{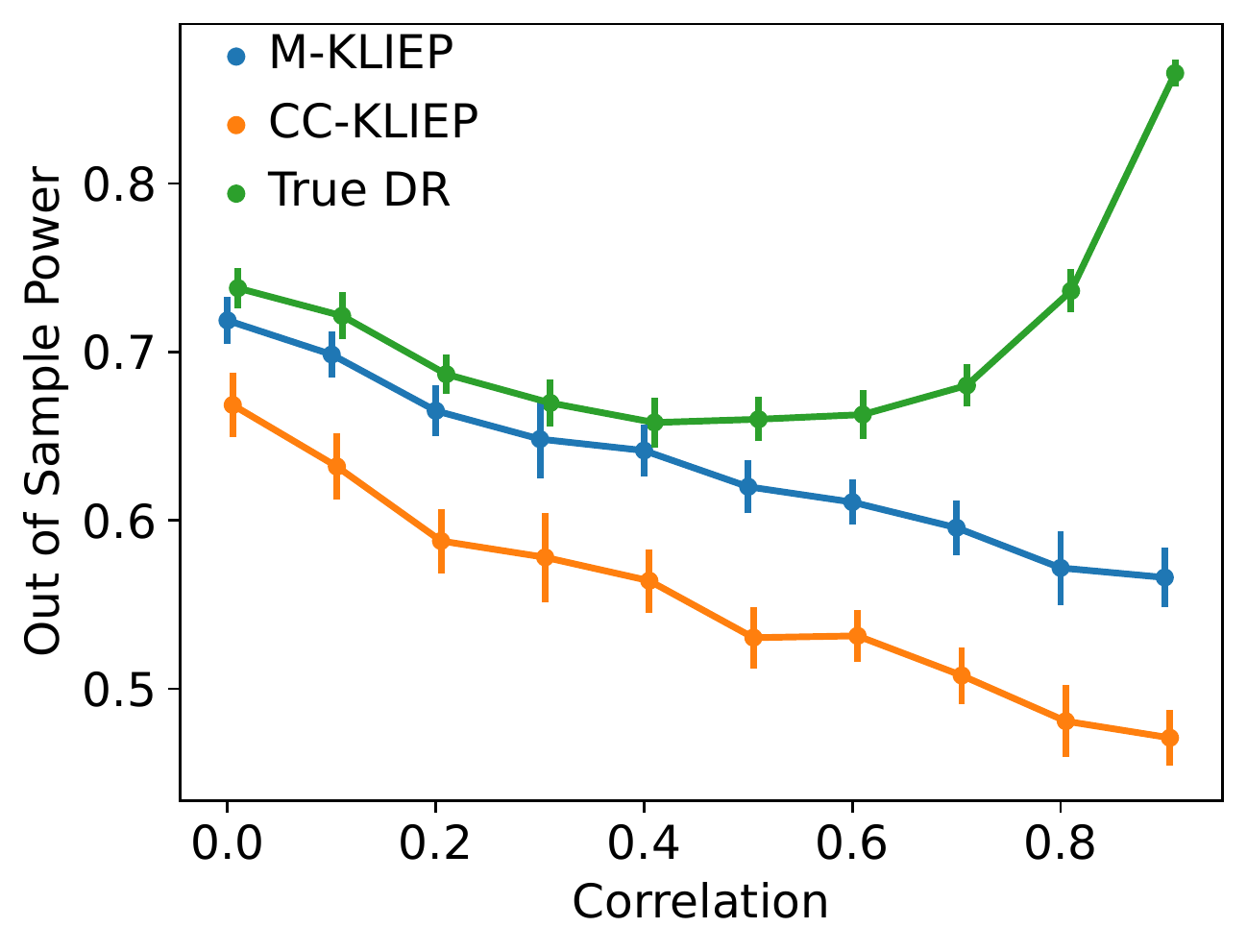}
    \caption{The expected power of the Naive Bayes NP classifier for various levels of correlation between features.}
    \label{fig:NP_power_naivebayes}
\end{figure}

As we can see the Naive Bayes approaches only start seriously deviating from the performance of the true density at around $\rho=0.5$ We also reassuringly see that M-KLIEP consistently outperforms CC-KLIEP.

\subsection{Differing Variance Misspecified Test}
We have tried an additional misspecified case set up as follows
\begin{align*}
    Z_1&\sim N\lrbr{ 
    \begin{pmatrix}  0 \\ 0 \end{pmatrix},~\begin{pmatrix}
        1 & 0 \\ 0 & 1
    \end{pmatrix}}\\
    Z_0&\sim N\lrbr{ 
    \begin{pmatrix}  1 \\ 1 \end{pmatrix},~\begin{pmatrix}
        1 & 0 \\ 0 & 2
    \end{pmatrix}}\\
\end{align*}
We then introduced complete missingness in class 1 with $\varphi^1(x)=0.8\cdot\one\lrbrc{x_1>0}$ and no missingness in class 0 ($\varphi^0\equiv$.) 
We then used the log-linear form for our density ratio estimate with $f(x)=x$ which leads to an incorrectly specified model ($f(x)=(x^\top,x^{2^\top})^\top$ would lead to a correctly specified model.) 

For various values of $n$, we then generate $n$ samples from $X_0,X_1$ and a density ratio was then fit using M-KLIEP and CC-KLIEP. An additional $n$ samples from $X_0$ (equivalent to $Z_0$) were used to produce an NP classifier. 

\begin{figure}
    \centering
    \includegraphics{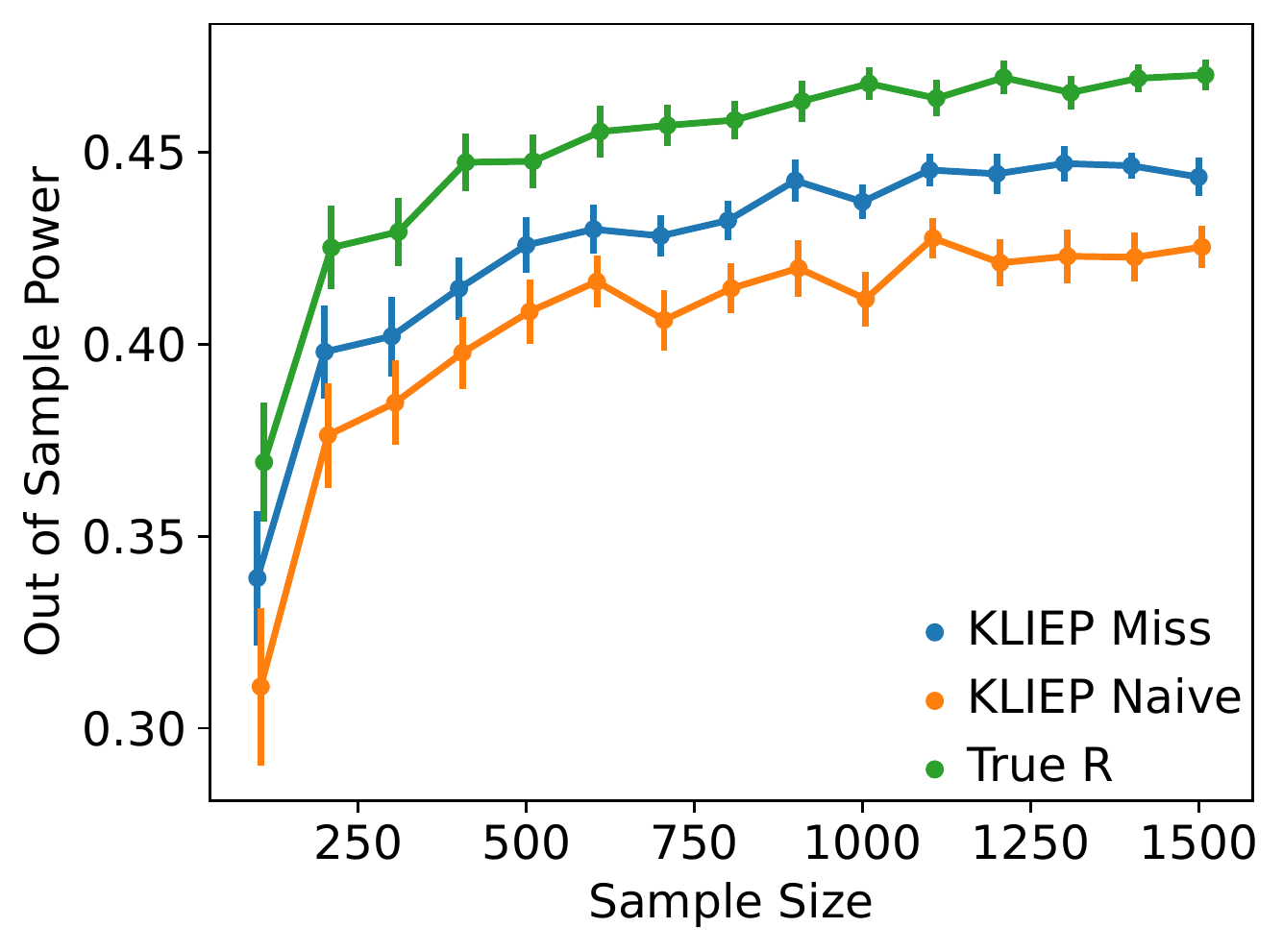}
    \caption{Expected power of NP classifier when used in conjunction with multiple DRE approaches for varying sample size. In this case our model is misspecified.}
    \label{fig:NP_power_diffvar}
\end{figure}

Figure \ref{fig:NP_power_diffvar} shows one simulation in this case alongside the classification boundaries produced. We can see that with this misspecified parametric boundary our model will always approximate the true boundary relatively crudely however we can clearly see that the M-KLIEP boundary is a more sensible approximation than the CC-KLIEP boundary

\begin{figure}
    \centering
    \includegraphics[width=\textwidth]{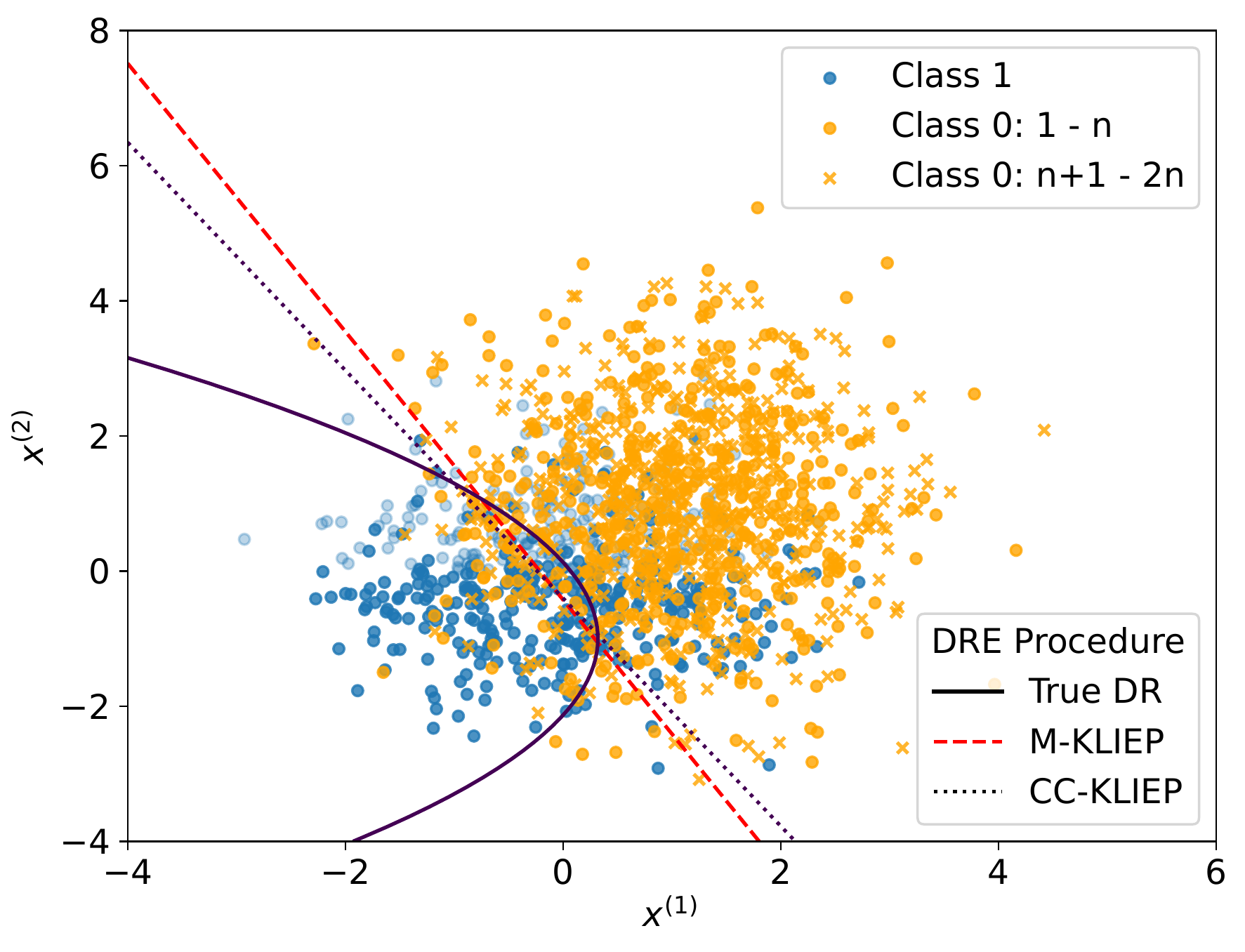}
    \caption{Scatter plot of $\rsamp$ alongside classification boundaries produced from corresponding $\rsampm$ via various procedures. If $X_i=\NaN$ then $Z_i$ is faded out.}
    \label{fig:NP_sample_diffvar}
\end{figure}

Figure \ref{fig:NP_sample_diffvar} shows you the estimated expected power of the classifiers produced alongside 95\% C.I.s from 100 montecarlo simulations (where 1,000,000 additional samples from $Z_1$ were used to estimate the power of the classifier in each simulation.) This definitively shows M-KLIEP has better performance than CC-KLIEP. While M-KLIEP does perform worse than the true classification boundary this decrease is not drastic. 

\subsection{Varying Misspecificaiton Level}
In this experiment we wanted to examine the effect of the level of misspecification on our model. To do this we let $Z_1, Z_0$ have pdfs defined as follows
\begin{align*}
    p_1(z)&=(1-\rho) N\left(z;\begin{pmatrix}0 \\ 0\end{pmatrix},I \right)+
    \rho N\left(z;\begin{pmatrix}2 \\ 0\end{pmatrix},I \right)\\
    p_0(z)&=\half N\left(z;\begin{pmatrix}1 \\ 0\end{pmatrix},I \right)
\end{align*}
for various $\rho\in[0,0.5]$.
We then introduced complete missingness in class 1 with $\varphi^1(x)=0.8\cdot\one\lrbrc{x_1>0}$ and no missingness in class 0 ($\varphi^0\equiv$.) 
We then used the log-linear form for our density ratio estimate with $f(x)=x$ which leads to an incorrectly specified model for $\rho>0$ with larger values of $\rho$ leading to greater levels of misspecification.

We then generate 100 samples from $X_0,X_1$ and a density ratio was fit using M-KLIEP and CC-KLIEP. An additional 100 samples from $X_0$ (equivalent to $Z_0$) were used to produce an NP classifier. Figure \ref{fig:NP_power_varymisspec} shows you the average power of the classifier alongside 95\% C.I.s from 100 montecarlo simulations (where 1,000,000 additional samples from $Z_1$ were used to estimate the power of the classifier in each simulation.)

\begin{figure}
    \centering
    \includegraphics{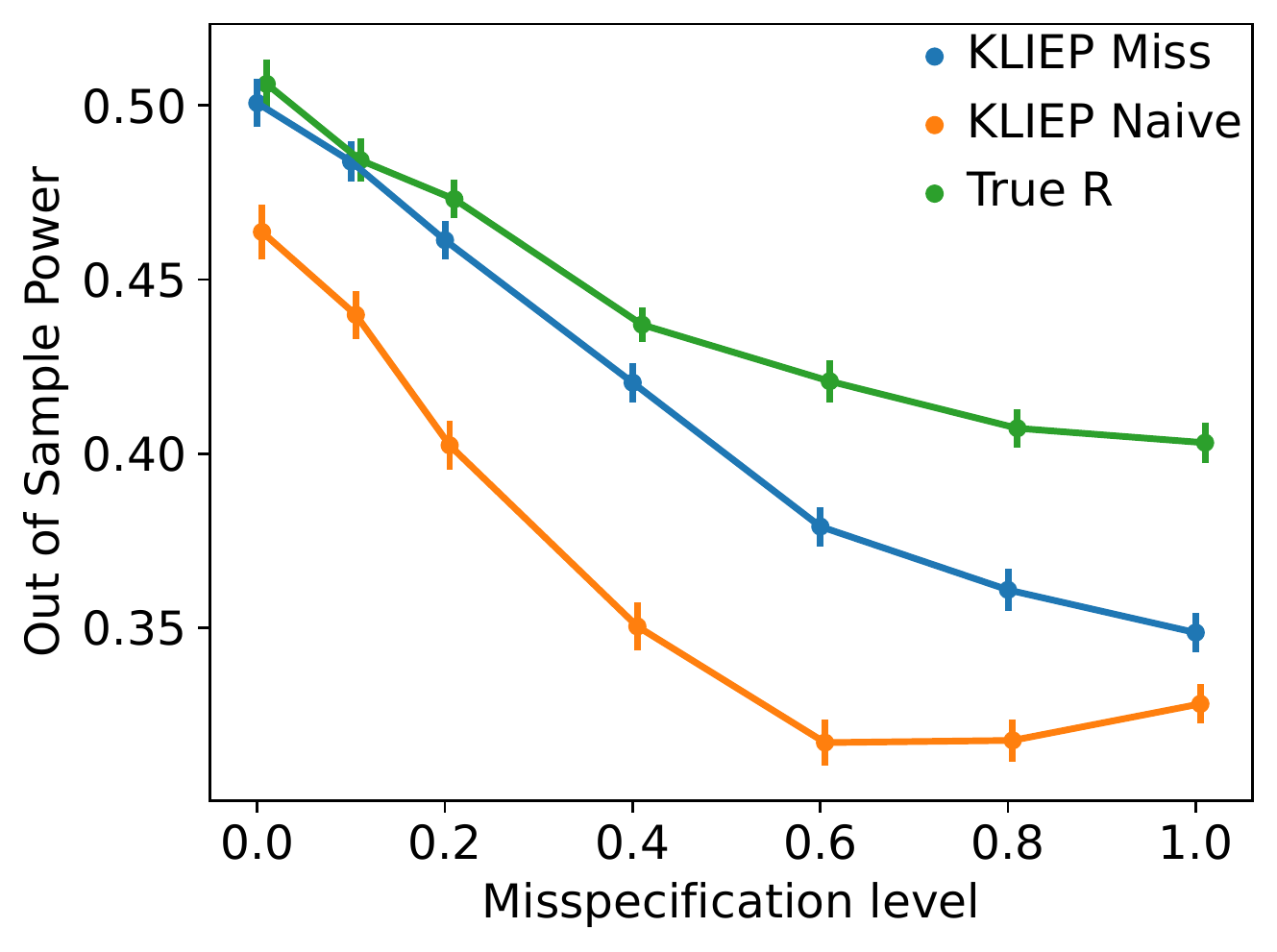}
    \caption{Expected power of NP classifier produced via various DRE techniques for varying levels of misspecification. We take misspecification to be $2*\rho$.}
    \label{fig:NP_power_varymisspec}
\end{figure}

\section{REAL WORLD EXPERIMENT DETAILS}\label{app:real_exp}
Here we give additional information on the experiments performed on the real world data.
We start off by describing each of the three datasets.
\subsection{Datasets}\label{app:real_data}
\subsubsection{CTG}
Foetal Cardiotocograms (CTGs) measure the babies heart function during labour. These are then used by doctors and nurses and then used as a diagnostic tool to assess the foetus' health. In our data-set we have numerical summaries of foetal CTGs which alongside the diagnoses of one of ``normal", ``suspect", or ``pathologic" that doctors ascribed to associated foetuses. We aim to create an automated diagnosis procedure which will determine from the summaries whether the foetus is ``normal" (class $1$) and therefore needs no additional attention/ intervention, or one of ``suspect" or ``pathologic" (Class $0$) to and hence needing further follow up. It is clear that NP classification as a well suited classification procedure to this paradigm as we would like to strictly control the probability of miss-classifying a ``suspect" or ``pathologic" baby as ``normal".

The data is taken from: https://archive.ics.uci.edu/ml/datasets/cardiotocography and was first presented in \cite{AyresdeCampos2000}.

This data contains observations of 2129 foetuses, 1655 from class $1$ and $474$ from class $0$. From the data we select 10 features which we describe below

We split the data as follows, $237$ \& $1555$ samples from Class $0$ \& $1$ to fit $\hat r$, $237$ samples from Class $0$ are used to fit the threshold of the classifier, and $100$ samples from Class $1$ are used to estimate the power of the classifier. When $\varphi$ are learned we query $10$ missing samples from each feature.

\subsubsection{Fire}
For multiple different fires and non-fires, various different environmental readings were taken such as Temperature, Humidity and CO2 concentration. The aim of this is to be able to classify whether or not a fire is present to create a sort of IOT (internet of things) smoke detector to detect the presence of fire. Again NP classification is clearly a good fit as falsely detecting a fire is far less damaging than missing a fire. As such, we take the presence of fire as our error controlled class (Class $0$) as we want to strictly control the probability of not detecting a fire which is present. This data is taken from: https://www.kaggle.com/datasets/deepcontractor/smoke-detection-dataset.

This dataset consists of 62,630 observations of 12 features. The observations consist of 17,873 observations from Class $1$ and 44,757 from Class $0$. 
Before carrying out any of our procedures we perform minimal feature manipulation by trimming extreme values for some of the features as described below:
\begin{itemize}
    \item \textbf{TVOC(ppb), eCO2(ppm)}: Values are trimmed to not exceed $1000$.
    \item \textbf{PM1.0, PM2.5}: Values are trimmed to not exceed $1$.
    \item \textbf{NC0.5, NC2.5}: Values are trimmed to not exceed $5$.
\end{itemize}
\subsubsection{Weather}
A dataset containing various weather reading for different days in Australia collected with the aim of using the previous days weather to predict whether there is a chance on the following day. We want to be able to choose what we mean by a ``chance" in terms of the probability of our classifier correctly predicting rain when it is present therefore motivating the use of NP classification. As such we take the event of rain the following day to be our error controlled class (Class $0$.) This data is taken from: https://www.kaggle.com/datasets/jsphyg/weather-dataset-rattle-package.

This dataset consists of 142,193 observations of 20 features. The observations consist of 110,316 observations from Class $1$ and 31,877 from Class $0$.

Before carrying out any of our procedures we perform minimally feature manipulation by trimming extreme values for some of the features. Factor variables are split into indicator variables for each of the possible outcomes, with one outcome having no indicator to avoid redundancy. Continuous features are trimmed to be within 5 IQR (inter-quartile range) of the median. This data manipulation lead to a total of $62$ dimensions in our final data.

\subsection{Experimental Designs}\label{app:real_exp_design}
For each of the datasets, the same overall experimental design is used. Firstly data is randomly split into train (Class $0$ \& $1$)/calibrate (Class $0$)/test (Class $1$) datasets of sizes given below. Any missing values originally in the data set are then imputed using simple mean imputation. Next the data is normalised to reduce the risk of numerical precision issues within our algorithms. The Class $0$ training data is then corrupted along each dimension using $\varphi_j$ of the form $\varphi_j(z)\coloneqq(1+\exp\{\tau_j(a_{0,j}+a_{1,j}z)\})^{-1}$. This $\varphi_j$ is then estimated by querying points as described in Section \ref{sec:miss_learn}. The density ratio is then fit by M-KLIEP, CC-KLIEP on the training data. Alongside this, we also fit the density ratio using M-KLIEP with the true $\varphi_j$ and KLIEP with the original non-corrupted training data as benchmarks. A classifier is then constructed from these estimated density ratios using the calibration data by the procedure described in Algorithm \ref{alg:np} with given $\alpha,\delta$. The testing data is then used to estimate the power of these classifiers.

Multiple iterations of this process are performed with random train/calibrate/test splits (of fixed size) and random $\tau_j\in\{-1,1\}$ for each iteration. The estimated powers from these iterations are then used to calculate the pseudo expected power and 95\% C.I.s

For experiment 1, we vary $\alpha\in\{0.05,0.1,0.15,0.2,0.25,0.3\}$ with $\delta=0.05$, $a_{j,0}=-\mu_j/\sigma_j,~a_{j,1}=1/\sigma_j$ where $\mu_j$, and $\sigma_j$ are the sample mean and variance of feature $j$.

For experiment 2 this process is repeated with varying $a_{j,1}$ with $\alpha=0.1,\delta=0.05, a_{j,1}=\sigma_j$. The $a_{j,0}$ are chosen to produce the following proportions of missing values for each feature: $\{0.1,0.2,0.3,\dotsc,0.9\}$.

Dataset specific experimental design information can be found below
\subsubsection{Dataset specific experimental design}
\paragraph{CTG}
The size of the data splits are as follows, Class 0 train = $237$, Class 1 train = $1555$, Class $0$ calibrate = $237$, and Class $1$ test $100$.We query $10$ missing samples from each feature when learning $\varphi_j$. For both experiments 1,000 iterations are run.

\paragraph{Fire}
The sample sizes of the data splits are as follows, Class 0 train = $20,000$, Class 1 train = $12,873$, Class $0$ calibrate = $24,757$, and Class $1$ test = $5,000$. We query $50$ missing samples from each feature when learning $\varphi_j$.
For both experiments, 100 iterations are run.

\paragraph{Weather}
The size of the data splits are as follows, Class 0 train = $15,000$, Class 1 train = $100,316$, Class $0$ calibrate = $16,877$, and Class $1$ test = $10,000$. We query $50$ missing samples from each feature when learning $\varphi_j$.
For experiment 1, 100 iterations are run; for experiment 2, 30 iterations are run.

\subsection{Additional Missing Data Approaches}
We also compared our approached to an iterative imputation approach implemented using the \verb|sklearn.impute.IterativeImputer| module in Python using the default estimator \verb|BayesianRidge()|. For fair comparison, this imputed data was then used to perform Naive Bayes DRE to make it comparable to our approach. Results of these additional experiments are given in figures \ref{fig:np_RWE_vary_alpha_mice} \& \ref{fig:np_RWE_varymiss_mice} and described below.

\subsubsection{Additional Results}\label{app:real_exp_add}
Here we present the same results as the paper but with the iterative imputation method included. Figure \ref{fig:np_RWE_vary_alpha_mice} shows the out of sample power of various NP classifiers for varying Type I error we can see that the additional iterative imputation method seems to perform comparably or worse than our method for each data set

Figure \ref{fig:np_RWE_varymiss_mice} shows the out of sample power of various NP classifiers for varying levels of missingness. Here we see that as the level of missingness increases, the iterative imputation method degrades significantly leading it to perform significantly worse than our approach. It does however consistently perform better than the complete case approach

\begin{figure*}
     \centering
     \begin{subfigure}{0.3\textwidth}
        \centering
        \includegraphics[width=1\textwidth]{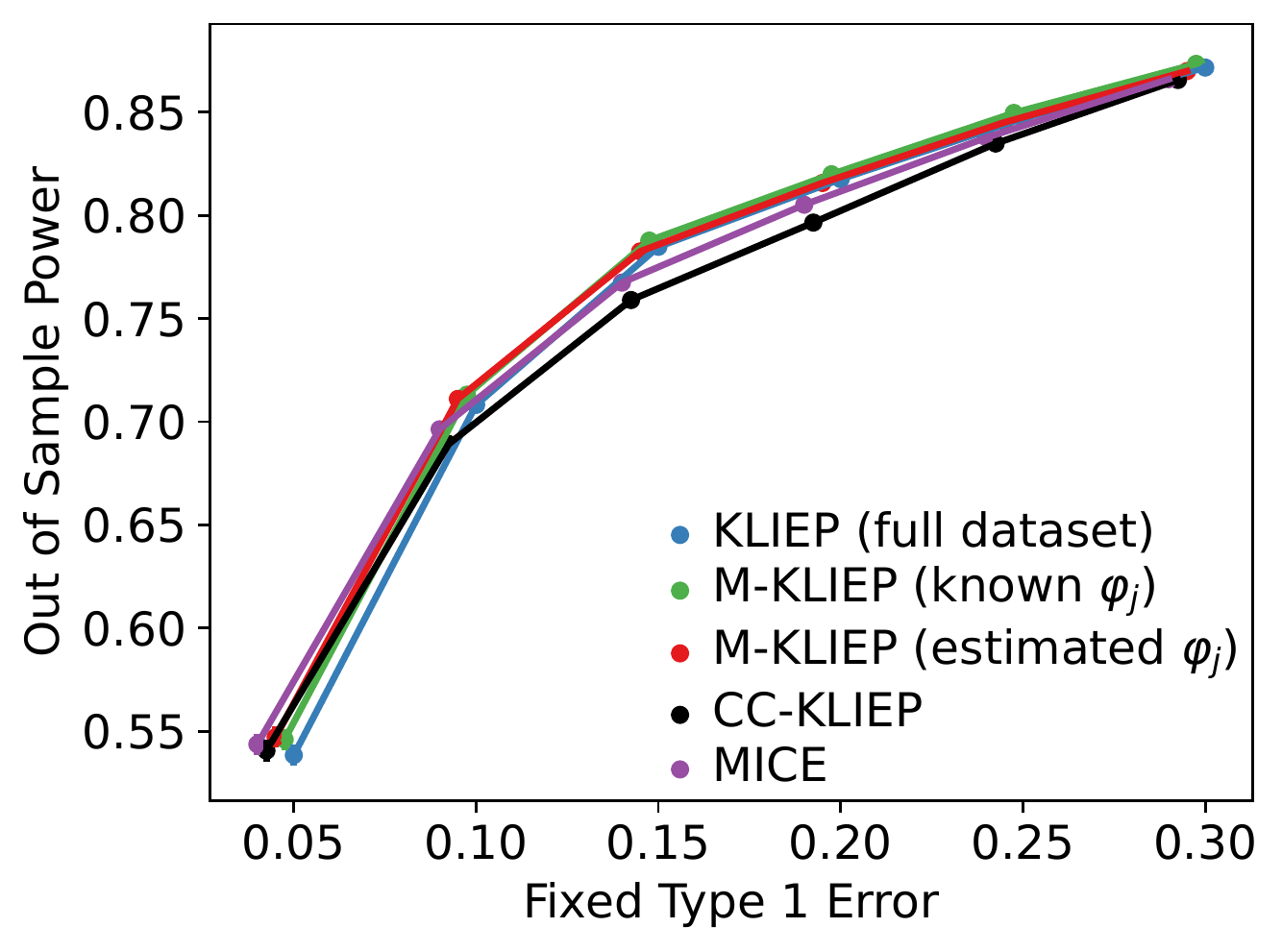}
        \caption{CTG Dataset.}
        \label{fig:np_RWE_CTG_varyalpha_mice}
     \end{subfigure}
     \hfill
        \begin{subfigure}{0.3\textwidth}
        \centering
        \includegraphics[width=1\textwidth]{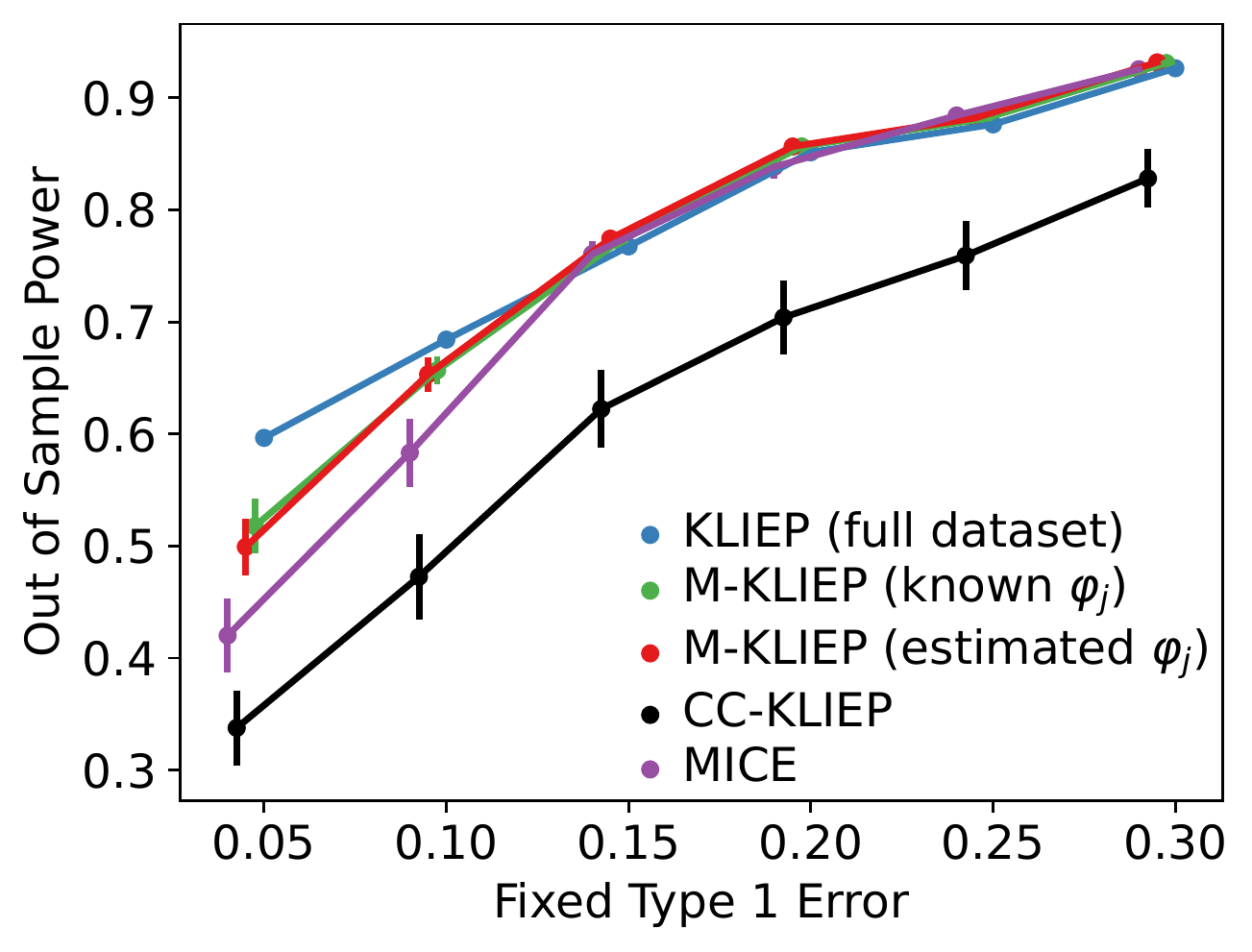}
        \caption{Fire Dataset.}
        \label{fig:np_RWE_smoke_varyalpha_mice}
     \end{subfigure}
     \hfill
     \begin{subfigure}{0.3\textwidth}
        \centering
        \includegraphics[width=1\textwidth]{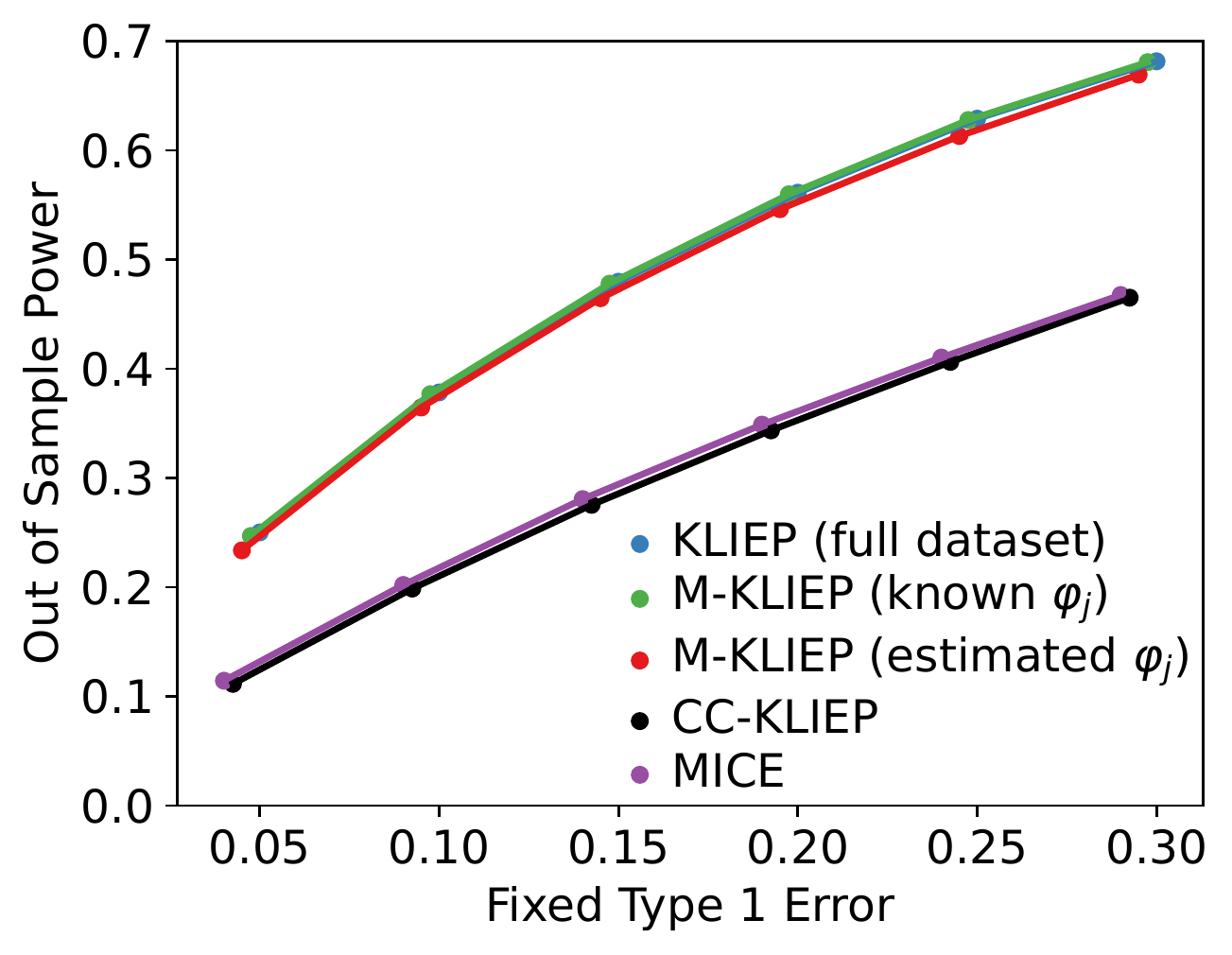}
        \caption{Weather Dataset.}
        \label{fig:np_RWE_weather_varyalpha_mice}
     \end{subfigure}
     \caption{Out of sample power with pseudo 95\% C.I.s for various different target Type I errors with iterative imputation approach included.}
     \label{fig:np_RWE_vary_alpha_mice}
    \hfill
\end{figure*}

\begin{figure*}[t]
     \centering
     \begin{subfigure}[b]{0.3\textwidth}
        \centering
        \includegraphics[width=\textwidth]{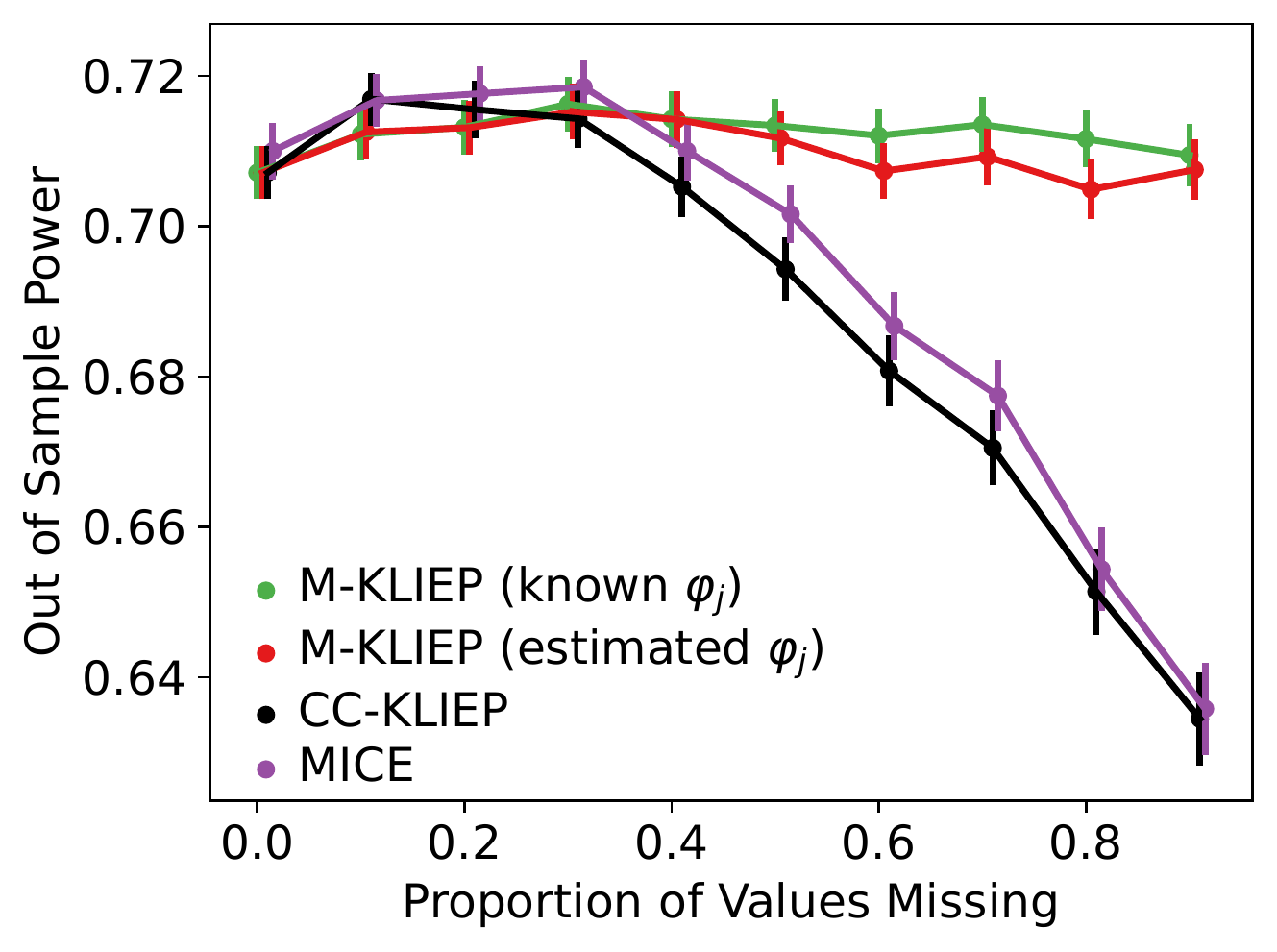}
        \caption{CTG Dataset.}
        \label{fig:np_RWE_CTG_varymiss_mice}
     \end{subfigure}
     \hfill
     \begin{subfigure}[b]{0.3\textwidth}
        \centering
        \includegraphics[width=\textwidth]{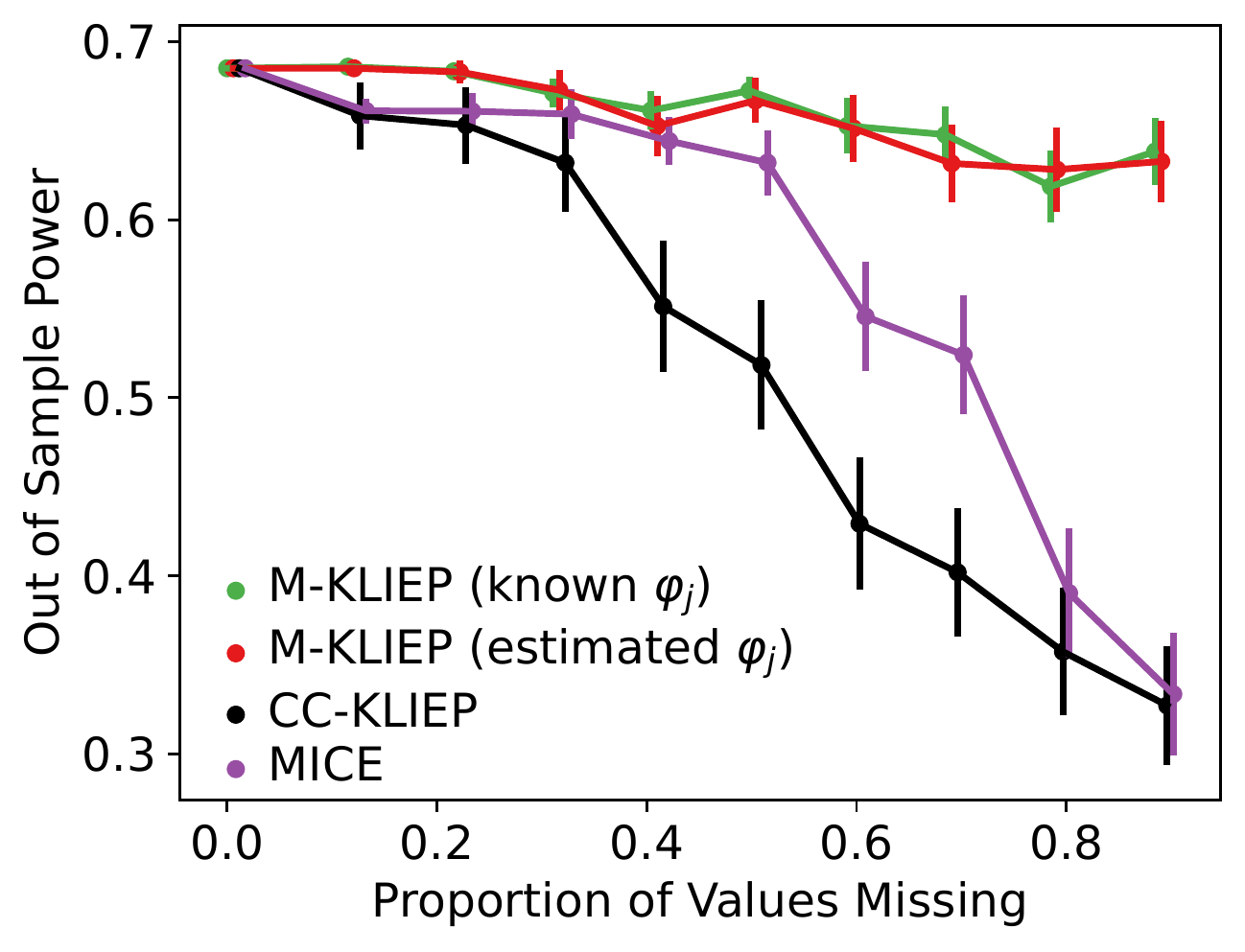}
        \caption{Fire Dataset.}
        \label{fig:np_RWE_smoke_varymiss_mice}
     \end{subfigure}
     \hfill
      \begin{subfigure}[b]{0.3\textwidth}
        \centering
        \includegraphics[width=1\textwidth]{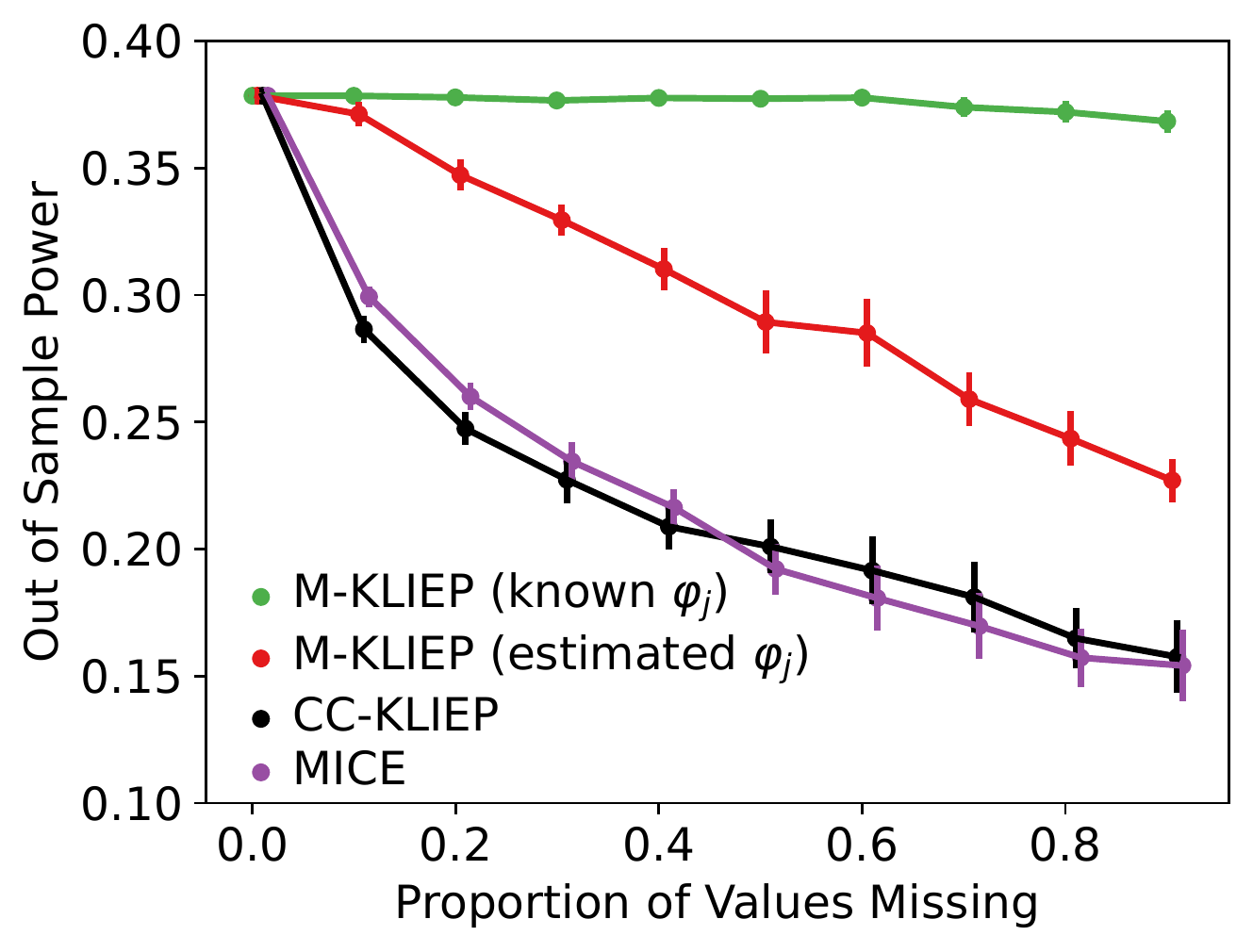}
        \caption{Weather Dataset}
        \label{fig:np_RWE_weather_varymiss_mice}
     \end{subfigure}
     \caption{Out of sample power with pseudo 95\% C.I.s for various $\varphi$ and varying missing proportions with iterative imputation approach included.}
     \label{fig:np_RWE_varymiss_mice}
\end{figure*}

\section{SOCIETAL IMPACT}\label{app:soc_imp}
We now briefly discuss the societal impact of our work. As we have shown, ignoring MNAR data can lead to degradation in performance. Perhaps worse than this however is that, without taking account of our MNAR structure, use of the original NP classification procedure won't guarantee our desired Type I error with high probability. This could lead to classifiers which perform far worse on our error controlled class than estimated. The impact of this could be serious in the case of say disease detection where we believe our classifier to be detecting a far higher proportion of diseased individuals than it truly is.
The impact of MNAR data on analysis has been explored before in more general settings \citep{Rutkowski2011,Padgett2014,Goldberg2021} with the key takeaway always being that ignoring the structure of the missingness can negatively impact your results and lead to false inference.
\end{document}